\newcommand\authormark[1]{\textsuperscript{#1}}
\pgfplotsset{compat=1.18}
\DeclareMathOperator*{\argmin}{arg\,min}
\DeclareMathOperator*{\argmax}{arg\,max}
\NewDocumentCommand{\xy}{O{} O{}}{
    \ifthenelse{\isempty{#1} \AND \isempty{#2}}
    {(x, y)}
    {\ifthenelse{\isempty{#2}}
    {(x^{#1}, y)}
    {\ifthenelse{\equal{#2}{y}}
    {(x^{#1}(#2), y)}
    {\ifthenelse{\equal{#2}{hat}}
    {(\hat{x}^{#1}, y)}
    {(x^{#1} #2, y)}
    }
    }
    }
}
\NewDocumentCommand{\x}{O{} O{}}{
    \ifthenelse{\isempty{#2}}
    {x^{#1}}
    {\ifthenelse{\equal{#2}{hat}}
        {\hat{x}^{#1}}
        {x^{#1}(#2)}
    }
}
\NewDocumentCommand{\omg}{O{} m}{
    \ifthenelse{\isempty{#1}}
    {\Omega #2}
    {\Omega^{#1} #2}
}
\NewDocumentCommand{\Jomg}{O{} m}{
    \ifthenelse{\isempty{#1}}
    {\Omega_1 #2}
    {\Omega_1^{#1} #2}
}
\newcommand{\hy}{\nabla h(y)}
\newcommand{\norm}[1]{\|#1\|}
\newcommand{\lnorm}[1]{\|#1\|}
\newcommand{\opnorm}[1]{\|#1\|_\textup{op}}
\newcommand{\R}{\mathbb{R}}
\newcommand{\tD}{\textbf{D}}
\newcommand{\st}{\text{s.t.}}
\newtheorem{definition}{Definition}
\newtheorem{proposition}{Proposition}
\newtheorem{aproposition}{Proposition}
\newtheorem{remark}{Remark}
  \renewcommand{\theequation}{\thesection.\arabic{proposition}}%
\begin{document}

\title{Enhancing Hypergradients Estimation: A Study of Preconditioning and Reparameterization}

\author{
Zhenzhang Ye\authormark{1, 4, $\dagger$}, Gabriel Peyré\authormark{2, $\ddagger$}, Daniel Cremers\authormark{1, 4, $\dagger$}, Pierre Ablin\authormark{3, $\mathsection$}}

\address{\authormark{1} Technical University of Munich\\
\authormark{2} CNRS, ENS - PSL University\\
\authormark{3} Apple\\
\authormark{4} Munich Center for Machine Learning}

\email{$\dagger$ \{zhenzhang.ye, cremers\}@tum.de, $\ddagger$  gabriel.peyre@ens.fr, $\mathsection$ pierre.ablin@apple.com}

\begin{abstract}
Bilevel optimization aims to optimize an outer objective function that depends on the solution to an inner optimization problem.
It is routinely used in Machine Learning, notably for hyperparameter tuning.
The conventional method to compute the so-called hypergradient of the outer problem is to use the Implicit Function Theorem (IFT).
As a function of the error of the inner problem resolution, we study the error of the IFT method. 
We analyze two strategies to reduce this error: preconditioning the IFT formula and reparameterizing the inner problem.
We give a detailed account of the impact of these two modifications on the error, highlighting the role played by higher-order derivatives of the functionals at stake.
Our theoretical findings explain when super efficiency, namely reaching an error on the hypergradient that depends quadratically on the error on the inner problem, is achievable and compare the two approaches when this is impossible.
Numerical evaluations on hyperparameter tuning for regression problems substantiate our theoretical findings.
\end{abstract}

\section{Introduction}\label{sec:intro}

Bilevel optimization, the problem of minimizing an outer function that depends on the solution to an inner problem, has become a standard tool in many areas of machine learning. 
Typical applications include hyperparameter optimization \citep{franceschi2018bilevel, pedregosa2016hyperparameter, bertrand2020implicit}, meta-learning \citep{finn2017model, rajeswaran2019meta} or neural architecture search \citep{liu2018darts}.
It is also used to train implicit deep learning models like deep equilibrium models~\citep{bai2019deep} or networks with optimization layers~\citep{amos2017optnet,blondel2022efficient}.
Large-scale bi-level problems are usually solved using the implicit function theorem (IFT) to compute the gradient of the outer problem relying only on an estimated solution of the inner problem.
In this paper, we challenge this de facto standard by studying variations around this idea, obtained either by preconditioning the IFT formula or by reparameterization, i.e., doing a change of variables.
The fundamental question we tackle is to understand the impact of these new IFT-type formulas on the outer gradient approximation error.
%%%%
\paragraph{Bilevel Optimization.}
We study the bilevel program
\begin{equation} \label{eq:bilevel_problem}
    \min_{y \in \R^{d_y}}~ h(y) = g \xy[\star][y] \quad\st\quad F\xy[\star][y] = 0 
\end{equation}
where $g: \R^{d_x} \times \R^{d_y} \rightarrow \R$ and $F: \R^{d_x} \times \R^{d_y} \rightarrow \R^{d_x}$ are smooth functions.
The energy function $g$ is called \textit{outer function},
and the root-finding problem $F\xy[\star][y]=0$ in the constraint is called \emph{inner problem}.
When $F(x,y)=\nabla_1 f(x,y)$ is the gradient of a convex scalar \textit{inner function} $f\xy$ w.r.t $x$, the inner problem corresponds to the optimization of $f(\cdot,y)$.
In the following, we assume for simplicity that $\x[\star][y]$ is uniquely defined for each $y$.
If this condition does not hold, we assume a consistent selection strategy for a solution $\x[\star][y]$~\citep{arbel2022non}. 

\paragraph{IFT Formula.}

Optimizing over $y$ in Eq.~\eqref{eq:bilevel_problem} typically requires the gradient of the function $h$ w.r.t. $y$, called \textit{hypergradient} $\hy$.
Assuming the Jacobian $\nabla_1 F \xy[\star][y]$ is invertible, the hypergradient can be computed by using the chain rule and Implicit Function Theorem (IFT) \citep{krantz2002implicit}:
\begin{equation}\label{eq:omg_basic}
    \hy  = \Omega\xy[\star][y]
\end{equation}
\begin{equation} \label{eq:hypergrad_basic}
    \text{ where } 
    \omg{\xy}\coloneqq \nabla_2 g \xy + \Psi \xy \nabla_1 g \xy.
\end{equation}
\begin{equation}
\text{ and }
    \Psi\xy \coloneqq -[\nabla_{2} F\xy]^\top[\nabla_1 F\xy]^{-1}.
    \label{eq:Psi_basic}
\end{equation}
Here, $\Psi\xy[\star][y] = \partial \x[\star][y]$ relies on the IFT to compute the Jacobian of the map $y \mapsto \x[\star][y]$.

\paragraph{Approximate Inner Resolution.}

When the exact root $\x[\star][y]$ is available, the IFT formula~\eqref{eq:omg_basic} computes exactly $\nabla h(y)$. 
In practice, one only has access to an approximate root $\hat x$, for instance, $\hat x = \x[k][y]$ can be obtained by running $k$ steps of an iterative resolution method.
The fundamental question studied in this paper is to analyze the error $\Omega(\hat x,y) - \nabla h(y)$ as a function of the inner problem error $\hat x - \x[\star][y]$. 
Several strategies like warm starting \citep{bai2022deep,thornton2023rethinking} and amortized learning \citep{amos2023tutorial} have been proposed to reduce this error $\hat x - \x[\star][y]$.
Finally, \citet{ramzi2021shine} uses the Hessian approximation learned by a Quasi-Newton method during the inner problem resolution to have a better estimation of the Jacobian $\Psi$.
Still, as long as this error is non-zero, directly using $\Omega(\hat x,y)$ as a proxy for $\nabla h(y)$ leads to an inaccurate estimation of the hypergradient,
which could cause an accumulated error when optimizing the function $h$ \citep{devolder2014first}. 
Even with some simple convex functions, the error $\hat x - \x[\star][y]$ can be amplified on the hypergradient estimation \citep{mehmood2021differentiating}.
We question the direct use of $\Omega$ and propose alternate formulas $\tilde\Omega$ based on preconditioning or reparameterization which might lower the error $\tilde\Omega(\hat x,y) - \nabla h(y)$.

\paragraph{Preconditioning and Reparameterization.}

Many methods accelerate the convergence of $\x[k][y]$ toward $\x[\star][y]$ by preconditioning each step with a linear mapping \citep{golub2013matrix, spielman2004nearly}.
When $F=\nabla_1 f$, the intuition is that this preconditioning should capture the curvature of $f$, hence it should be close to the inverse of the Hessian of $f$, which corresponds to Newton's method.
Finding an efficient preconditioner is a trade-off between the approximation of the Hessian and the ease of inversion.
Another widely used strategy is reparemeterization, i.e., to perform a change of variable $z=\phi(x,y)$ over the inner problem, and perform the inner optimization over the $z$ variable \citep{salimans2016weight, kingma2013auto, moins2023reparameterization}.
From an optimization perspective, reparameterization is closely related to preconditioning, where the preconditioner depends on the Jacobian of $\phi(\cdot,y)$. 

\paragraph*{Contributions and Paper Organization.}

In this paper, we propose a unified study of the IFT-type formula to estimate $\nabla h$.
We study in particular formulas derived by preconditioning and reparameterization:
\begin{itemize}
    \item In Sec.~\ref{sec:eff}, we characterize the error of the hypergradient estimation when using $\xy[k][y]$. 
    The Jacobian of $\Omega$ (Eq.~\eqref{eq:omg_basic}) w.r.t. $x$ determines the error decay of estimation.
    We introduce the concept of super efficiency where the Jacobian is $0$, leading to a hypergradient estimation that decays quadratically with the error $\hat{x} - x^\star$.
    \item In Sec.~\ref{sec:proposed}, we analyze the impact of two strategies, preconditioning and reparameterization, on the hypergradient estimation.
    We describe cases where each strategy achieves super efficiency.
    \item In Sec.~\ref{sec:comparison_between_methods}, we compare these two strategies in different settings.
    Our results hint at the superiority of preconditioning, while reparameterization could be a better choice in certain corner cases. 
    \item Sec.~\ref{sec:numerics} presents numerical experiments illustrating this paper's theory.
\end{itemize}

\paragraph{Related Work.}
Problem~\eqref{eq:bilevel_problem} is usually solved with an iterative algorithm,
like gradient-based algorithms \citep{beck2017first}, Newton's method \citep{boyd2004convex}, and second order Quasi-Newton methods \citep{shanno1970conditioning}.
Two main approaches can be used to compute the gradient of $h$: automatic and implicit differentiation.
Assume that we have access to an iterative strategy that builds the sequence $x^i(y)$ for $i=0\dots k-1$ that converges to $x^\star(y)$, like the power method, and that we use the last iterate $x^k$ as an approximation to $x^\star$.
Automatic differentiation \citep{griewank2008evaluating} computes an approximation of $\nabla h(y)$ as $\partial_y h(x^{k}(y))$, where the differentiation is done through the iterates of the algorithm. 
It does so by leveraging the chain rule repeatedly to the elementary operations and functions in the reverse mode \citep{christianson1994reverse}.
\cite{gilbert1992automatic} analyzes its behavior in the context of the iterative procedure.
It has become popular in several bilevel applications \citep{domke2012generic, franceschi2017forward,mehmood2020automatic, bolte2022automatic}.
This approach requires storing each iterate $x^i(y)$ in memory, which makes it impractical for iterative procedures with thousands of iterations.
Implicit differentiation \citep{bengio2000gradient} overcomes this drawback, using only the last iterate $x^k$.
It is the approach of choice for problems such as deep equilibrium network \citep{bai2019deep}, non-smooth problems \citep{bolte2021nonsmooth}, and hyperparameter tuning \citep{franceschi2018bilevel}.

The preconditioning strategy is common for optimizing a function \citep{becker2012quasi, pock2011diagonal}.
Some typical preconditioners include diagonal preconditioner, incomplete Cholesky factorization \citep{golub2013matrix}, and Laplacian preconditioning \citep{spielman2004nearly}.
Although various works analyze the convergence of the technique \citep{benzi2002preconditioning}, the impact of preconditioning on the hypergradient estimation is unclear.

Many methods can be viewed as reparameterization. \cite{salimans2016weight} introduce a reparameterization of the weight vectors in a neural network to accelerate the training process.
\cite{kingma2013auto} apply the reparameterization trick on variational autoencoders to allow the backpropagation on a random node.
When optimizing a constrained problem, reparameterization is a common strategy to deal with simple constraints \citep{jorge2006numerical}.
In this work, we study whether reparameterization can improve hypergradient estimation.
\begin{table}[h!]
\centering
\begin{tabular}{|c|c|}
    \hline
     Variable & Definition  \\ \hline
     $\norm{\cdot}, \norm{\cdot}_\text{op}$ & Euclidean, operator norm\\
     $I_d$ & $d \times d$ identity matrix \\
     $F_1(\cdot, \cdot)$ & Jacobian of $F$ w.r.t the first variable\\
     $F_{11}(\cdot, \cdot)$ & Jacobian of $F_1$ w.r.t. the first variable\\
     \hline
\end{tabular}
\caption{Notations}
\label{tab:notations}
\end{table}

\paragraph{Notation.}
To simplify notations, we use the subscript to denote differentiation w.r.t. that variable, i.e. we use for short $F_1\xy = \nabla_1 F\xy$, $F_2\xy = \nabla_2 F\xy$.
The second-order derivative of $g\xy$ is denoted by $g_{12}\xy = \nabla^2_{12} g \xy \in \R^{d_x \times d_y}$, similarly for $g_{11}\xy \in \R^{d_x \times d_x}$.
A detailed table of notations is shown in Table~\ref{tab:notations}.

\section{Error Analysis and Super Efficiency} \label{sec:eff}

n this section, we study the structure of the hypergradient estimation problem.
We consider a generic formula $\tilde \Omega(x,y)$, where $\tilde{\Omega}: \R^{d_x} \times \R^{d_y} \rightarrow \R^{d_y}$ is a function that approximates the hypergradient $\nabla h(y)$.
The prototypical example is the IFT formula~\eqref{eq:hypergrad_basic}. 
The most basic requirement on the formula $\tilde{\Omega}$ is that it is \emph{consistent}, which means that it correctly recovers the hypergradient if we set $x=\x[\star][y]$.
\begin{definition}[Consistency]
A formula $\tilde\Omega$ is said to be consistent if it satisfies 
$$
    \forall y, \quad
    \tilde \Omega(\x[\star][y],y) = \nabla h(y).
$$
\end{definition}
By definition, the IFT formula $\Omega$ is consistent.
Assuming that $\tilde\Omega$ is a smooth $C^1$ map, one can control the impact of an approximate computation $\hat x$ of $\x[\star][y]$ by doing a Taylor expansion, as stated in the following proposition. 

\begin{proposition}[Hypergradient approximation]\label{prop:efficiency}
   If $\tilde\Omega$ is $C^1$ and consistent, then for all $\hat{x}$ and $y$
    \begin{align*}
    &\norm{
        \tilde{\Omega}(\hat x,y) - 
        \nabla h(y) 
    }
    \leq 
    C_y \norm{ \x[\star][y]-\hat x }
    + 
    \mathcal{O}( \norm{ \x[\star][y]-\hat x }^2 ), \\
     &   \quad \text{where} \quad  
        C_y \coloneqq C_y(\tilde\Omega) \coloneqq \opnorm{\tilde{\Omega}_1(\x[\star][y], y)}.
    \end{align*}
\end{proposition}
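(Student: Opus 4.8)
The plan is to prove the bound by a first-order Taylor expansion of $\tilde\Omega(\cdot, y)$ around the exact inner solution $\x[\star][y]$, exploiting consistency to cancel the zeroth-order term. Concretely, for fixed $y$, I would write
\begin{equation*}
\tilde\Omega(\hat x, y) = \tilde\Omega(\x[\star][y], y) + \tilde\Omega_1(\x[\star][y], y)\,(\hat x - \x[\star][y]) + R(\hat x, y),
\end{equation*}
where $\tilde\Omega_1 = \nabla_1 \tilde\Omega$ is the Jacobian of $\tilde\Omega$ with respect to its first argument, and $R$ is the Taylor remainder. By the consistency assumption, the first term equals exactly $\nabla h(y)$, so it cancels when we subtract $\nabla h(y)$ from both sides.

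Next I would take norms and apply the triangle inequality to obtain
\begin{equation*}
\norm{\tilde\Omega(\hat x, y) - \nabla h(y)} \le \opnorm{\tilde\Omega_1(\x[\star][y], y)}\,\norm{\hat x - \x[\star][y]} + \norm{R(\hat x, y)}.
\end{equation*}
The first summand is exactly $C_y \norm{\x[\star][y] - \hat x}$ once we recall the definition of the operator norm and that $\norm{\hat x - \x[\star][y]} = \norm{\x[\star][y] - \hat x}$. The remaining task is to control the remainder $R$, which by the $C^1$ (indeed we implicitly need a mild second-order bound, as the statement's $\mathcal{O}(\cdot^2)$ notation suggests) regularity of $\tilde\Omega$ is $\mathcal{O}(\norm{\hat x - \x[\star][y]}^2)$. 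This follows from the standard integral (or Lagrange) form of the Taylor remainder, assuming $\tilde\Omega$ is $C^2$ in a neighborhood of $\x[\star][y]$ so that $\tilde\Omega_1$ is locally Lipschitz in its first argument.

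The main subtlety is that the proposition states only $C^1$ regularity, yet writes an explicit $\mathcal{O}(\norm{\cdot}^2)$ remainder. A pure $C^1$ assumption gives only $R = o(\norm{\hat x - \x[\star][y]})$ via differentiability, so to obtain the quadratic rate one genuinely needs the second derivative (or at least local Lipschitzness of $\tilde\Omega_1$) to be bounded near $\x[\star][y]$. I would therefore interpret the hypothesis as requiring enough smoothness for a second-order Taylor expansion, which is consistent with the smoothness assumptions already placed on $g$ and $F$ in the problem setup. Under that reading the remainder bound is immediate from the multivariate Taylor theorem, and the estimate follows directly.

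The step I expect to be the main obstacle—though it is minor—is being precise about the constant hidden in the $\mathcal{O}$ term and about which variable the expansion treats as fixed: the bound is taken for each fixed $y$, with $\hat x$ the only varying argument, and the Jacobian $\tilde\Omega_1$ is evaluated at the reference point $\x[\star][y]$. Once these conventions are fixed, no further computation is needed beyond invoking the Taylor theorem and the definition of the operator norm, so the proof is essentially a one-line expansion followed by the triangle inequality.
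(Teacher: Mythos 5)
Your proof is correct and follows essentially the same route as the paper's: a first-order Taylor expansion of $\tilde\Omega(\cdot,y)$ at $\x[\star][y]$, cancellation of the zeroth-order term via consistency, and an operator-norm bound on the linear term. Your side remark that a bare $C^1$ hypothesis only yields an $o(\norm{\hat x - \x[\star][y]})$ remainder, so the stated $\mathcal{O}(\norm{\cdot}^2)$ implicitly requires local Lipschitzness of $\tilde\Omega_1$ (or $C^2$ regularity), is accurate and is a point the paper's own proof also passes over silently.
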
 

This simple result exposes the fact that, at first order, controlling the estimation error on the hyper-gradient requires the control of $C_y(\tilde\Omega)$, which is the norm of the Jacobian w.r.t. $x$ of the formula $\tilde\Omega$.
It is the fundamental quantity that needs to be analyzed to understand the efficiency of a formula.
We see that the hypegradient estimation error diminishes with $C_y(\tilde\Omega)$: a good hypergradient estimator should therefore strive to make this constant as small as possible.
Of particular interest is when this term is 0:

\begin{definition}[Super efficiency~\citep{ablin2020super}]
    If $C_y(\tilde\Omega)=0$, the formula $\tilde\Omega$ is said to be \emph{super efficient}.
    Equivalently, according to Prop.~\ref{prop:efficiency}, the estimation error on the hypergradient has a quadratic decay with respect to the inner problem resolution error.
\end{definition}

The following proposition computes $\tilde\Omega_1$ in the case of the IFT formula $\Omega$ (Eq.~\eqref{eq:omg_basic}).

\begin{proposition}[Jacobian of estimation]
Assuming $g$ and $F$ are smooth, one has 
\begin{equation*}
    \Jomg{\xy} = g_{21}{\xy} + \Psi_1 \xy g_1 \xy + \Psi \xy g_{11} \xy, 
\end{equation*}
\begin{equation}
\begin{aligned}
&\text{ with } \Psi_1 \xy = -[F_{12} \xy ]^\top[F_1 \xy[]{}]^{-1}\\
&+ [[F_1 \xy[]{}]^{-1} F_{11} \xy[]{} [F_1 \xy[]{}]^{-1} F_2 \xy[]{}]^\top .
\end{aligned}
\label{eq:JPsi} 
\end{equation}
\end{proposition}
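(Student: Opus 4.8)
The plan is to obtain $\Omega_1$ by differentiating the consistent formula $\Omega = g_2 + \Psi g_1$ with respect to $x$, term by term, and then to expand the only non-elementary piece, namely $\Psi_1 = \nabla_1 \Psi$. Differentiating the first summand $g_2 = \nabla_2 g$ with respect to $x$ produces the mixed second derivative $g_{21}$, which is exactly the first term in the claimed expression. This reduces the whole task to computing the Jacobian w.r.t. $x$ of the vector-valued map $x \mapsto \Psi(x,y)\, g_1(x,y)$.

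For that map I would apply the product rule. Since $\Psi$ is matrix-valued and $g_1$ is vector-valued, the Jacobian splits into two contributions: one in which $\Psi$ is differentiated and contracted against $g_1$, denoted $\Psi_1 g_1$, and one in which $g_1$ is differentiated, giving $\Psi \nabla_1 g_1 = \Psi\, g_{11}$ since $\nabla_1 g_1 = g_{11}$. Combined with the $g_{21}$ term, this already yields the outer structure $\Omega_1 = g_{21} + \Psi_1 g_1 + \Psi\, g_{11}$, so it only remains to identify $\Psi_1$.

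The core computation is therefore $\nabla_1 \Psi$ with $\Psi = -F_2^\top F_1^{-1}$, and I would again use the product rule on this product of two $x$-dependent factors. Differentiating the $F_2^\top$ factor gives a term built from the third-order derivative $F_{12}$, namely $-[F_{12}]^\top F_1^{-1}$. Differentiating the inverse factor $F_1^{-1}$ is handled by the standard matrix-inverse identity $\nabla_1(F_1^{-1}) = -F_1^{-1}(\nabla_1 F_1)F_1^{-1} = -F_1^{-1} F_{11} F_1^{-1}$; carrying the two minus signs through then contributes $+F_2^\top F_1^{-1} F_{11} F_1^{-1}$, which is the transposed expression $[F_1^{-1} F_{11} F_1^{-1} F_2]^\top$ appearing in Eq.~\eqref{eq:JPsi}.

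The step I expect to be the main obstacle is bookkeeping rather than analysis: because $\Psi$ is matrix-valued and $F_{11}, F_{12}$ are third-order tensors, the symbol $\Psi_1$ is itself a third-order object, so the products, the transposes, and the contraction implicit in ``$\Psi_1 g_1$'' only acquire an unambiguous meaning once an index convention is fixed. The cleanest way to avoid this ambiguity is to perform the differentiation directly on the \emph{vector} map $\Psi g_1$, so that the contraction with $g_1$ collapses every intermediate tensor to an ordinary $d_y \times d_x$ matrix, and only then to read off the coefficient of $g_1$ as the definition of $\Psi_1$. With this reading one checks that the transposition orders line up; in particular the form $[F_1^{-1} F_{11} F_1^{-1} F_2]^\top$ coincides with $F_2^\top F_1^{-1} F_{11} F_1^{-1}$ upon using the symmetry of $F_1$ and of the tensor $F_{11}$, as holds e.g. when $F = \nabla_1 f$.
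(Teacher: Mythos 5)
Your proposal is correct and follows essentially the same route as the paper's own proof: differentiate $\Omega = g_2 + \Psi g_1$ with the product rule to get $g_{21} + \Psi_1 g_1 + \Psi g_{11}$, then obtain $\Psi_1$ from $\Psi = -F_2^\top F_1^{-1}$ via the matrix-inverse Jacobian identity $\nabla_1(F_1^{-1}) = -F_1^{-1} F_{11} F_1^{-1}$. Your extra care about the tensor bookkeeping and the transposition conventions (contracting against $g_1$ first, and invoking symmetry of $F_1$ and $F_{11}$ when $F = \nabla_1 f$) is a welcome clarification of a step the paper leaves implicit, but it does not change the argument.
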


\paragraph{Efficiency on the Inner Problem.}

While we phrase all our results directly in terms of the estimation error of the hypergradient $\nabla h$, the core of our analysis aims at controlling the error on the Jacobian $\partial x^\star(y)$ of the inner variable. 
The estimation of this Jacobian of interest in itself beyond just bilevel programming.
The following proposition shows the relation between the error of the estimator $\Omega$ of the hypergradient and the estimator $\Psi$ of the inner problem.

\begin{proposition}[IFT efficiency]\label{prop:outer-inner}
    One has
    $$
        C_y(\Omega) \leq \| g_{21} + \partial x^\star(y) g_{11} \|_{\infty}  + 
             \norm{g_1}_\infty C_y(\Psi)
    $$
    where $\norm{H}_\infty := \sup_{x,y} \norm{H(x,y)}_\text{\upshape op}$.
    Hence, if $g$ is of the form $g(x, y) = ax + m(y)$, then $\Omega$ is super-efficient if $\Psi$ is super-efficient.
    If, furthermore, $F $ is of the form $A x + M(y)$, then $\Omega$ is super-efficient.
\end{proposition}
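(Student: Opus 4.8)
The plan is to read off the Jacobian $\Omega_1$ from the preceding ``Jacobian of estimation'' proposition, evaluate it at $x=x^\star(y)$, and bound its operator norm by the triangle inequality. Recall $\Omega_1(x,y)=g_{21}(x,y)+\Psi_1(x,y)g_1(x,y)+\Psi(x,y)g_{11}(x,y)$. The key observation is that at the exact root the IFT gives $\Psi(x^\star(y),y)=\partial x^\star(y)$, so the last term becomes exactly $\partial x^\star(y)\,g_{11}(x^\star(y),y)$. Grouping this with $g_{21}$ and isolating the $\Psi_1 g_1$ contribution, I would write
\begin{align*}
C_y(\Omega) &= \opnorm{\Omega_1(x^\star(y),y)} \\
&\le \opnorm{g_{21} + \partial x^\star(y)\, g_{11}} + \opnorm{\Psi_1 g_1},
\end{align*}
all terms evaluated at $(x^\star(y),y)$. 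Bounding each evaluation by its supremum over $(x,y)$ yields the first summand $\norm{g_{21}+\partial x^\star(y) g_{11}}_\infty$.

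The step that needs the most care is the second summand, i.e.\ controlling $\opnorm{\Psi_1 g_1}$ by $\norm{g_1}_\infty\,C_y(\Psi)$. Here $\Psi$ is matrix-valued, so $\Psi_1$ is a third-order tensor and the product $\Psi_1 g_1$ contracts $g_1$ against the \emph{column} index of $\Psi$, not against the differentiation index. I would make the definition $C_y(\Psi)=\opnorm{\Psi_1(x^\star(y),y)}$ precise by viewing $\Psi_1$ as the trilinear form $(u,v,w)\mapsto\sum_{j,i,k}u_j v_i w_k\,\partial_{x_k}\Psi_{ji}$ and taking its injective (operator) norm, namely the supremum over unit $u,v,w$. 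With this convention $\opnorm{\Psi_1 g_1}=\sup_{\norm{u}=\norm{w}=1}\Psi_1(u,g_1,w)\le C_y(\Psi)\,\norm{g_1}$, and bounding $\norm{g_1(x^\star(y),y)}\le\norm{g_1}_\infty$ closes the main inequality. This convention generalizes the operator norm used for the Jacobian matrix in Prop.~\ref{prop:efficiency}, to which it reduces in the vector-valued case.

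For the first consequence, suppose $g(x,y)=ax+m(y)$ with $a$ constant. Then $g_1$ is the constant vector $a^\top$, so $g_{11}=0$ and $g_{21}=0$; the first summand vanishes and the bound reduces to $C_y(\Omega)\le\norm{a}\,C_y(\Psi)$, whence $C_y(\Psi)=0$ forces $C_y(\Omega)=0$. For the second consequence, if moreover $F(x,y)=Ax+M(y)$ then $F_1=A$ and $F_2=\nabla M(y)$ are independent of $x$, so $F_{11}=0$ and $F_{12}=0$; substituting into formula~\eqref{eq:JPsi} for $\Psi_1$ gives $\Psi_1\equiv 0$, i.e.\ $\Psi$ is super efficient, and the previous point applies. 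I expect no genuine obstacle beyond pinning down the tensor-norm bookkeeping in the second summand.
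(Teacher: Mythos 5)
Your proposal is correct and follows essentially the same route as the paper's proof: evaluate $\Omega_1$ at $(x^\star(y),y)$ using the Jacobian-of-estimation formula, identify $\Psi(x^\star(y),y)=\partial x^\star(y)$, split by the triangle inequality, and bound $\opnorm{\Psi_1 g_1}\le \opnorm{\Psi_1}\,\norm{g_1}_\infty$ before specializing to the affine cases. Your explicit treatment of the tensor-norm contraction in the $\Psi_1 g_1$ term is a welcome clarification of a step the paper's proof leaves implicit, but it does not change the argument.
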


Prop.~\ref{prop:outer-inner} describes how the non-linearity of the outer problem impacts the efficiency of $\Omega$.
In general, $C_y(\Omega)$ is not $0$; the following section aims to design alternate formulas $\tilde\Omega$ so that $C_y(\tilde{\Omega})$ as small as possible.

\section{PROPOSED STRATEGIES}\label{sec:proposed}
%\section{Proposed Strategies}\label{sec:proposed}

We detail here two classes of formula $\tilde\Omega$ which are consistent by design, and which hopefully improve the efficiency constant $C_y(\tilde\Omega) = \opnorm{ \tilde\Omega_1\xy[\star][y] }$ over the vanilla IFT formula $\Omega$. 
These two strategies operate either by directly preconditioning the IFT formula or by applying the IFT to a reparameterized problem. 

\subsection{Preconditioning} \label{sec:precond}

We consider an invertible matrix $P\xy \in \R^{d_x \times d_x}$ and perform an update on a given $x$:
\begin{equation}
    \tilde x \coloneqq \x - P\xy^{-1}F\xy.
    \label{eq:x_hat}
\end{equation}
With a proper choice of $P\xy$, $ \tilde x $ becomes closer to $\x[\star](y)$ than $x$.
For instance, if $F$ is the gradient of a convex function, a simple choice for $P$ is a large enough symmetric positive matrix.
Therefore, rather than using $x$, we estimate the hypergradient with $ \tilde x$:
\begin{equation}
    \omg[P]{\xy} \coloneqq \omg{(\tilde{x}, y)} = \omg{\xy[][-P\xy^{-1}F\xy]}.
\label{eq:omgP}
\end{equation}

\paragraph{Preconditionning Efficiency.}

We now turn to the analysis of the error of this new estimator, by relating it to the Jacobian $\Omega_1$ of the IFT formula $\Omega$.

\begin{proposition}[Preconditioned estimation]
    $\Omega^P$ is consistent and 
    \begin{equation}\label{eq:JomgP}
        \Jomg[P]{\xy} = \Omega_1(\tilde{x}, y)
        E^P\xy
        % (I_{d_x} - P(y)^{-1}F_1\xy[\star][y]).
    \end{equation}
    where $E^P \xy \coloneqq I_{d_x} - [P\xy]^{-1} F_1 \xy + [P\xy]^{-1} P_1\xy[][] [P\xy]^{-1} F\xy $.
\label{prop:precond_est}
\end{proposition}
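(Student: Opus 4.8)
The plan is to first dispatch consistency, which is immediate, and then obtain the Jacobian formula by the chain rule, reducing everything to differentiating the preconditioning map $x \mapsto \tilde x$ defined in \eqref{eq:x_hat}.

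For consistency, I would observe that at $x = x^\star(y)$ one has $F(x^\star(y),y)=0$, so the update \eqref{eq:x_hat} leaves the point unchanged: $\tilde x = x^\star(y) - P(x^\star(y),y)^{-1}\cdot 0 = x^\star(y)$. Since $\Omega$ is consistent, this gives $\Omega^P(x^\star(y),y) = \Omega(x^\star(y),y) = \nabla h(y)$, which is exactly the consistency of $\Omega^P$.

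For the Jacobian, I would write $\Omega^P(x,y) = \Omega(\tilde x(x,y), y)$ with $\tilde x(x,y) = x - P(x,y)^{-1} F(x,y)$, and apply the chain rule in the first variable, using that the second argument $y$ is held fixed under differentiation in $x$:
$$
\Omega^P_1(x,y) = \Omega_1(\tilde x, y)\, \partial_x \tilde x(x,y).
$$
It then remains only to identify $\partial_x \tilde x = E^P(x,y)$. To do this, I differentiate $\tilde x = x - P^{-1}F$ componentwise in $x$: the identity term contributes $I_{d_x}$, and for the remaining term I use the product rule together with the derivative-of-inverse identity $\partial_{x_j}(P^{-1}) = -P^{-1}(\partial_{x_j}P)P^{-1}$, giving column by column
$$
\partial_{x_j}(P^{-1}F) = P^{-1}(\partial_{x_j} F) - P^{-1}(\partial_{x_j}P)P^{-1}F.
$$
Stacking these columns yields $\partial_x(P^{-1}F) = P^{-1}F_1 - P^{-1} P_1[\cdot] P^{-1}F$, where $P_1[\cdot]P^{-1}F$ denotes the contraction of the third-order tensor $P_1 = \partial_x P$ against the vector $P^{-1}F$. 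Subtracting this from $I_{d_x}$ recovers exactly $E^P(x,y)$, closing the proof.

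The one step requiring care — the main obstacle — is the bookkeeping in differentiating the matrix inverse $P(x,y)^{-1}$ with respect to the vector $x$. Since $P$ is matrix-valued, $P_1$ is a third-order tensor, and one must track precisely which slot of $P_1$ is contracted with the vector $P^{-1}F$ so that the resulting expression is the correct $d_x\times d_x$ matrix; a sloppy treatment here easily produces a transpose or an index error. Everything else reduces to a direct and routine application of the chain and product rules.
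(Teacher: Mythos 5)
Your proposal is correct and follows essentially the same route as the paper's proof: consistency via $F(x^\star(y),y)=0$ forcing $\tilde x = x^\star(y)$, and the Jacobian via the chain rule combined with the derivative-of-inverse identity $\partial(P^{-1}) = -P^{-1}(\partial P)P^{-1}$. Your extra care about how the third-order tensor $P_1$ is contracted against $P^{-1}F$ is a welcome clarification of a point the paper glosses over, but it does not change the argument.
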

\begin{proof}
    We give a sketch proof and the full one is available in the appendix.
    The root of $F$ means that $F\xy[\star][y]=0$.
    Plugging it into Eq.~\eqref{eq:x_hat} leads to $x^\star - [P\xy[\star][y]]^{-1}F\xy[\star][y] = x^\star$, for any $P$.
    With Eq.~\eqref{eq:omgP}, we have $\omg[P]{\xy[\star][y]} = \omg{\xy[\star][y]}$.
    Deriving $\Omega^{P}$ w.r.t $x$ using the chain rule and using $F\xy[\star][y] = 0$, we have $\Omega_1^P\xy[\star][y]$.
\end{proof}

This proposition shows that the improvement brought by preconditioning with respect to the IFT formula is precisely captured by $E^P$.
Importantly, this term nicely simplifies at $\x[\star][y]$, giving $E^P\xy[\star][y]=I_{d_x} - P\xy[\star][y]^{-1} F_1 \xy[\star][y]$.
As long as $\opnorm{E^P\xy[\star][y]} < 1$, preconditioning improves the hypergradient estimation compared to the vanilla IFT.
In the trivial case where $P(x, y) \equiv 0$, $E^P \xy[\star][y] = I_{d_x}$, and $\Jomg[P]{\xy[\star][y]} = \Jomg{\xy[\star][y]}$.
More generally, reducing  $\Jomg[P]$ amounts to finding a good approximation to the (inverse) Hessian, which is studied next.

\paragraph{Newton Preconditioners.}

To achieve super-efficiency, the goal is to design $P$ so that $C_y(\Omega^P)=0$.
This can be achieved by using a Newton-type strategy.

\begin{proposition}[Newton-like preconditioner]\label{prop:ideal_precond}
    For $P(x,y) = F_1(x,y)$, 
    $\Omega^P$ is super-efficient.
\end{proposition}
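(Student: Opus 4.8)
The plan is to read the result directly off the factorization for $\Jomg[P]$ already established in Proposition~\ref{prop:precond_est}, since the choice $P = F_1$ is engineered precisely to annihilate the factor $E^P$ at the root. By the definition of super efficiency it suffices to show that $C_y(\Omega^P) = \opnorm{\Jomg[P]{\xy[\star][y]}} = 0$, i.e.\ that the Jacobian of the estimator vanishes at $x = \x[\star][y]$.

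First I would invoke Proposition~\ref{prop:precond_est}, which gives the factorization $\Jomg[P]{\xy} = \Omega_1(\tilde x, y)\, E^P\xy$. Evaluating at $x = \x[\star][y]$, I would use $F\xy[\star][y] = 0$ twice: once to note that the preconditioned update leaves the root fixed, $\tilde x = \x[\star][y] - [P\xy[\star][y]]^{-1}F\xy[\star][y] = \x[\star][y]$, and once to drop the last summand of $E^P$, so that
\[
    E^P\xy[\star][y] = I_{d_x} - [P\xy[\star][y]]^{-1} F_1\xy[\star][y].
\]
Substituting the Newton choice $P\xy = F_1\xy$ then collapses this to
\[
    E^P\xy[\star][y] = I_{d_x} - [F_1\xy[\star][y]]^{-1} F_1\xy[\star][y] = 0,
\]
whence $\Jomg[P]{\xy[\star][y]} = \Omega_1(\x[\star][y], y)\cdot 0 = 0$ and super efficiency follows.

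There is essentially no computational obstacle left; all the work sits in Proposition~\ref{prop:precond_est}. The one point deserving care is well-definedness: the choice $P = F_1$ is an admissible invertible preconditioner exactly because the standing IFT hypothesis guarantees $F_1\xy[\star][y]$ is invertible, so $[P\xy[\star][y]]^{-1}$ exists and the product above is a genuine $0 \cdot (\text{finite})$ rather than an indeterminate form. I would close by remarking that this matches the Newton-step intuition: preconditioning by $F_1$ realizes one exact Newton step on the inner problem, which is stationary at the root, thereby killing the first-order sensitivity of the hypergradient estimate.
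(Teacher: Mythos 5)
Your proof is correct and follows essentially the same route as the paper's: apply the factorization $\Jomg[P]{\xy} = \Omega_1(\tilde x, y)\,E^P\xy$ from Prop.~\ref{prop:precond_est}, use $F\xy[\star][y]=0$ to reduce $E^P\xy[\star][y]$ to $I_{d_x} - [F_1\xy[\star][y]]^{-1}F_1\xy[\star][y] = 0$, and conclude. Your added remark on the invertibility of $F_1\xy[\star][y]$ guaranteeing that $P=F_1$ is an admissible preconditioner is a nice touch of rigor the paper leaves implicit.
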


\begin{proof}
In this case, we have $E^p\xy[\star][y] = 0$, which implies $\Omega^P_1 (x^\star(y), y)=0$. 
\end{proof}

Note that Eq.~\eqref{eq:x_hat} with the ``ideal'' preconditioner $P\xy$ corresponds to the iterates of Newton's method.
A chief advantage of such a strategy is that it is super-efficient independent of the choice of the inner function $f$ and the outer function $g$.
This however comes at a price, because applying this formula requires computing a Hessian and solving a linear system.
These operations are computationally expensive and even intractable in large-scale problems.
In practice, one usually leverages a cheap approximation of the inverse of $F_1$.
When $F_1$ is diagonally dominant, an efficient solution, the Jacobi preconditioner, only retains the diagonal of $F_1$.

\subsection{Reparameterization}

Another classical way to accelerate optimization algorithms is by \emph{reparameterization}, with a well-chosen change of variable. 
We denote $x=\phi(z, y)$ a \emph{surjective} change of variable, so that the initial problem~\eqref{eq:bilevel_problem} is equivalent (in the case where $F=\nabla_x f$ for concreteness) to
\begin{equation}\label{eq:change-var}
    \min_y h(y) = \tilde g(z(y),y)
    \:\st\: 
    z(y) \coloneqq \argmin_z \tilde f(z,y), 
\end{equation}
where $\tilde g(z(y),y) \coloneqq g(\phi(z(y),y),y)$
and $\tilde f(z,y) \coloneqq f(\phi(z,y),y)$. 
Note that we reformulate the bilevel optimization in Eq.~\eqref{eq:bilevel_problem} with the constraint of a minimization problem for the sake of concreteness.
For the general case of a constraint $F(x,y)=0$, $F(x,y)$ should be changed into $\tilde F(z,y) \coloneqq \phi_1(z, y)^\top F(\phi(z, y), y) = 0$. 
We emphasize that the change of variable should be applied to the inner function $f$.
Although the following discussion only requires $\phi$ to be surjective, we assume $\phi$ to be bijective for the sake of simplicity. Its inverse on $x$ is denoted as $\phi^{-1}$.
A crucial point is that, even though the bilevel program is invariant under this change of variable, the IFT formula $\Omega(x,y)$ is not. In the following, we denote $\Omega^\phi$ the formula obtained by replacing $(g, F)$ by $(\tilde g,\tilde F)$.

\begin{proposition} $\Omega^{\phi}$ is consistent and 
\begin{align}   
\omg[\phi]{\xy} & \coloneqq g_2\xy + \Psi^{\phi}\xy g_1\xy,  \label{eq:omg_phi} \\
    \label{eq:Psi_phi}
    \Psi^{\phi}\xy & \coloneqq \phi_2(z,y)^\top - U^\phi\xy^\top [V^\phi\xy]^{-1}, \\ \label{eq:U_def}
    &U^\phi(x, y) \coloneqq F_2 +F_1 \phi_2+ \phi_1^{-1} (\phi_{21} F), \\
    &V^\phi(x, y) \coloneqq \phi_1^{-\top}(\phi_{11} F) \phi_1^{-1} + F_1.
    \label{eq:V_def}
\end{align}
with $z = \phi^{-1}(x, y)$. Additionally, denoting $x^\star \coloneqq x^\star(y)$:
\begin{align}
   \Jomg[\phi](x^\star,y) &= D(y) + \Psi^{\phi}_1(x^\star,y) g_1(x^\star,y),
    \label{eq:Jomg_phi} \\
    D(y) & := g_{21} (x^\star, y) + [\partial x^\star]^\top g_{11} (x^\star, y), 
    \label{eq:Dxy}\\
    \Psi^{\phi}_1(x^\star,y) &= \Psi_1(x^\star, y) + C^\phi(y), \\
    C^\phi(y) &= W^\phi (y)+ S^\phi (y) + T^\phi(y), 
\end{align}
\begin{equation}
\begin{aligned}
W^\phi(y)  & \coloneqq - F_1^{-1} \phi_1^{-\top} \phi_{12}^\top F_1,\\
S^\phi(y) & \coloneqq [\phi_1^{-\top}(\phi_{11}F_1)\phi_1^{-1}F_1^{-1}\phi_2]^\top,\\
T^\phi(y) & \coloneqq 
[F_1^{-1}(\phi_1^{-\top}(\phi_{11}F_1)\phi_1^{-1})F_1^{-1} F_2]^\top.
\end{aligned}
\label{eq:JPsi_phi}
\end{equation}
\label{prop:rep_est}
\end{proposition}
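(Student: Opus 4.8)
The plan is to view $\Omega^\phi$ as nothing more than the vanilla IFT formula $\Omega$ applied to the reparameterized pair $(\tilde g,\tilde F)$ and then re-expressed in the original variable through $z=\phi^{-1}(x,y)$. Consistency is then immediate and requires no computation: since $\phi(\cdot,y)$ is a bijection, the root of $\tilde F(z,y)=\phi_1(z,y)^\top F(\phi(z,y),y)$ is exactly $z^\star(y)=\phi^{-1}(x^\star,y)$, and the reparameterized bilevel program shares the same outer objective $h$. Because $\Omega$ is consistent, $\Omega^\phi(x^\star,y)=\tilde\Omega(z^\star(y),y)=\nabla h(y)$.

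To obtain the explicit formula \eqref{eq:Psi_phi}, I would differentiate $\tilde g$ and $\tilde F$ by the chain and product rules. Writing everything at $x=\phi(z,y)$, one finds $\tilde g_1=\phi_1^\top g_1$ and $\tilde g_2=g_2+\phi_2^\top g_1$, while differentiating the product $\phi_1^\top F$ and contracting the second-derivative tensor of $\phi$ against $F$ gives $\tilde F_1=\phi_1^\top F_1\phi_1+(\phi_{11}F)$ and $\tilde F_2=\phi_1^\top(F_1\phi_2+F_2)+(\phi_{21}F)$. Factoring $\phi_1$ out on both sides, $\tilde F_1=\phi_1^\top V^\phi\phi_1$ and $\tilde F_2=\phi_1^\top U^\phi$ with $U^\phi,V^\phi$ as in \eqref{eq:U_def}--\eqref{eq:V_def}. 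Then $\tilde\Omega=\tilde g_2+\tilde\Psi\tilde g_1=g_2+(\phi_2^\top+\tilde\Psi\phi_1^\top)g_1$ identifies $\Psi^\phi=\phi_2^\top+\tilde\Psi\phi_1^\top$, and since $\tilde\Psi\phi_1^\top=-\tilde F_2^\top\tilde F_1^{-1}\phi_1^\top=-(U^\phi)^\top[V^\phi]^{-1}$ (using $\tilde F_1^{-1}\phi_1^\top=\phi_1^{-1}[V^\phi]^{-1}$), this is exactly \eqref{eq:Psi_phi} with $\Omega^\phi=g_2+\Psi^\phi g_1$.

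For the Jacobian, differentiating $\Omega^\phi=g_2+\Psi^\phi g_1$ in $x$ gives $\Omega^\phi_1=g_{21}+\Psi^\phi_1 g_1+\Psi^\phi g_{11}$. Since $\Psi^\phi$ does not depend on $g$, consistency of $\Omega^\phi$ for \emph{every} choice of outer function forces $\Psi^\phi(x^\star,y)=[\partial x^\star(y)]^\top$; evaluating at $x^\star$ then merges the first and third terms into $D(y)=g_{21}+[\partial x^\star]^\top g_{11}$, which is \eqref{eq:Jomg_phi}. Everything thus reduces to computing $\Psi^\phi_1(x^\star,y)$.

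The main effort, and the main obstacle, is differentiating $\Psi^\phi=\phi_2^\top-(U^\phi)^\top[V^\phi]^{-1}$ and evaluating at $x^\star$. The key simplification is $F(x^\star,y)=0$: every term of $U^\phi,V^\phi$ carrying an undifferentiated $F$ vanishes, leaving $U^\phi(x^\star)=F_2+F_1\phi_2$ and $V^\phi(x^\star)=F_1$, whereas each place where the product rule differentiates that $F$ leaves a factor $\partial_x F=F_1$. Using $\partial_x[V^\phi]^{-1}=-[V^\phi]^{-1}(\partial_x V^\phi)[V^\phi]^{-1}$ and the chain-rule factor $\partial_x z=\phi_1^{-1}$ for the $\phi$-derivatives, the resulting expression splits into a purely $F$-dependent part that reassembles into $\Psi_1(x^\star,y)$ of \eqref{eq:JPsi}, plus a $\phi$-dependent remainder. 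That remainder organizes into the three terms of \eqref{eq:JPsi_phi}: $W^\phi$ from the cross second-derivative contribution of $U^\phi$, and $S^\phi,T^\phi$ both from the $\phi_1^{-\top}(\phi_{11}F_1)\phi_1^{-1}$ part of $\partial_x V^\phi$, separated according to whether it meets the $F_1\phi_2$ or the $F_2$ block of $U^\phi(x^\star)$. The delicate bookkeeping lies in the transpose and symmetrization conventions for the third-order tensors $\phi_{11},\phi_{12},\phi_{21},F_{11},F_{12}$ and in repeatedly using the symmetry of $F_1$ (the inner Hessian when $F=\nabla_1 f$) to cancel spurious $\phi_2^\top F_1^\top F_1^{-1}$ factors; no individual step is harder than a product-rule differentiation, but keeping the contractions straight is what makes this the crux.
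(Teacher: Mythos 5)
Your proposal is correct and follows essentially the same route as the paper's proof: apply the IFT to the reparameterized pair $(\tilde g,\tilde F)$ with $\tilde F_1=\phi_1^\top V^\phi\phi_1$ and $\tilde F_2=\phi_1^\top U^\phi$, then differentiate $\Psi^\phi=\phi_2^\top-(U^\phi)^\top[V^\phi]^{-1}$ and evaluate at $x^\star$ using $F(x^\star,y)=0$, which is exactly how the paper isolates $\Psi_1$, $W^\phi$, $S^\phi$ and $T^\phi$ (including the cancellation of the $F_1\,\partial_x\phi_2$ and $F_{11}\phi_2$ terms). The only cosmetic difference is that you argue consistency abstractly (same $h$, consistent $\Omega$ for $(\tilde g,\tilde F)$) and deduce $\Psi^\phi(x^\star,y)=[\partial x^\star]^\top$ from consistency for all $g$, whereas the paper verifies both by direct substitution of $F(x^\star,y)=0$ into $U^\phi,V^\phi$.
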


\begin{proof}
    We omitted $\xy$, $(z, y)$ in the right side of  Eq.~\eqref{eq:U_def}-~\eqref{eq:V_def} and $(x^\star, y)$, $(z^\star, y)$ in the right side of Eq.~\eqref{eq:JPsi_phi} for simplicity.
    A more detailed derivation can be found in the Supplementary.
    $g(\phi(z(y), y), y)$ contains three $y$s.
    The derivative w.r.t. the latter two can be computed by using the chain rule directly.
    The derivative w.r.t. the first $y$ requires the IFT.
    The gradient of $\tilde f(z, y)$ w.r.t. $z$ corresponds to a new root-finding problem.
    Viewing it as a function on $(z^\star(y), y)$, we can apply IFT to get the derivative of the map $y \mapsto z^\star(y)$.
    Using this derivative and the chain rule, we have $\Psi^{\phi}\xy$.
    %%%%%
    Optimality of $x^\star(y)$ reads $F(x^\star, y)=0$. Plugging it into Eq.~\eqref{eq:Psi_phi}, we have $U = F_2 + F_1 \phi_2$ and $V = F_1$. Computing $\Psi^{\phi}(x^\star, y)$ by these two equations yields $\Psi^{\phi}(x^\star, y) = \Psi(x^\star, y)$. Therefore, we have $\Omega^{\phi}$ is consistent.
    The Jacobian of $\Omega^{\phi}(x, y)$ w.r.t. $x$ follows the chain rule directly. 
\end{proof}

\begin{remark}[Identity map]
    Despite the complexity of these expressions, a sanity check is to verify that if $\phi(z,y)=z$, then the above formulae simplify to the IFT ones, since $\Omega^\phi=\Omega$.
    Indeed, in that case, $\phi_2(z, y) = \phi_{21} (z, y) = \phi_{11} (z,y) = 0$, so that in Eq.~\eqref{eq:Psi_phi}, one has $\Psi^\phi \xy = \Psi \xy$ for any $\xy$.
\end{remark}

%%%
\paragraph{Super-Efficient Reparameterization in 1-D.}

Using Prop.~\ref{prop:rep_est}, one can seek for a change of variable $\phi$ in order to minimize $C_y(\Omega^\phi)$. In particular, it turns out that achieving super-efficiency is equivalent to solving a high-dimensional second-order partial-differential equation in $\phi$, which, to the best of our knowledge, has no explicit solution.
The following proposition shows this second-order differential equation in a simple scalar case.

\begin{proposition}
    Assume $x,y \in \mathbb{R}$ and 
    that $g(x, y)$ and $F(x, y)$ are linear w.r.t. $x$ but arbitrary on $y$. Then $\Omega^\phi$ is super-efficient if and only if for all $y$, 
    \begin{equation}
        \frac{\phi_{12}}{\phi_1} - \frac{\phi_2 \phi_{11}}{[\phi_1]^2}
        - \frac{F_2 \phi_{11}}{F_1 [\phi_1]^2}
        = \frac{g_{12}}{g_1} -\frac{F_{12}}{F_1}  
    \label{eq:exp_1d}
    \end{equation}
    where $\phi_{12} = \phi_{12}(z^\star(y),y)$ (and similarly for other terms).
    % ommited the dependency in $(x,y)$
\end{proposition}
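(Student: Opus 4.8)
The plan is to specialize the general expression for $\Omega^\phi_1$ in Prop.~\ref{prop:rep_est} to the scalar setting and impose super-efficiency. Since $x,y\in\mathbb{R}$, the condition $C_y(\Omega^\phi)=\opnorm{\Omega^\phi_1(x^\star(y),y)}=0$ is equivalent to the scalar identity $\Omega^\phi_1(x^\star(y),y)=0$ holding for every $y$. By Eq.~\eqref{eq:Jomg_phi} this reads
$$
D(y) + \Psi^\phi_1(x^\star,y)\,g_1(x^\star,y) = 0 ,
$$
so the whole argument reduces to evaluating the three ingredients $D$, $\Psi_1$, and the correction $C^\phi = W^\phi + S^\phi + T^\phi$ under the linearity hypotheses, and then rearranging into the stated form.

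First I would record what linearity in $x$ buys us. Writing $g(x,y)=a(y)x+b(y)$ and $F(x,y)=\alpha(y)x+\beta(y)$ yields $g_{11}\equiv 0$ and $F_{11}\equiv 0$, while $g_1=a(y)$ and $F_1=\alpha(y)$ are independent of $x$. These two vanishing second derivatives are exactly what makes the formula collapse: in Eq.~\eqref{eq:Dxy} the term $[\partial x^\star]^\top g_{11}$ disappears, so $D(y)=g_{21}(x^\star,y)=g_{12}$; and in Eq.~\eqref{eq:JPsi} the Hessian term carrying $F_{11}$ drops, leaving $\Psi_1=-F_{12}/F_1$.

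Next I would evaluate the correction terms of Eq.~\eqref{eq:JPsi_phi}, where in dimension one every transpose is trivial and every matrix inverse is a reciprocal. A direct substitution gives $W^\phi=-\phi_{12}/\phi_1$, then $S^\phi=\phi_{11}\phi_2/[\phi_1]^2$, and finally $T^\phi=\phi_{11}F_2/([\phi_1]^2 F_1)$, where all derivatives of $\phi$ are understood at $z^\star(y)=\phi^{-1}(x^\star(y),y)$ as in the statement. Assembling everything gives
$$
\Omega^\phi_1(x^\star,y) = g_{12} + \Bigl(-\tfrac{F_{12}}{F_1} + W^\phi + S^\phi + T^\phi\Bigr) g_1 .
$$
Setting this to zero, dividing by $g_1=a(y)$ (nonzero, otherwise the outer objective is independent of $x$ and the problem is degenerate), and moving signs around reproduces Eq.~\eqref{eq:exp_1d} term by term. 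Because each manipulation after the division is a reversible equivalence, this simultaneously proves both directions of the ``if and only if.''

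The main obstacle is bookkeeping rather than any conceptual difficulty: one must track that the derivatives of $g$ and $F$ are evaluated at $x^\star$ while those of $\phi$ are evaluated at the corresponding $z^\star$, and verify that the vector/matrix algebra of Prop.~\ref{prop:rep_est} --- written for general dimensions, with specific left/right placements of inverses and transposes --- genuinely reduces to commutative scalar multiplication here with no leftover ordering. Confirming that the $F_{11}=0$ simplification is applied consistently across $\Psi_1$, $S^\phi$, and $T^\phi$ is the one place where a sign or factor could easily slip.
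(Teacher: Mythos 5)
Your proposal is correct and follows essentially the same route as the paper's own proof: specialize Prop.~\ref{prop:rep_est} to the scalar case, use $g_{11}=F_{11}=0$ to reduce $D$, $\Psi_1$, $W^\phi$, $S^\phi$, $T^\phi$ to the same five scalar expressions, set $\Omega^\phi_1(x^\star,y)=0$, and divide by $g_1$. Your explicit remark that $g_1\neq 0$ is needed for the division (and hence for the equivalence) is a minor point the paper leaves implicit.
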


In the (admittedly singular) case where the outer function is affine, the above formulation can be leveraged to construct super-efficient reparameterizations $\phi(z)$ depending only on the variable $z$. 

\begin{proposition}\label{prop:supereffi-1d}
If $g$ is affine on $x$ (see Prop.~\ref{prop:outer-inner}), then super-efficient reparameterizations $\phi(z, y)=\phi_0(z)$ exist and define locally a 2-parameters family of maps.
If furthermore $F$ is linear of $x$, i.e. $F\xy = a(y) x + b$, where $a: \R \rightarrow \R$ and $b \in \R$, these super-efficient maps are of the form $\phi_0(z) = \alpha e^{\beta z}$ for $(\alpha,\beta) \in \R^2$.
\end{proposition}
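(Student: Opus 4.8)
The plan is to specialize the scalar super-efficiency criterion~\eqref{eq:exp_1d} to the ansatz $\phi(z,y)=\phi_0(z)$ and to an affine outer function, and to recognize the resulting constraint as a single second-order ODE in $\phi_0$. First I would substitute $\phi(z,y)=\phi_0(z)$ into~\eqref{eq:exp_1d}: since $\phi_0$ is independent of $y$ we have $\phi_2=0$ and $\phi_{12}=0$, while $\phi_1=\phi_0'$ and $\phi_{11}=\phi_0''$ are evaluated at $z^\star(y)$. Because $g$ is affine in $x$ (as in Prop.~\ref{prop:outer-inner}), $g_1$ is constant and $g_{12}=0$, so the right-hand side of~\eqref{eq:exp_1d} collapses to $-F_{12}/F_1$. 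Cancelling the common factor $-1/F_1$ (nonzero by the IFT invertibility of $F_1$) leaves the pointwise condition, valid for every $y$,
\begin{equation*}
    \phi_0''(z^\star(y)) = \frac{F_{12}(x^\star(y),y)}{F_2(x^\star(y),y)}\,\bigl(\phi_0'(z^\star(y))\bigr)^2,
\end{equation*}
where $x^\star(y)=\phi_0(z^\star(y))$.

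The step I expect to be the main obstacle is that this identity is only prescribed along the curve $y\mapsto z^\star(y)$, and that $z^\star(y)=\phi_0^{-1}(x^\star(y))$ itself involves the unknown $\phi_0$. To break this apparent circularity I would observe that the coefficient $F_{12}/F_2$ restricted to the solution curve can be expressed through $F$ alone: since $x^\star(y)$ solves $F(x^\star,y)=0$, it is locally a diffeomorphism in $y$ under the standard invertibility assumption, so eliminating $y$ via its inverse rewrites $F_{12}/F_2$ as a function $\lambda$ of the single argument $x=x^\star(y)=\phi_0(z)$. Because $\lambda$ depends only on $F$ and not on $\phi_0$, demanding the condition on the whole (local) interval swept by $z^\star$ turns the constraint into the fixed autonomous ODE
\begin{equation*}
    \phi_0''(z)=\lambda\bigl(\phi_0(z)\bigr)\,\bigl(\phi_0'(z)\bigr)^2 .
\end{equation*}
Any solution with $\phi_0'\neq 0$ is an admissible change of variable and, by construction, super-efficient on the relevant range.

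For the first claim I would then invoke Cauchy--Lipschitz: prescribing the initial data $\phi_0(z_0)$ and $\phi_0'(z_0)$ at a base point determines a unique local solution, so the super-efficient maps $\phi_0$ form a locally two-parameter family. For the second claim, with $F(x,y)=a(y)x+b$ I would compute $F_1=a(y)$, $F_{12}=a'(y)$ and $F_2=a'(y)x$, so that on the solution curve the coefficient is simply $\lambda(x)=F_{12}/F_2=1/x$, obtained directly and without needing monotonicity of $x^\star$. The ODE reduces to $\phi_0''\phi_0=(\phi_0')^2$, i.e. $(\log\phi_0')'=\phi_0''/\phi_0'=\phi_0'/\phi_0=(\log\phi_0)'$; integrating once gives $\phi_0'=\beta\phi_0$ and once more $\phi_0(z)=\alpha e^{\beta z}$, the announced two-parameter family. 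I would close by checking that $\alpha e^{\beta z}$ indeed solves the reduced ODE and, for $\alpha,\beta\neq 0$, is a strictly monotone (hence bijective) reparameterization.
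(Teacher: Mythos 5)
Your proposal is correct and follows essentially the same route as the paper: specialize Eq.~\eqref{eq:exp_1d} to $\phi_2=\phi_{12}=g_{12}=0$ to obtain the second-order ODE $\phi_0''=(F_{12}/F_2)(\phi_0')^2$ along the solution curve, invoke Cauchy--Lipschitz for a locally two-parameter family, and solve $\phi_0\,\phi_0''=(\phi_0')^2$ explicitly in the linear case to get $\phi_0(z)=\alpha e^{\beta z}$. Your explicit handling of the elimination of $y$ (rewriting $F_{12}/F_2$ as a function $\lambda$ of $x=\phi_0(z)$ via local invertibility of $y\mapsto x^\star(y)$) makes precise a step the paper performs only implicitly when it writes the right-hand side as $\mathcal{F}(\phi_0,\phi_{0,1})$ before reducing the order.
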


\begin{proof}
    Eq.~\eqref{eq:exp_1d} boils down to a second-order ODE of a scalar variable on $\phi_0(z)$. 
    Under the smoothness hypothesis on $(g,F)$, one can apply the Cauchy-Lipschitz theorem to ensure the local existence of a super-efficient change of variable.
    In the simple case where $F$ is affine, the equation is simple enough to be solved, giving the advertised formula for $\phi$.
\end{proof}

If one drops the constraint that $\phi(z,y)$ only depends on $z$, then another super-efficient reparameterization is $\phi(z, y) = z / a(y)$, which coincides with the preconditioning step.

\subsection{Separable Localized Reparameterizations} \label{sec:sep_rep}

\paragraph{Localized Reparametrizations.}

A difficulty with the computation of a reparameterized formula $\Omega^\phi$ is the necessity to be able to compute the inverse map $\phi^{-1}(\cdot,y)$. To allow for easily inversible maps, we introduce ``localized'' changes of variable of the form $\phi(z,y)=\psi_{x,\bar y}(z,y)$ which depend on extra fixed parameters $(x,\bar y)$. The resulting localized formula is then
$$
    \Omega_{\text{loc}}^\psi(x,y) \coloneqq
    \Omega^{\psi_{x,y}}(x,y).
$$
Note that while it leverages at each $(x,y)$ a change of variable formula, it is not globally an estimator of the form $\Omega^\phi$ for some fixed $\phi$. The following section highlights its versatility in the context of the computation of separable change of variables. Despite being more general, the following proposition shows that the computation of its efficiency constant $C_y$ still boils down to the one of a classical change of variable.

\begin{proposition}
    The estimator $\Omega_{\text{\upshape loc}}^\psi$ is consistent and one has
    $$
        C_y( \Omega_{\text{\upshape loc}}^\psi )
        =
        C_y( \Omega^{\psi_{x^\star(y),y}} ).
    $$
\end{proposition}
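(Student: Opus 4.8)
The plan is to disentangle the two distinct roles that the variable $x$ plays in the definition $\Omega_{\text{loc}}^\psi(x,y) = \Omega^{\psi_{x,y}}(x,y)$: it appears both as the frozen parameter that \emph{indexes} the localized change of variable $\psi_{x,y}$, and as the point at which the resulting reparameterized formula $\Omega^{\psi_{x,y}}$ is \emph{evaluated}. To make this precise I would introduce an auxiliary map that decouples the two roles, writing $\Theta(a, x, y) \coloneqq \Omega^{\psi_{a, y}}(x, y)$, where the first slot $a$ freezes the localization and the second slot $x$ is the evaluation point, so that $\Omega_{\text{loc}}^\psi(x,y) = \Theta(x, x, y)$. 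The frozen parameter $\bar y$ is tied to the current $y$ throughout; since $C_y$ only involves differentiation in the first argument $x$, the value of $\bar y$ plays no role in the computation and can be kept fixed.

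First I would establish consistency. For each fixed $a$, the map $\psi_{a,y}$ is a genuine change of variable, so the proposition on reparameterized estimation guarantees that $\Omega^{\psi_{a,y}}$ is consistent, i.e. $\Omega^{\psi_{a,y}}(x^\star(y), y) = \nabla h(y)$. Specializing to $a = x^\star(y)$ and evaluating the localized estimator at $x = x^\star(y)$ gives $\Omega_{\text{loc}}^\psi(x^\star(y), y) = \Theta(x^\star(y), x^\star(y), y) = \nabla h(y)$, which is exactly consistency.

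Next I would compute the efficiency constant by differentiating $x \mapsto \Theta(x, x, y)$ with the chain rule, obtaining
$$
    (\Omega_{\text{loc}}^\psi)_1(x, y) = \Theta_1(x, x, y) + \Theta_2(x, x, y),
$$
where $\Theta_1$ differentiates through the localization parameter and $\Theta_2 = (\Omega^{\psi_{x,y}})_1$ differentiates through the evaluation point. Evaluated at $x = x^\star(y)$, the second term is precisely $(\Omega^{\psi_{x^\star(y),y}})_1(x^\star(y), y)$, whose operator norm is $C_y(\Omega^{\psi_{x^\star(y),y}})$. The crux is then to show that the first term vanishes, $\Theta_1(x^\star(y), x^\star(y), y) = 0$. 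The key observation is that consistency holds \emph{uniformly in the frozen parameter}: for every $a$ one has $\Theta(a, x^\star(y), y) = \Omega^{\psi_{a,y}}(x^\star(y), y) = \nabla h(y)$, so the map $a \mapsto \Theta(a, x^\star(y), y)$ is constant. Differentiating in the first slot yields $\Theta_1(a, x^\star(y), y) = 0$ for all $a$, and in particular at $a = x^\star(y)$. Substituting back gives $(\Omega_{\text{loc}}^\psi)_1(x^\star(y), y) = (\Omega^{\psi_{x^\star(y),y}})_1(x^\star(y), y)$, and taking operator norms delivers the claimed identity.

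The hard part is conceptual rather than computational: correctly separating the two occurrences of $x$ and recognizing that it is the uniform-in-$a$ consistency of the whole family $\{\psi_{a,y}\}_a$ — not merely consistency of the single map $\psi_{x^\star(y),y}$ — that annihilates the localization derivative $\Theta_1$. Once this is in place, no delicate estimates are needed; the content of the proposition is that, to first order at $x^\star(y)$, freezing the change of variable at the current point costs nothing, so localizing reduces the analysis of $C_y$ to that of an ordinary reparameterized estimator.
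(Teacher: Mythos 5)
Your argument is correct and establishes both claims, but for the Jacobian identity it takes a genuinely different route from the paper. The paper argues term by term: it reuses the explicit expression for $\Omega_1^{\phi}$ derived in the proof of the reparameterization proposition, observes that for $\Omega^{\psi}_{\text{loc}}$ the derivatives of $\phi_2$, $\phi_1^{-1}$, $\phi_{21}$, $\phi_{11}$ with respect to $x$ pick up extra contributions because the localization parameter of $\psi_{x,y}$ also moves with $x$, and then checks that these extra contributions all cancel at $(x^\star(y),y)$ because $F(x^\star(y),y)=0$. You package the same cancellation abstractly: writing $\Omega^{\psi}_{\text{loc}}(x,y)=\Theta(x,x,y)$ with $\Theta(a,x,y)=\Omega^{\psi_{a,y}}(x,y)$, the chain rule splits the Jacobian into a localization derivative $\Theta_1$ and an evaluation derivative $\Theta_2$, and $\Theta_1(\cdot,x^\star(y),y)\equiv 0$ because every member of the family is consistent, so $a\mapsto\Theta(a,x^\star(y),y)$ is constant. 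The underlying fact is the same (consistency of each $\Omega^{\psi_{a,y}}$ is itself a consequence of $F(x^\star(y),y)=0$), but your formulation avoids recomputing any derivative terms, is insensitive to the particular parametric form of $\psi$, and makes transparent why only the value, not the shape, of the family at the root matters. What your version buys is generality and brevity; what the paper's version buys is an explicit inventory of which terms differ away from the root, which it then reuses in the separable case. The one hypothesis you should state explicitly is that $(a,z)\mapsto\psi_{a,y}(z,y)$ is jointly smooth and that each $\psi_{a,y}(\cdot,y)$ for $a$ near $x^\star(y)$ is a valid bijective change of variable, so that $\Theta$ is differentiable in its first slot and the uniform-in-$a$ consistency applies; the paper implicitly assumes the same.
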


\paragraph{Separable Localized Reparameterization.} Even with a localized change of variable, designing efficient change of variable remains difficult, even in the 1D case as shown in Eq.~\eqref{eq:exp_1d}.
To ease this task, inspired by the separation of variables used while solving PDE, we relax this problem by assuming a separable form for $\psi_{x, \bar{y}}(z, y)$:
\begin{equation}
    \psi_{x, \bar{y}}(z, y) = R(x, y) Q(z, \bar{y}) + x,
\label{assp:def_sep_rep}
\end{equation}
where $R(x, y) \in \R^{d_x \times d_x}$ and $Q(z, \bar{y}) \in \R^{d_x}$.
To ensure that $\psi_{x, \bar{y}}(z, y)$ is bijective, we impose that $R(x, y)$ is invertible and $Q(\cdot,\bar y)$ is bijective.
The following proposition explicitly gives the quantities involved in Prop.~\ref{prop:rep_est}.

\begin{proposition}\label{eq:JPsi_phi_sep}
    Let $\phi = \psi_{x, \bar{y}}$ as in Eq.~\eqref{assp:def_sep_rep},    
    one has:
\begin{equation*}
\begin{aligned}
& W^{\phi}(y) = - F_1^{-1} R^{-\top}Q_1^{-\top} Q_1 R_2^{\top}F_1,\\
& S^{\phi}(y) = [R^{-\top}Q_1^{-\top}( Q_{11} R F_1)Q_1^{-1} R^{-1}F_1^{-1}(QR_2)]^\top, \\
& T^{\phi}(y) =[F_1^{-1}(R^{-\top}Q_1^{-\top}(Q_{11} RF_1)Q_1^{-1} R^{-1})F_1^{-1} F_2]^\top,
\end{aligned}
\end{equation*}
for any $(x, \bar{y})$, where $R = R(x, y)$, $Q = Q(z^\star(y), \bar{y})$ with $\psi_{x, \bar{y}}(z^\star(y), y) = x^\star(y)$ (and similarly for other terms).
\end{proposition}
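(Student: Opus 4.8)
The plan is to treat Proposition~\ref{eq:JPsi_phi_sep} as a direct specialization of the general reparameterization formulas of Prop.~\ref{prop:rep_est}: I would substitute the separable ansatz $\phi(z,y) = \psi_{x,\bar y}(z,y) = R(x,y)Q(z,\bar y) + x$ into Eq.~\eqref{eq:JPsi_phi} and simplify. The whole computation is bookkeeping of partial derivatives, so the first and decisive step is to record the derivatives of $\phi$, remembering that $(x,\bar y)$ are \emph{frozen} parameters: since $R$ carries all the $y$-dependence and none of the $z$-dependence, while $Q$ carries all the $z$-dependence and none of the $y$-dependence, one gets $\phi_1 = R\,Q_1$, $\phi_2 = R_2\,Q$, $\phi_{12} = R_2\,Q_1$, and $\phi_{11} = R\,Q_{11}$ (the last two understood as the appropriate third-order tensors). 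The constant shift $+x$ contributes only to $\phi$ itself and drops out of every derivative.

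Next I would feed these into the three blocks of Eq.~\eqref{eq:JPsi_phi}. The inverse-transpose simplifies cleanly as $\phi_1^{-\top} = (RQ_1)^{-\top} = R^{-\top}Q_1^{-\top}$, which is what produces the outer factors $R^{-\top}Q_1^{-\top}$ and $Q_1^{-1}R^{-1}$ seen in $S^\phi$ and $W^\phi$. For the $\phi_2$ occurring at the right end of $S^\phi$ I would use $\phi_2 = R_2 Q$, matching the trailing $(QR_2)$. The factors $\phi_{12}^\top$ in $W^\phi$ and the contraction $(\phi_{11}F_1)$ in $S^\phi, T^\phi$ are where the $R,\,Q_{11}$ substitution must be carried through carefully: contracting the output index of $\phi_{11} = RQ_{11}$ against $F_1$ factors the $R$ out of the Hessian-of-$Q$ block and yields $(Q_{11}RF_1)$, and similarly $\phi_{12}^\top F_1$ collapses to $Q_1 R_2^\top F_1$. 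Because $W^\phi, S^\phi, T^\phi$ are evaluated at $x^\star(y)$ where $F(x^\star,y)=0$, every term in which $\phi_{11}$ or $\phi_{12}$ multiplies the \emph{undifferentiated} $F$ vanishes, leaving only the pieces carrying $F_1$ (arising from differentiating $F$ once); this is exactly why $F_1$, and not $F$, appears.

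The main obstacle I anticipate is notational rather than conceptual: keeping the index conventions of the third-order objects $\phi_{11}$ and $\phi_{12}$ consistent through their transposes and contractions, so that the substitution $\phi_{11} = RQ_{11}$ lands the factor $R$ on the correct side (and with the correct transpose) inside $(Q_{11}RF_1)$, and likewise controls $Q_1$ versus $Q_1^\top$ in $W^\phi$. A secondary point to state explicitly is the localization convention: the fixed parameters $(x,\bar y)$ are arbitrary, and one sets $z^\star(y)$ to be the preimage $\psi_{x,\bar y}(z^\star(y),y)=x^\star(y)$, so that every block is evaluated at $z=z^\star(y)$, $x^\star=x^\star(y)$ with $R = R(x,y)$ and $Q = Q(z^\star(y),\bar y)$. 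Once this is fixed the claimed identities follow termwise, and no new analytic input beyond Prop.~\ref{prop:rep_est} and the chain rule is required.
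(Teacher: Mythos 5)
Your proposal matches the paper's proof essentially verbatim: the paper likewise computes $\phi_1 = RQ_1$, $\phi_2 = QR_2$, $\phi_{12} = Q_1R_2$, $\phi_{11} = Q_{11}R$ with $(x,\bar y)$ frozen, and substitutes these into the general formulas of Prop.~\ref{prop:rep_est}. The only differences are the (acknowledged) ordering conventions for the third-order tensor contractions, which you resolve to the same final expressions.
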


This shows that the search for super-efficient separable parameterization leads to a simpler differential equation on $(R,Q)$ than the original equation on $\phi$.
However, it is still non-linear and there exists no analytical solution in general.
We now show that this efficiency is however always improving as one approaches a Newton-type class of changes of variables and the outer problem is close to being affine.

\paragraph{Efficiency of $\Omega^{\psi}_{\normalfont{\text{loc}}}$.}
Mimicking the preconditioner studied in Prop.~\ref{prop:ideal_precond}, we now show that a local change of variable based on a Newton step defines a super-efficient reparameterization. Note however that, in contrast to Prop.~\ref{prop:ideal_precond}, in this case, super-efficiency is only possible in the case of an affine outer problem (so it only improves the efficiency of the inner problem estimation).

\begin{proposition}[Newton-like reparameterization]
    We assume $g$ is of the form $g(x, y) = ax + m(y)$(see Prop.~\ref{prop:outer-inner} for a discussion).
    Let $\psi_{x, \bar{y}}$ as in Eq.~\eqref{assp:def_sep_rep} and let $F(x, y)$ be bijective on $x$ for all $y$. 
    For $R(x, y) = [F_1(x, y)]^{-1}$, $Q(z, \bar{y}) = -F(z, \bar{y})$, $\Omega^{\psi}_{\text{\upshape loc}}$ is super efficiency.
\label{prop:ideal_sep_rep}
\end{proposition}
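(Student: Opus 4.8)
The plan is to verify super-efficiency by showing that the correction term $C^\phi(y)$ produced by this particular localized separable reparameterization exactly cancels the residual Jacobian that would otherwise survive, leaving $C_y(\Omega^\psi_{\text{loc}})=0$. Because $g$ is affine in $x$, Prop.~\ref{prop:outer-inner} tells us that the outer contribution $D(y)+\Psi^\phi_1(x^\star,y)g_1(x^\star,y)$ reduces to $g_1$ times the inner-Jacobian error; concretely, $g_{11}=0$ and $g_{21}=0$, so $D(y)=0$ and $\Jomg[\phi](x^\star,y)=\Psi^\phi_1(x^\star,y)\,g_1(x^\star,y)$. Hence it suffices to prove $\Psi^\phi_1(x^\star,y)=\Psi_1(x^\star,y)+C^\phi(y)=0$, i.e. that $C^\phi(y)=-\Psi_1(x^\star,y)$, for the prescribed choice $R=F_1^{-1}$ and $Q=-F$.

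First I would invoke Prop.~\ref{prop:rep_est} together with the separable formulas of Prop.~\ref{eq:JPsi_phi_sep} and substitute $R(x,y)=[F_1(x,y)]^{-1}$ and $Q(z,\bar y)=-F(z,\bar y)$. With this choice one gets $Q_1=-F_1$ and $Q_{11}=-F_{11}$ (evaluated at the relevant points), and $R_2$ is the derivative of $F_1^{-1}$ in $y$, which by the usual matrix-inverse identity equals $-F_1^{-1}F_{12}F_1^{-1}$ (up to the transpose conventions used in Eq.~\eqref{eq:JPsi}). The next step is to plug these into the three pieces $W^\phi,S^\phi,T^\phi$ and simplify. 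The term $W^\phi(y)=-F_1^{-1}R^{-\top}Q_1^{-\top}Q_1R_2^\top F_1$ collapses because $R^{-\top}=F_1^\top$ and $Q_1^{-\top}Q_1=I$, so $W^\phi$ reduces to $-F_1^{-1}F_1^\top R_2^\top F_1$; inserting $R_2^\top$ recovers exactly the first summand $[F_{12}]^\top[F_1]^{-1}$-type term appearing in $\Psi_1$ from Eq.~\eqref{eq:JPsi}. Then I would show that $S^\phi+T^\phi$ together reproduce the second (Hessian) summand of $\Psi_1$ with the opposite sign, using that $Q_{11}=-F_{11}$ furnishes the $F_{11}$ factor and that the surrounding $R,Q_1,F_1$ factors telescope via $R^{-1}=F_1$, $Q_1^{-1}=-F_1^{-1}$.

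The comparison point is Prop.~\ref{prop:ideal_precond}: the Newton preconditioner $P=F_1$ is super-efficient for any $g$, whereas here the change of variable only annihilates the inner-Jacobian error, so the affineness of $g$ is genuinely needed to kill the $g_{11},g_{21}$ contributions that the reparameterization does not touch. I would therefore emphasize at the end that the affine hypothesis on $g$ is exactly what makes $D(y)=0$, and that without it only $\Psi^\phi_1(x^\star,y)=0$ would hold, leaving a nonzero $C_y$ through the $D(y)$ term — mirroring the ``super-efficient if $\Psi$ is super-efficient'' statement of Prop.~\ref{prop:outer-inner}.

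The main obstacle I expect is the bookkeeping of transposes and evaluation points in matching $C^\phi(y)$ against $-\Psi_1(x^\star,y)$ term by term: the separable formulas carry nested inverses and transposes ($R^{-\top}Q_1^{-\top}(\cdots)Q_1^{-1}R^{-1}$), and one must be careful that the matrix products commute only after the specific substitutions $R^{-1}=F_1$, $Q_1=-F_1$ are made and that $Q_{11}$, $R_2$ are evaluated at the correct arguments $z^\star(y)$ and $y$. Verifying that the three telescoping cancellations are exact — rather than exact only up to symmetry of the Hessian — is where the care is needed, but no deep idea beyond the identity $\partial_y(F_1^{-1})=-F_1^{-1}F_{12}F_1^{-1}$ and the Newton substitution should be required.
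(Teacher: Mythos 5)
Your overall route coincides with the paper's: use the affineness of $g$ to get $D(y)=0$, reduce super-efficiency to showing $\Psi_1^{\psi_{x^\star(y),y}}(x^\star(y),y)=\Psi_1(x^\star,y)+W^\phi(y)+S^\phi(y)+T^\phi(y)=0$, and substitute $R=[F_1]^{-1}$, $Q=-F$ into the separable formulas of Prop.~\ref{eq:JPsi_phi_sep}. There is, however, a concrete missing step without which your cancellation pattern does not close. All the $Q$-derivatives in $W^\phi, S^\phi, T^\phi$ are evaluated at $z^\star(y)$, not at $x^\star(y)$, and $S^\phi(y)$ carries the factor $Q(z^\star(y),\bar y)R_2$. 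The paper first establishes, from $\psi_{x^\star,y}(z^\star,y)=x^\star$, that $R(x^\star,y)Q(z^\star,y)=0$, hence $F(z^\star(y),y)=0$, and then uses the bijectivity of $F(\cdot,y)$ --- the hypothesis your argument never invokes --- to conclude $z^\star(y)=F^{-1}(0,y)=x^\star(y)$. This has two consequences you skip: (i) $Q(z^\star(y),y)=0$, so $S^\phi(y)=0$ outright, and it is $T^\phi$ \emph{alone} (not ``$S^\phi+T^\phi$ together'') that cancels the Hessian summand $([F_1]^{-1}F_{11}[F_1]^{-1}F_2)^\top$ of $\Psi_1$; and (ii) $Q_1(z^\star,y)=-F_1(x^\star,y)$ and $Q_{11}(z^\star,y)=-F_{11}(x^\star,y)$ are evaluated at the same point as the terms of $\Psi_1(x^\star,y)$ they must cancel. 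If $z^\star\neq x^\star$ were not ruled out, $S^\phi$ would contribute a spurious term proportional to $F(z^\star(y),y)$ and the telescoping you describe would fail. The remainder of your plan --- the collapse of $W^\phi$ to $[F_{12}]^\top[F_1]^{-1}$ via $R^{-\top}Q_1^{-\top}=-I$ (the paper also quietly uses symmetry of $F_1(x^\star,y)$ here), the role of $D(y)=0$, and the contrast with Prop.~\ref{prop:ideal_precond} --- matches the paper's proof.
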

\begin{proof}
    The super efficiency requires to examine $C_y(\Omega^\psi_{\text{loc}})$.
    Because $(x, \bar{y})$ are extra parameters not variables, $\Omega^\psi_{\text{loc}}$ has the same formula as in Prop.~\ref{prop:rep_est}.
    While computing its Jacobian, using the fact of $F(x^\star(y), y) =0$, we can still get the same results as in Prop.~\ref{eq:JPsi_phi_sep}.
    Plugging the special choice of $R$ and $Q$, we get $\Psi_1^{\psi_{x^\star(y), y}}(x^\star(y), y)=0$.
    $C_y(\Omega^\psi_{\text{loc}}) $ is thus $0$ because $g$ is affine.
\end{proof}

\begin{remark}
The above proposition reveals that the reparameterization is never equivalent to the preconditioning strategy.
In the case of a quadratic inner problem, the Newton-like preconditioner in Prop.~\ref{prop:ideal_precond} achieves super-efficiency with any $g$, while the Newton-like reparameterization requires an assumption on $g$.
\end{remark}

This proposition should be understood as providing heuristic guidance to design $R$ and $Q$, which we exploit in the numerical simulations in Sec.~\ref{sec:numerics}.
The following more general proposition states that for generic localized separable changes of variable, the efficiency constant is upper-bounded by error terms measuring how far the change of variable is different from the above-mentioned Newton-type reparameterization.

\begin{proposition}
    Let $x^\star\coloneqq x^\star(y)$ and $\psi_{x^\star, y}$ be defined as in Eq.~\eqref{assp:def_sep_rep} and $z^\star$ satisfy $\psi_{x^\star, y}(z^\star, y) = x^\star$.
    Denoting $E^Q(y) \coloneqq Q(z^\star, y)  + F(x^\star, y)$,
    $E^{Q_1}(y) \coloneqq Q_1(z^\star, y) + F_1(x^\star, y)$,
    $E^{Q_{11}}(y) \coloneqq Q_{11}(z^\star, y) + F_{11}(x^\star, y)$,
    $E^R(y) \coloneqq R(x^\star, y) - [F_1(x^\star, y)]^{-1}$, 
    $E^{R_2}(y) \coloneqq R_2(x^\star, y) + [F_1(x^\star, y)]^{-1}F_{21}(x^\star, y)[F_1(x^\star, y)]^{-1}$.
    We have that 
    $
        C_y(\Psi^{\psi}_{\text{\upshape loc}}) = \mathcal{O}(\norm{E^Q}_{\text{\upshape op}},\norm{E^{Q_1}}_{\text{\upshape op}}, \norm{E^{Q_{11}}}_{\text{\upshape op}}, \norm{E^R}_{\text{\upshape op}}, \norm{E^{R_2}}_{\text{\upshape op}}).
    $
\end{proposition}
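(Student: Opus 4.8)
The plan is to recognize $\Psi_1^{\psi_{x^\star,y}}(x^\star,y)$ as a smooth matrix-valued function of a handful of arguments that vanishes exactly at the Newton reparameterization of Prop.~\ref{prop:ideal_sep_rep}, and then to apply a first-order Taylor / mean-value bound. First I would use the localization reduction (the consistency-and-efficiency proposition stated just above for $\Omega^\psi_{\text{loc}}$, whose proof transfers verbatim to the inner estimator) to write $C_y(\Psi^\psi_{\text{loc}}) = \opnorm{\Psi_1^{\psi_{x^\star,y}}(x^\star,y)}$. This turns a statement about the localized estimator into one about the genuine reparameterized estimator $\Omega^{\psi_{x^\star,y}}$, so that the explicit formulas of Prop.~\ref{prop:rep_est} and the separable expressions of Prop.~\ref{eq:JPsi_phi_sep} apply with $\phi=\psi_{x^\star,y}$.

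Next I would expand $\Psi_1^\phi(x^\star,y) = \Psi_1(x^\star,y) + W^\phi(y) + S^\phi(y) + T^\phi(y)$ via Eq.~\eqref{eq:JPsi} and Prop.~\ref{eq:JPsi_phi_sep}, evaluated at the localized point. The constraint $\psi_{x^\star,y}(z^\star,y)=x^\star$ forces $Q(z^\star,y)=0$ (since $R$ is invertible), so $E^Q(y)=Q(z^\star,y)+F(x^\star,y)=0$ identically and the factor $(QR_2)$ kills $S^\phi$; the $E^Q$ term in the claimed bound is thus vacuous. The key structural observation is that, after this substitution, $\Psi_1^\phi(x^\star,y)$ is a fixed rational expression in the evaluated quantities $R := R(x^\star,y)$, $Q_1 := Q_1(z^\star,y)$, $Q_{11} := Q_{11}(z^\star,y)$, $R_2 := R_2(x^\star,y)$ (and $Q$), together with the frozen derivative tensors $F_1, F_2, F_{11}, F_{12}, F_{21}$ of $F$ at $x^\star$; it is polynomial in these, in $F_1^{-1}$, and in the inverses $R^{-1}, Q_1^{-1}$, all of which are bounded and invertible in a neighborhood of the Newton values by the smoothness and nondegeneracy of $F$. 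Denoting this map $\mathcal{M}(R, Q_1, Q_{11}, R_2)$, the computation inside the proof of Prop.~\ref{prop:ideal_sep_rep} shows that $\mathcal{M}$ vanishes at the Newton arguments $R=F_1^{-1}$, $Q_1=-F_1$, $Q_{11}=-F_{11}$, $R_2 = -F_1^{-1}F_{21}F_1^{-1}$ (the last being $\partial_y(F_1^{-1})$, using $F_{12}=F_{21}$).

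Since $E^R, E^{Q_1}, E^{Q_{11}}, E^{R_2}$ are by definition precisely the deviations of these four arguments from their Newton values, the result would follow from the mean-value inequality: $\mathcal{M}$ is $C^1$ with Jacobian uniformly bounded on a neighborhood of the Newton point (by submultiplicativity of the operator norm and the uniform bounds on the $F$-derivatives), hence $\opnorm{\mathcal{M}} \leq L \max(\opnorm{E^R}, \opnorm{E^{Q_1}}, \opnorm{E^{Q_{11}}}, \opnorm{E^{R_2}})$, which is the asserted $\mathcal{O}(\cdot)$ estimate. The main obstacle is the bookkeeping that certifies the vanishing at the Newton point and the absence of any zeroth-order residual: because $W^\phi, S^\phi, T^\phi$ contain $R^{-1}$ and $Q_1^{-1}$, substituting $R = F_1^{-1}+E^R$ and $Q_1 = -F_1 + E^{Q_1}$ requires resolvent expansions $R^{-1} = F_1 - F_1 E^R F_1 + \mathcal{O}(\opnorm{E^R}^2)$ and $Q_1^{-1} = -F_1^{-1} - F_1^{-1}E^{Q_1}F_1^{-1} + \mathcal{O}(\opnorm{E^{Q_1}}^2)$, after which one must check that the constant terms of $W^\phi + T^\phi$ cancel $\Psi_1(x^\star,y)$ exactly as in Prop.~\ref{prop:ideal_sep_rep}. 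I would organize this by grouping the constant parts of $W^\phi$ and $T^\phi$ into the two IFT contributions $-F_{12}^\top F_1^{-1}$ and $[F_1^{-1}F_{11}F_1^{-1}F_2]^\top$ of $\Psi_1$, so that the residual is visibly a sum of products of bounded matrices each carrying exactly one error factor; the abstract smooth-map argument is precisely what spares me from writing these products out in full.
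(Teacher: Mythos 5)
Your proposal is correct and follows essentially the same route as the paper's proof: reduce the localized constant to $\opnorm{\Psi_1^{\psi_{x^\star,y}}(x^\star,y)}$, decompose it as $\Psi_1+W^\phi+S^\phi+T^\phi$ via the separable formulas, substitute the error decompositions of $R,Q_1,Q_{11},R_2$ around their Newton values, and use the cancellation of the zeroth-order terms (established in the Newton-like reparameterization proposition) so that only error-carrying products remain. The only differences are presentational — the paper writes out the resolvent expansion explicitly for $W^\phi$ and asserts the same for $S^\phi,T^\phi$, whereas you package the bookkeeping as a mean-value bound on a smooth matrix map — plus your (correct) extra observation that $Q(z^\star,y)=0$ forces $E^Q\equiv 0$ and $S^\phi\equiv 0$, making that term of the bound vacuous.
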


Note that this proposition shows $(R,Q)$ should be close to Newton-type functionals, but their higher-order derivatives should also be close, which highlights the difficulty of designing efficient changes of variables. 

\section{Comparison Between Methods}
\label{sec:comparison_between_methods}

As it should be clear from the above analysis, super-efficiency is out of reach for cases of practical interest. %
We thus focus on comparing the efficiency constant $C_y$ of the different strategies. 
We first focus our analysis on the preconditioning $\Omega^P$ with $P(x,y)$  and the reparameterization $\Omega^\phi$ with an arbitrary smooth and bijective functional $\phi(z, y)$.
The following proposition, which is leveraged in special cases below, gives a general formula to compare the efficiency constants.

\begin{proposition}[Comparison of the two methods]
    Let $\phi$ be smooth and bijective, one has
    \begin{align}
        [C_y(\Omega^\phi)]^2 - [C_y(\Omega^P)]^2 &\geq \langle U_+ v_P, U_- v_P \rangle \\
        [C_y(\Omega^P)]^2 - [C_y(\Omega^\phi)]^2 &\geq \langle V_+ v_\phi, V_- v_\phi \rangle
        \label{eq:comp} 
    \end{align}
    where, for $E^P$ as in Eq.~\eqref{eq:JomgP}, $\Psi_1^\phi$ and $D$ as in Eq.~\eqref{eq:Dxy}, 
    \begin{align*}
        U_\pm &\coloneqq D \pm D E^P + \Psi_1^{\phi}g_1 \pm \Psi_1 g_1 E^P, \\
        V_\pm &\coloneqq D E^P \pm D + \Psi_1 g_1 E^P \pm \Psi_1^{\phi}g_1, \\ 
        v_{\omega} &\coloneqq \argmax_{\lnorm{u}=1} \lnorm{\Jomg[\omega]{\xy[\star][y]}u}
        \text{ for } \omega \in \{P,\phi\}, 
    \end{align*}
The dependencies on $\xy[\star][y]$ on the right side are omitted.
\label{prop:comp_two}
\end{proposition}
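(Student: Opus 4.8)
The plan is to reduce the whole statement to the two Jacobians evaluated at the exact root, and then to combine one difference-of-squares identity with the variational definition of the operator norm. First I would set $A_\phi \coloneqq \Jomg[\phi]{\xy[\star][y]}$ and $A_P \coloneqq \Jomg[P]{\xy[\star][y]}$, so that $C_y(\Omega^\phi) = \opnorm{A_\phi}$ and $C_y(\Omega^P) = \opnorm{A_P}$ by definition of the efficiency constant, with $v_\phi$ and $v_P$ the unit vectors realising these operator norms. From Prop.~\ref{prop:rep_est}, Eq.~\eqref{eq:Jomg_phi}, I read off $A_\phi = D + \Psi_1^\phi g_1$. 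For the preconditioner, Prop.~\ref{prop:precond_est} gives $\Jomg[P]{\xy} = \Omega_1(\tilde x, y)\,E^P\xy$; evaluating at $x = \x[\star][y]$, the root condition $F\xy[\star][y] = 0$ forces $\tilde x = \x[\star][y]$, so $A_P = \Omega_1(\x[\star][y], y)\,E^P = (D + \Psi_1 g_1)\,E^P$, where the last equality uses that specialising Eq.~\eqref{eq:Jomg_phi} to the identity map (the Remark following Prop.~\ref{prop:rep_est}) identifies $\Omega_1(\x[\star][y], y) = D + \Psi_1 g_1$, sidestepping any transpose bookkeeping in $D$.

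The key algebraic step is to recognise that the four matrices in the statement are precisely the sum and difference of these two Jacobians. Expanding the definitions of $U_\pm$ and $V_\pm$ and regrouping the summands $D$, $D E^P$, $\Psi_1^\phi g_1$ and $\Psi_1 g_1 E^P$ gives $U_+ = A_\phi + A_P$, $U_- = A_\phi - A_P$, and symmetrically $V_+ = A_P + A_\phi$, $V_- = A_P - A_\phi$.

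For the first inequality I would use that $v_P$ is optimal for $A_P$, hence $\opnorm{A_P}^2 = \norm{A_P v_P}^2$, while $v_P$ is merely admissible for $A_\phi$, hence $\opnorm{A_\phi}^2 \geq \norm{A_\phi v_P}^2$. Subtracting and then applying the real-inner-product identity $\norm{a}^2 - \norm{b}^2 = \ip{a - b}{a + b}$ with $a = A_\phi v_P$ and $b = A_P v_P$ yields
\begin{align*}
[C_y(\Omega^\phi)]^2 - [C_y(\Omega^P)]^2
&\geq \norm{A_\phi v_P}^2 - \norm{A_P v_P}^2 \\
&= \ip{(A_\phi - A_P) v_P}{(A_\phi + A_P) v_P} = \ip{U_+ v_P}{U_- v_P},
\end{align*}
using the identification of the previous paragraph and symmetry of the Euclidean inner product. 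The second inequality follows by the same argument with the roles of $\phi$ and $P$ exchanged: optimality of $v_\phi$ for $A_\phi$ together with the identity applied to $a = A_P v_\phi$, $b = A_\phi v_\phi$ produces $\ip{V_+ v_\phi}{V_- v_\phi}$.

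No step is a genuine obstacle; the proof is essentially a polarization identity dressed up by the variational bound on the operator norm. The only point demanding care is the bookkeeping in the first two paragraphs---confirming that the factor $E^P$ multiplies the \emph{entire} Jacobian $\Omega_1(\x[\star][y], y) = D + \Psi_1 g_1$ rather than only part of it, and verifying that $U_\pm$ and $V_\pm$ match $A_\phi \pm A_P$ and $A_P \pm A_\phi$ term by term once the omitted base point $\xy[\star][y]$ is restored in every factor.
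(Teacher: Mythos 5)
Your proposal is correct and follows essentially the same route as the paper's proof: identify $\Jomg[\phi]{\xy[\star][y]} = D + \Psi_1^\phi g_1$ and $\Jomg[P]{\xy[\star][y]} = (D + \Psi_1 g_1)E^P$, bound the operator norm from below by evaluating at the maximizer of the other Jacobian, and conclude with the difference-of-squares identity $\norm{a}^2 - \norm{b}^2 = \ip{a-b}{a+b}$. The term-by-term identification $U_\pm = A_\phi \pm A_P$ and $V_\pm = A_P \pm A_\phi$ is exactly the regrouping the paper performs implicitly.
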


%%%
\paragraph{Asymptotic Analysis of Preconditioning Superiority.}

As a consequence, the following proposition shows that as the preconditioning quality $\delta$ is small enough, then the preconditioning formula $\Omega^P$ is necessarily better than the reparameterization one $\Omega^\phi$.
This should not be surprising since when $\delta=0$, the preconditioning is super-efficient, while the reparameterization is not necessarily super-efficient.
\begin{proposition}
For $\delta \coloneqq \norm{P\xy[\star][y] - F_1\xy[\star][y]^{-1}}_\infty$, we have
$$
    [C_y(\Omega^\phi)]^2 - [C_y(\Omega^P)]^2 \geq \norm{(D+ \Psi^{\phi}_1g_1) v_P}^2 + o(\delta),
$$
all the terms being evaluated at $\xy[\star][y]$ and $v_P$ as defined in Prop.~\ref{prop:comp_two}.
\label{prop:delta_eq}
\end{proposition}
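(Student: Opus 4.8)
The plan is to specialize the first inequality of Prop.~\ref{prop:comp_two} and track the $\delta$-dependence through the only term that carries it, namely $E^P$. Write $A \coloneqq D + \Psi_1^{\phi} g_1$ and $B \coloneqq (D + \Psi_1 g_1) E^P$, all quantities evaluated at $\xy[\star][y]$. Regrouping the definitions of $U_\pm$ in Prop.~\ref{prop:comp_two} gives exactly $U_+ = A + B$ and $U_- = A - B$, so the right-hand side of the comparison inequality is $\langle U_+ v_P, U_- v_P\rangle = \langle (A+B)v_P, (A-B)v_P\rangle$.

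First I would expand this inner product. Since it is a real inner product, the two cross terms $-\langle A v_P, B v_P\rangle$ and $+\langle B v_P, A v_P\rangle$ coincide and cancel, leaving the clean identity
\begin{equation*}
    \langle U_+ v_P, U_- v_P\rangle = \norm{A v_P}^2 - \norm{B v_P}^2 = \norm{(D + \Psi_1^{\phi} g_1) v_P}^2 - \norm{(D + \Psi_1 g_1) E^P\xy[\star][y] \, v_P}^2.
\end{equation*}
This already isolates the advertised leading term $\norm{(D + \Psi_1^{\phi} g_1) v_P}^2$, and it remains only to show that the subtracted term is $o(\delta)$.

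Next I would estimate $E^P$ at the root. By the simplification recorded after Prop.~\ref{prop:precond_est}, one has $E^P\xy[\star][y] = I_{d_x} - P^{-1} F_1$, which vanishes precisely at the Newton preconditioner; a first-order expansion in the $\delta$-small deviation of $P$ from that ideal choice (the surrounding factors $P^{-1}$ and $F_1$ being bounded near the root) gives $\opnorm{E^P\xy[\star][y]} = \mathcal{O}(\delta)$. Consequently, since $\norm{v_P}=1$,
\begin{equation*}
    \norm{(D + \Psi_1 g_1) E^P\xy[\star][y] \, v_P}^2 \leq \opnorm{D + \Psi_1 g_1}^2 \, \opnorm{E^P\xy[\star][y]}^2 = \mathcal{O}(\delta^2) = o(\delta).
\end{equation*}
Combining this bound with the identity above and the comparison inequality of Prop.~\ref{prop:comp_two} yields $[C_y(\Omega^\phi)]^2 - [C_y(\Omega^P)]^2 \geq \norm{(D + \Psi_1^{\phi} g_1) v_P}^2 + o(\delta)$: subtracting a nonnegative $\mathcal{O}(\delta^2)$ quantity only weakens the lower bound and is absorbed into the $o(\delta)$ remainder.

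The computation is short once Prop.~\ref{prop:comp_two} is available, so the point requiring care is the order estimate $\opnorm{E^P\xy[\star][y]} = \mathcal{O}(\delta)$: one must check that the preconditioner deviation enters $E^P$ linearly, which is what downgrades the correction from $\mathcal{O}(\delta)$ to $\mathcal{O}(\delta^2)$ and hence to $o(\delta)$. The only other bookkeeping is to confirm that $D$, $\Psi_1^{\phi}$, $\Psi_1$, and $g_1$ are $\delta$-independent—they depend only on $g$, $F$, and $\phi$ at the root—so that all the $\delta$-dependence is confined to $E^P$ and, through it, to $v_P$; the latter dependence is harmless because $v_P$ enters the remainder solely through the operator-norm bound $\norm{E^P\xy[\star][y] \, v_P} \le \opnorm{E^P\xy[\star][y]}$, which holds for any unit vector.
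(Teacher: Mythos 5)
Your proof is correct and follows essentially the same route as the paper's: specialize the comparison inequality of Prop.~\ref{prop:comp_two}, collapse $\langle U_+ v_P, U_- v_P\rangle$ into the difference of squared norms $\norm{(D+\Psi_1^\phi g_1)v_P}^2 - \norm{(D+\Psi_1 g_1)E^P v_P}^2$, and absorb the second term via $\opnorm{E^P(x^\star(y),y)} = \mathcal{O}(\delta)$ (using $E^P = P^{-1}(P - F_1)$ at the root), so that it is $\mathcal{O}(\delta^2) = o(\delta)$. You also correctly read $\delta$ as measuring the deviation of $P$ from the Newton preconditioner $F_1$, which is the convention the paper's own proof uses.
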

Note that the term $(D+ \Psi^{\phi}_1g_1) v_P$ does not cancel in general, which shows that the preconditioning strategy should be favored when a suitable preconditioner (that makes $\delta$ small) is available.
A suitable preconditioner can also be leveraged in localized reparameterization by $\psi_x(z, y) = [P(x, y)]^{-1}z$.
However, the improvement from the reparameterization is less than the preconditioning because of $(D+ \Psi^{\phi}_1g_1) v_P$.

%%%
\paragraph{Asymptotic Analysis of Separable Localized Reparameterization Superiority.}

On the other hand, if $P$ does not approximate very well $F_1^{-1}$, it might be possible that the reparameterization $\Omega^\phi$ is better than $\Omega^P$ if $\phi$ is well designed. 
Extreme cases were already given for 1-D problems in Prop.~\ref{prop:supereffi-1d}, where super-efficient changes of variables are detailed.
Additionally, in the case of the separable localized reparameterization, the following proposition shows that if $g$ is close to being affine on $x$ (as already analyzed in Prop.~\ref{prop:outer-inner}), then the reparameterization $\Omega^\psi_{\text{loc}}$ could be a better choice than the preconditioning $\Omega^P$.

\begin{proposition}
For $\sigma \coloneqq \norm{g_1(x^\star(y), y)}_\infty C_y(\Psi_{\text{\upshape loc}}^\psi),$
% $\sigma \coloneqq \norm{D+E^\psi g_1}$,
\begin{equation*}
\begin{aligned}
    &[C_y(\Omega^P)]^2 - [C_y(\Psi^\psi_{\text{\upshape loc}})]^2\\
     \geq &\norm{(D + \Psi_1 g_1) E^Pv_\phi}^2 - \norm{Dv_\phi}^2 +o(\sigma),
\end{aligned}
\end{equation*}
where $v_\phi$ is defined in Prop.~\ref{prop:comp_two} with $\phi = \Psi_{\text{\upshape loc}}^\psi$.
\end{proposition}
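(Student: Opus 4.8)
The plan is to obtain the bound as a specialization of the second inequality of Prop.~\ref{prop:comp_two}, followed by discarding the reparameterization correction using the smallness of $\sigma$. First I would reduce the localized estimator to a genuine fixed change of variable: by the localization identity for $\Omega^\psi_{\text{loc}}$ (namely $C_y(\Omega^\psi_{\text{loc}})=C_y(\Omega^{\psi_{x^\star(y),y}})$, and likewise for $\Psi$), the frozen parameters $(x,\bar y)=(x^\star(y),y)$ do not enter the Jacobian evaluated at $x=x^\star(y)$, so I may apply Prop.~\ref{prop:comp_two} verbatim to the fixed reparameterization $\phi=\psi_{x^\star(y),y}$. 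This yields $[C_y(\Omega^P)]^2-[C_y(\Omega^\psi_{\text{loc}})]^2 \ge \langle V_+ v_\phi, V_- v_\phi\rangle$ with $v_\phi$ as defined there. I would note at this point that the statement's left-hand term $[C_y(\Psi^\psi_{\text{loc}})]^2$ and $[C_y(\Omega^\psi_{\text{loc}})]^2$ are both $O(\sigma^2)=o(\sigma)$ in the affine limit (since $\Omega^\phi_1(x^\star,y)=D+\Psi_1^\phi g_1$ collapses to $\Psi_1^\phi g_1$ there, cf. Eq.~\eqref{eq:Jomg_phi}), so the two versions of the inequality coincide up to the $o(\sigma)$ slack.

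Second I would simplify the inner product. Writing $A \coloneqq (D+\Psi_1 g_1)E^P$ and $B \coloneqq D+\Psi_1^\phi g_1$, the definition $V_\pm = DE^P \pm D + \Psi_1 g_1 E^P \pm \Psi_1^\phi g_1$ factors exactly as $V_\pm = A\pm B$. Since the ambient inner product is real and symmetric, the cross terms in $\langle (A+B)v_\phi,(A-B)v_\phi\rangle$ cancel, giving $\langle V_+ v_\phi, V_- v_\phi\rangle = \norm{A v_\phi}^2 - \norm{B v_\phi}^2 = \norm{(D+\Psi_1 g_1)E^P v_\phi}^2 - \norm{(D+\Psi_1^\phi g_1)v_\phi}^2$, with $E^P$ as in Eq.~\eqref{eq:JomgP} and $D$ as in Eq.~\eqref{eq:Dxy}. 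This already matches the target except for replacing the second term by $\norm{D v_\phi}^2$.

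Third I would control the discrepancy $\norm{B v_\phi}^2-\norm{D v_\phi}^2$ through $\sigma$. The correction $\Psi_1^\phi g_1$ is precisely the quantity whose magnitude is measured by $\sigma$: contracting the Jacobian-of-$\Psi$ tensor $\Psi_1^\phi$ against $g_1$ and using operator-norm submultiplicativity gives $\opnorm{\Psi_1^\phi g_1} \le \opnorm{\Psi_1^\phi}\,\opnorm{g_1} \le C_y(\Psi^\psi_{\text{loc}})\,\norm{g_1}_\infty = \sigma$, again via the localization identity for $\Psi$. Expanding $\norm{B v_\phi}^2 = \norm{D v_\phi}^2 + 2\langle D v_\phi, \Psi_1^\phi g_1 v_\phi\rangle + \norm{\Psi_1^\phi g_1 v_\phi}^2$, the quadratic term is $O(\sigma^2)=o(\sigma)$, and in the regime where $g$ is close to affine on $x$ — so that $D=g_{21}+[\partial x^\star]^\top g_{11}\to 0$, cf. Prop.~\ref{prop:outer-inner} — the cross term is $O(\norm{D}\,\sigma)=o(\sigma)$. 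Hence $\norm{B v_\phi}^2 = \norm{D v_\phi}^2 + o(\sigma)$, and substituting into the simplified inequality gives $[C_y(\Omega^P)]^2-[C_y(\Omega^\psi_{\text{loc}})]^2 \ge \norm{(D+\Psi_1 g_1)E^P v_\phi}^2 - \norm{D v_\phi}^2 + o(\sigma)$, which is the claim.

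The hard part is the cross term $\langle D v_\phi, \Psi_1^\phi g_1 v_\phi\rangle$: on its own it is only $O(\sigma)$, so the bound is not sharp unless $D$ is itself treated as vanishing. The proof therefore hinges on reading ``$g$ close to affine on $x$'' as forcing $\norm{D}=o(1)$ jointly with $\sigma\to 0$, the cleanest instance being exactly affine $g$, where $D=0$ and the cross term disappears identically. A secondary point needing care is the tensor bookkeeping in $\Psi_1^\phi g_1$: justifying the operator-norm submultiplicativity for the contraction of the third-order object $\Psi_1^\phi$ against $g_1$ is what legitimizes identifying its size with $\sigma$, and the same care is needed to verify that swapping $C_y(\Omega^\psi_{\text{loc}})$ for the stated $C_y(\Psi^\psi_{\text{loc}})$ only perturbs the bound by $o(\sigma)$.
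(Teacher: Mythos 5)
Your proposal follows essentially the same route as the paper: reduce to the fixed reparameterization $\phi=\psi_{x^\star(y),y}$ via the localization identity, start from the difference $\norm{(D+\Psi_1 g_1)E^P v_\phi}^2-\norm{(D+\Psi_1^\phi g_1)v_\phi}^2$ (your factorization $V_\pm=A\pm B$ simply undoes the polarization identity used to prove Prop.~\ref{prop:comp_two}, landing you exactly at the first line of the paper's proof), and then absorb the $\Psi_1^\phi g_1$ correction into $o(\sigma)$ using $\opnorm{\Psi_{1,\text{loc}}^\psi g_1}\le C_y(\Psi^\psi_{\text{loc}})\norm{g_1}_\infty=\sigma$. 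The one substantive difference is in your favor: the paper passes from $-\norm{(D+\Psi_{1,\text{loc}}^\psi g_1)v_\phi}^2$ to $-\norm{Dv_\phi}^2-\norm{\Psi_{1,\text{loc}}^\psi g_1 v_\phi}^2$ without comment, silently discarding the cross term $2\langle Dv_\phi,\Psi_{1,\text{loc}}^\psi g_1 v_\phi\rangle$, which is $O(\norm{D}\,\sigma)$ and hence only $o(\sigma)$ when $D$ is itself small; you identify precisely this as the point where the ``$g$ close to affine'' hypothesis from the surrounding discussion must be invoked, which makes your argument more honest about the hidden assumption than the paper's own. Your side remark that the left-hand side written as $[C_y(\Psi^\psi_{\text{loc}})]^2$ versus $[C_y(\Omega^\psi_{\text{loc}})]^2$ only shifts the bound by $o(\sigma)$ also correctly flags what appears to be a typo in the main-text statement (the appendix version uses $\Omega^\psi_{\text{loc}}$).
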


In conclusion, the preconditioning strategy outperforms in general as long as a valuable preconditioner is available.
Otherwise, a well-designed reparameterization could be a better choice when approximating $F_1^{-1}$ is difficult.

\definecolor{firebrick2084028}{RGB}{208,40,28}
\definecolor{steelblue66156185}{RGB}{66,156,185}
\newenvironment{samplelegend}[1][]{%
        \begingroup

        \csname pgfplots@init@cleared@structures\endcsname
        \pgfplotsset{#1}%
    }{%
        % draws the legend:
        \csname pgfplots@createlegend\endcsname
        \endgroup
    }%
        \def\addlegendimage{\csname pgfplots@addlegendimage\endcsname}
\pgfplotsset{
cycle list={%
{draw=black,mark=star,solid},
{draw=black, mark=square,solid},%densely dashed}, 
{draw=black,mark=+,solid},%dashdotted}, %every mark/.append style={rotate=90},
{black,mark=o},}}
\begin{figure}[t]
  \centering
  \newcommand{\mywidth}{0.35\textwidth} 

    \setlength\tabcolsep{15.0pt} 
  \begin{tabular}[t]{cc}
  \includegraphics[width=\mywidth]{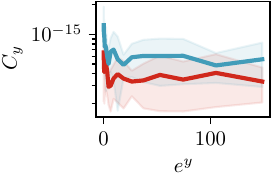} &
  \includegraphics[width=\mywidth]{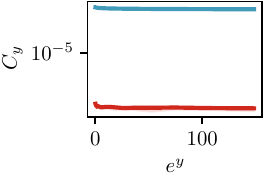} \\
  (a) $g_\text{aff}$ & (b) $g$\\
      \multicolumn{2}{c}{   
  \begin{tikzpicture}
  \begin{samplelegend}[legend columns=4,legend style={align=left,draw=none,column sep=2ex},legend entries={$P^{\text{Newton}}$, ${\psi}^{\text{opt}}_{x, \bar{y}}$}]
    \addlegendimage{line width=0.5mm, firebrick2084028}
    \addlegendimage{line width=0.5mm, steelblue66156185}
  \end{samplelegend}
  \end{tikzpicture}
  }
  \end{tabular}
	\caption{Compare $P^{\text{Newton}}$ from Prop.~\ref{prop:ideal_precond} and ${\psi}^{\text{opt}}_{x, \bar{y}}$ from Prop.~\ref{prop:ideal_sep_rep} on ridge regression but with different outer problems. We show the efficiency constant $C_y$ in $\log$ space under different $y$. (a) When the outer problem is affine, both strategies can achieve a small efficiency constant $C_y$ around machine accuracy. (b) When the outer problem is quadratic, the Newton preconditioner achieves the super efficiency while the ${\psi}^{\text{opt}}_{x, \bar{y}}$ has a large constant $C_y$.}
  \label{fig:opt_ridge_comp}
  \end{figure}

\definecolor{mediumblue4243192}{RGB}{42,43,192}
\definecolor{mediumseagreen8018079}{RGB}{80,180,79}
\definecolor{sandybrown25515670}{RGB}{255,156,70}
\definecolor{violet25595255}{RGB}{255,95,255}

        \def\addlegendimage{\csname pgfplots@addlegendimage\endcsname}
\pgfplotsset{
cycle list={%
{draw=black,mark=star,solid},
{draw=black, mark=square,solid},%densely dashed}, 
{draw=black,mark=+,solid},%dashdotted}, %every mark/.append style={rotate=90},
{black,mark=o},}}
\begin{figure}[t]
  \centering
  \newcommand{\mywidth}{0.35\textwidth} 
    \setlength\tabcolsep{15.0pt} 
  \begin{tabular}[t]{cc}
	
	\includegraphics[width=\mywidth]{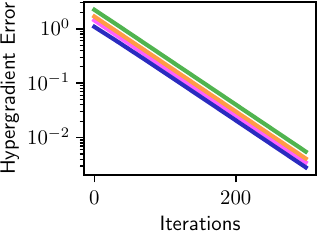} &
  \includegraphics[width=\mywidth]{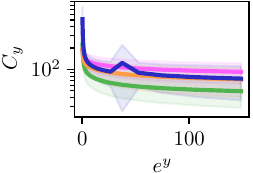} \\

  (a) bad preconditioner & (b) general case\\
  \multicolumn{2}{c}{   
  \begin{tikzpicture}
  \begin{samplelegend}[legend columns=5,legend style={align=left,draw=none,column sep=0.5ex},legend entries={Vanilla, $P^{\text{diag}}$, $\psi^{\text{exp}}_{x}$, $\psi^{\text{diag}}_{x}$}]
    \addlegendimage{line width=0.5mm, violet25595255}
    \addlegendimage{line width=0.5mm, mediumseagreen8018079}
    \addlegendimage{line width=0.5mm, mediumblue4243192}
    \addlegendimage{line width=0.5mm, sandybrown25515670}
  \end{samplelegend}
  \end{tikzpicture}
  }\\
  \end{tabular}
	\caption{Compare $P^{\text{diag}}$, $\psi^{\text{exp}}_{x}$, $\psi^{\text{diag}}_{x}$ on ridge regression with the outer problem $g$. (a) We show the hypergradient errors of different strategies in log space over the number of iterations when having a bad preconditioner. It turns out that $\psi^{\text{exp}}_{x}$ could be a better choice in this setting. (b) We show the efficiency constant $C_y$ of each strategy in log space under different $y$. Although reparameterization could perform better in some cases, $P^{\text{diag}}$ in general is the best choice.}
  \label{fig:general_ridge_comp}
  \end{figure}

\section{Numerical Experiments}
\label{sec:numerics}

We illustrate our findings on regression and classification supervised learning problems.
Bilevel programming is used to compute hyper-parameters $y$ controlling the regularization function. 
The inner and outer problems are of the form:
\begin{equation}
\begin{aligned}
    g(x)& = L(A_\text{val} x, b_\text{val}),\\
    f(x, y)& = L(A_\text{tr}x, b_\text{tr}) + \mathcal{R}(x, y),
\end{aligned}
\label{eq:num_prob}
\end{equation}
with $L(z, b) = \sum_i \ell(z_i, b_i)$, and the loss $\ell$ depends on the task (regression or classification).
Here $A_\text{tr} \in \R^{M \times d_x}$, $A_{\text{val}} \in \R^{N \times d_x}$ are the train and test design matrix respectively, 
$b_\text{tr} \in \R^M$ and $b_{\text{val}} \in \R^N$ are the train and test labels (restricted to $\{-1, 1\}$ for classification).
The coefficients $x \in \R^{d_x}$ are the weight parameters of the predictor.
The value of $y$ determines the structure of $F_1(x, y)$, for instance 
large $y$ lead to $F_1(x, y)$ being diagonally dominant.
The regularization functional is a ridge 
penalty $\mathcal{R}(x, y) \coloneqq \frac{1}{2} \sum_i^{d_x} e^{y_i}x_i^2$, so that $y$ introduces a feature-dependent penalization~\citep{pedregosa2016hyperparameter}.

Inspired by Prop.~\ref{prop:ideal_precond}, the first two strategies we consider are Newton preconditioner $P^{\text{Newton}}(x, y) \coloneqq F_1(x, y)$ and the diagonal preconditioner $P^{\text{diag}}(x, y) \coloneqq \text{diag}(F_1(x, y))$.
Similarly to Prop.~\ref{prop:supereffi-1d}, we also consider an exponential reparameterization defined as $\psi^{\text{exp}}_{x}(z) \coloneqq \text{sign}(x)\exp(z)$.
The last strategy is a separable diagonal reparameterization $\psi^{\text{diag}}_{x}(z, y) \coloneqq [\text{diag}(F_1(x, y))]^{-1} z$.
We run a gradient descent $x_{k} = x_{k-1} - \tau_k \nabla_x f(x_{k-1},y)$ with a proper step size $\tau_k > 0$ to attain an approximate root $x^k$ after $k$ steps.
All the experiments are run with Jax \citep{jax2018github} and can be found in \url{https://github.com/zhenzhang-ye/enhance_hypergradient}.

\subsection{Ridge Regression}

The first problem we consider is a regularized ridge regression, obtained using $\ell(z, b) = (z - b)^2$ in Eq.~\eqref{eq:num_prob}.
The design matrix $A_{\text{tr}}$ and labels $b_{\text{tr}}$ are from the dataset \textbf{mpg} in LIBSVM \citep{CC01a}, which consists of $M=392$ data and $d_x=7$ features.
The $A_{\text{val}} \in \R^{M \times d_x}$ and $b_{\text{val}} \in \R^{M}$ in the outer problem is randomly generated.
The optimal $\x[\star][y]$ is computed using the linear solver from Jax.
To directly assess the estimation error on $\partial x^\star(y)$ (see Prop.~\ref{prop:outer-inner}), we also consider a special case where the outer problem is affine, denoted by $g_{\text{aff}}$.

Since $F$ is bijective in this case, the  reparameterization $\psi^{\text{opt}}_{x, \bar{y}}(z, y) \coloneqq -[F_1(x, y)]^{-1}F(z, \bar{y}) + x$ from Prop.~\ref{prop:ideal_sep_rep} is readily applicable.
We first compare it to $P^{\text{Newton}}$ under different values of $y$.
We ran the experiments 10 times for each $y$ to remove the effect of randomness.
Fig.~\ref{fig:opt_ridge_comp}(a) shows that when the outer problem is $g_{\text{aff}}$, the two strategies both achieve very small efficiency constant $C_y$.
This agrees with our finding that both of the algorithms can achieve super efficiency if $D$ is $0$.
On the other hand, the advantage of Newton's preconditioner shows up when having the general $g$.
As shown in Fig.~\ref{fig:opt_ridge_comp}(b), the efficiency constant $C_y$ of $P^{\text{Newton}}$ is independent of the outer problem, while ${\psi}^{\text{opt}}_{x, \bar{y}}$ fails to attain a small constant $C_y$.

To cover cases where $F_1^{-1}$ is unavailable or expensive to apply, we compare $P^{\text{diag}}$, $\psi^{\text{exp}}_{x}$ and ${\psi}^{\text{diag}}_{x}$.
In Fig.~\ref{fig:general_ridge_comp} (a), we show a special case where  $\psi^{\text{exp}}_{x}$ works the best even with an arbitrary (non-affine) outer problem $g$.
The reason is that the $F_1$ is not diagonally dominated, and the diagonal preconditioner is not efficient anymore.
To this end, we compare the three strategies with different $y$ and show the results in Fig.~\ref{fig:general_ridge_comp}(b).
All three strategies improve the hypergradient estimator compared to the vanilla one when the outer problem is quadratic $g$.
The average performance of $P^{\text{diag}}$ is the best, but there can be cases where the two reparameterizations are the best choices.

\definecolor{mediumblue4243192}{RGB}{42,43,192}
\definecolor{mediumseagreen8018079}{RGB}{80,180,79}
\definecolor{sandybrown25515670}{RGB}{255,156,70}
\definecolor{violet25595255}{RGB}{255,95,255}
\def\addlegendimage{\csname pgfplots@addlegendimage\endcsname}
\pgfplotsset{
cycle list={%
{draw=black,mark=star,solid},
{draw=black, mark=square,solid},%densely dashed}, 
{draw=black,mark=+,solid},%dashdotted}, %every mark/.append style={rotate=90},
{black,mark=o},}}
\begin{figure}[t]
  \centering
  \newcommand{\mywidth}{0.35\textwidth} 
    \setlength\tabcolsep{15.0pt} 
  \begin{tabular}[t]{cc}
	\includegraphics[width=\mywidth]{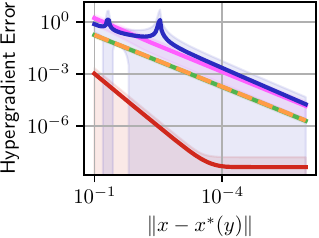} &
  \includegraphics[width=\mywidth]{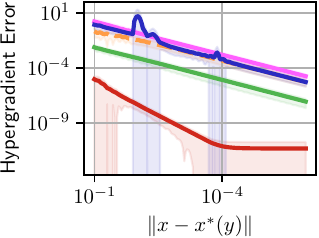} \\

  (a) Small $y$ & (b) Large $y$\\
  \multicolumn{2}{c}{   
  \begin{tikzpicture}
  \begin{samplelegend}[legend columns=6,legend style={align=left,draw=none,column sep=0.5ex},legend entries={\tiny Vanilla, \tiny $P^{\text{Newton}}$, \tiny $P^{\text{diag}}$, \tiny $\psi^{\text{exp}}_{x}$, \tiny $\psi^{\text{diag}}_{x}$}]
    \addlegendimage{line width=0.5mm, violet25595255}
    \addlegendimage{line width=0.5mm, firebrick2084028}
    \addlegendimage{line width=0.5mm, mediumseagreen8018079}
    \addlegendimage{line width=0.5mm, mediumblue4243192}
    \addlegendimage{line width=0.5mm, dashed, sandybrown25515670}
  \end{samplelegend}
  \end{tikzpicture}
  }\\
  \end{tabular}
	\caption{Comparison of different strategies on hypergradient error in log space over approximated root for logistic regression. $P^{\text{Newton}}$ always achieve the super efficiency. (a) With a small $y$, the performances of $P^{\text{diag}}$ and $\psi^{\text{diag}}_{x}$ are nearly the same. $\psi^{\text{exp}}_{x}$ performs worse than the vanilla one in some situations. (b) The performance of $P^{\text{diag}}$ improves thanks to the large $y$ which leads to a diagonally dominated $F_1$. The performances of two reparameterizations are the similar, both better than the vanilla one.}
  \label{fig:general_logistic_comp}
  \end{figure}

\subsection{Logistic Regression}
We now consider classification using logistic regression, using $\ell(z, b) = \text{log}(1 + \exp(-zb))$.
The design matrices and labels are from \textbf{liver-disorder} in LIBSVM with 145 training and 200 testing data.
There are 5 features and 2 classes of each data.
We ran all experiments 10 times with different random seeds.
The root $x^\star(y)$ is attained by the \texttt{minimize} function from Jax with a tolerance of $10^{-15}$.

To assess the impact of the scale of  $y$ on $F_1$ (and hence on the estimator efficiency), we compare the strategies with $y$ drawn from two different uniform distributions.
The first notable result in Fig.~\ref{fig:general_logistic_comp} is that $P^\text{Newton}$ achieves super efficiency in both cases.
This comes without a surprise because our finding shows that the performance of Newton's preconditioner is independent of $F_1$ and the problems.
Additionally when $y$ is generated over $[-1, 1)$ in Fig.~\ref{fig:general_logistic_comp}(a),
the $F_1$ is away from diagonally dominated.
It makes the performances of $P^{\text{diag}}$ and $\psi_{x}^\text{diag}$ nearly the same.
The exponential reparameterization $\psi_{x}^\text{exp}$ performs even worse than the vanilla one.
However, when $y$ generated over $[3, 6)$ becomes larger, all the strategies outperform the vanilla one.
The difference between $P^{\text{diag}}$ and $\psi_{x}^\text{diag}$ is expected because the diagonal preconditioner leads to a small $\delta$ in Prop.~\ref{prop:delta_eq} in this case.
For larger values of $y$, the performance of $\psi_{x}^\text{exp}$ is improved as well.
A plausible explanation is that larger $y$ makes $F_1$ close to a diagonal matrix, in which case the exponential reparameterization achieves super efficiency. 

\section*{Conclusion}

In this paper, we have developed an in-depth local analysis of implicit differentiation for bilevel programming. Of particular interest are detailed expressions of the efficiency constant $C_y$ of several variations around the vanilla IFT formula. This highlights the key challenge in designing an efficient hypergradient formula. 
On the theoretical side, the question of the existence (let alone the computation) of super-efficient reparameterization is still mostly open. It corresponds to highly non-linear partial differential equations. Better exploiting the connection between reparametrization and preconditioning could also be fruitful, for instance, in designing parametric formulas that could be adapted to specific machine-learning problems. 

\section*{Acknowledgement}
ZY and DC were supported by the ERC Advanced Grant SIMULACRON. The work of GP was supported by the European Research Council (ERC project NORIA) and the French government under management of Agence Nationale de la Recherche as part of the ``Investissements d’avenir'' program, reference ANR-19-P3IA-0001 (PRAIRIE 3IA Institute). The work was done during ZY's internship at ENS - PSL University. We thank Zaccharie Ramzi for the fruitful discussions.

\bibliographystyle{abbrvnat}
\bibliography{ref}

\newpage
\appendix
\counterwithin*{equation}{section}
\renewcommand\theequation{\thesection\arabic{equation}}
\section*{Appendix}
\section{Proof of Propositions}
\begin{aproposition}[Hypergradient approximation]\label{prop1}
   If $\tilde\Omega$ is $C^1$ and consistent, then for all $\hat{x}$ and $y$
    \begin{align*}
    &\norm{
        \tilde{\Omega}(\hat x,y) - 
        \nabla h(y) 
    }
    \leq 
    C_y \norm{ \x[\star][y]-\hat x }
    + 
    \mathcal{O}( \norm{ \x[\star][y]-\hat x }^2 ), \\
     &   \quad \text{where} \quad  
        C_y \coloneqq C_y(\tilde\Omega) \coloneqq \opnorm{\tilde{\Omega}_1(\x[\star][y], y)}.
    \end{align*}
\end{aproposition}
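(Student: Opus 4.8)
The plan is to reduce the statement to a first-order Taylor expansion of $\tilde\Omega$ in its first argument, around the exact inner solution $x^\star(y)$, and then exploit consistency to identify the zeroth-order term with $\nabla h(y)$.

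First I would fix $y$ and set $e \coloneqq \hat x - x^\star(y)$, the inner-resolution error. Using the exact integral form of the first-order Taylor formula along the segment $t \mapsto x^\star(y) + t e$, I would write
\[
    \tilde\Omega(\hat x, y) = \tilde\Omega(x^\star(y), y) + \tilde\Omega_1(x^\star(y), y)\, e + r(e),
\]
with remainder $r(e) = \int_0^1 \left[\tilde\Omega_1(x^\star(y)+t e, y) - \tilde\Omega_1(x^\star(y), y)\right] e\, dt$.

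Next, invoking consistency, the zeroth-order term $\tilde\Omega(x^\star(y), y)$ equals $\nabla h(y)$, so after rearranging and taking Euclidean norms,
\[
    \norm{\tilde\Omega(\hat x, y) - \nabla h(y)} \leq \opnorm{\tilde\Omega_1(x^\star(y), y)}\, \norm{e} + \norm{r(e)} = C_y \norm{e} + \norm{r(e)},
\]
where I have used the triangle inequality and bounded the linear term by the operator norm of the Jacobian. It remains only to control $\norm{r(e)}$.

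The one genuine subtlety — and the step I would treat with care — is the order of this remainder. The integral form yields directly $\norm{r(e)} \leq \left(\int_0^1 \opnorm{\tilde\Omega_1(x^\star(y)+t e,y)-\tilde\Omega_1(x^\star(y),y)}\,dt\right)\norm{e}$. If $\tilde\Omega$ were merely $C^1$, continuity of the Jacobian would only give $\norm{r(e)} = o(\norm{e})$; the advertised quadratic bound $\mathcal{O}(\norm{e}^2)$ requires $\tilde\Omega_1$ to be locally Lipschitz in its first argument, which holds under the blanket smoothness (e.g. $C^2$) assumption on $g$ and $F$ used throughout. Under that assumption the integrand is bounded by a constant multiple of $t\norm{e}$, so $\norm{r(e)} = \mathcal{O}(\norm{e}^2)$, completing the argument. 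No real obstacle remains: the result is a direct consequence of consistency together with Taylor's theorem, and the whole content of the proposition is the identification of the first-order coefficient $C_y = \opnorm{\tilde\Omega_1(x^\star(y),y)}$ as the quantity governing the leading error term.
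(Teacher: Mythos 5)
Your proof follows essentially the same route as the paper's: a first-order Taylor expansion of $\tilde\Omega(\cdot,y)$ around $x^\star(y)$, consistency to identify the zeroth-order term with $\nabla h(y)$, and the operator norm to bound the linear term. Your additional remark that the $\mathcal{O}(\norm{e}^2)$ remainder really requires local Lipschitz continuity of $\tilde\Omega_1$ (rather than bare $C^1$ regularity) is a correct refinement that the paper's one-line Taylor argument leaves implicit.
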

\begin{proof}
    The Taylor expansion of $\tilde{\Omega}(\hat{x}, y)$ at $(x^\star(y), y)$ yields:
    \begin{equation*}
        \tilde{\Omega}(\hat{x}, y) = \tilde{\Omega}(x^\star(y), y) + \tilde{\Omega}_1(x^\star(y), y)(x^\star(y) - \hat{x}) + \mathcal{O}(\norm{x^\star(y) - \hat{x}}^2).
    \end{equation*}
    Because $\tilde{\Omega}$ is consistent, we have
    \begin{equation*}
    \begin{aligned}
    \norm{
        \tilde{\Omega}(\hat x,y) - 
        \nabla h(y) 
    } &= \norm{\tilde{\Omega}_1(x^\star(y), y)(\hat{x} - x^\star(y)) + \mathcal{O}(\norm{x^\star(y) - \hat{x}}^2)}\\
    &\leq \norm{\tilde{\Omega}_1(x^\star(y), y)}_\text{op}\norm{(x^\star(y) - \hat{x})} +\mathcal{O}(\norm{x^\star(y) - \hat{x}}^2).
    \end{aligned}
    \end{equation*}
\end{proof}

\begin{aproposition}[Jacobian of estimation]\label{prop2}
Assuming $g$ and $F$ are smooth, one has 
\begin{equation*}
    \Jomg{\xy} = g_{21}{\xy} + \Psi_1 \xy g_1 \xy + \Psi \xy g_{11} \xy, 
\end{equation*}
\begin{equation*}
\begin{aligned}
\text{ with } \Psi_1 \xy = -[F_{12} \xy ]^\top[F_1 \xy[]{}]^{-1} + \bigl([F_1 \xy[]{}]^{-1} F_{11} \xy[]{} [F_1 \xy[]{}]^{-1}F_2 \xy[]{}\bigr)^\top.
\end{aligned}
%\label{eq:JPsi} % \label{eq:Jomg}
\end{equation*}
\end{aproposition}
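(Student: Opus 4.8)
The plan is to obtain $\Omega_1$ by differentiating the IFT formula $\Omega = g_2 + \Psi g_1$ directly with respect to $x$, viewing the second summand as a product of the matrix-valued map $\Psi$ with the gradient field $g_1$. First I would apply the product rule to that summand, writing $\partial_x(\Psi g_1) = (\partial_x \Psi)\,g_1 + \Psi\,(\partial_x g_1)$, and record that $\partial_x g_2 = g_{21}$ and $\partial_x g_1 = g_{11}$ by definition of the second-order derivatives. This already produces the advertised decomposition $\Omega_1 = g_{21} + \Psi_1 g_1 + \Psi g_{11}$, once $\Psi_1$ is identified with $\partial_x \Psi$ (contracted against $g_1$ along the appropriate index). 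Hence the only substantive computation is that of $\Psi_1$.

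To compute $\Psi_1$, I would start from $\Psi = -[F_2]^\top [F_1]^{-1}$ and differentiate the product in $x$ using standard matrix calculus. The factor $[F_2]^\top$ contributes $[\partial_x F_2]^\top = [F_{21}]^\top$, which equals $[F_{12}]^\top$ by Schwarz's theorem on the symmetry of the third-order mixed partials of $F$; this yields the first term $-[F_{12}]^\top [F_1]^{-1}$. The factor $[F_1]^{-1}$ is handled by the inverse-derivative identity $\partial_x([F_1]^{-1}) = -[F_1]^{-1}(\partial_x F_1)[F_1]^{-1} = -[F_1]^{-1} F_{11} [F_1]^{-1}$, which, after carrying the leading $-[F_2]^\top$ through and accounting for the transpose conventions implicit in the definition of $\Psi$ (together with the symmetry of $F_1$ and $F_{11}$ when $F=\nabla_1 f$), reproduces the second term $\bigl([F_1]^{-1} F_{11} [F_1]^{-1} F_2\bigr)^\top$.

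The delicate point, and the place where care is most needed, is the bookkeeping of tensor indices and transposes: $\Psi$, $F_1$, and their $x$-derivatives are all matrix-valued, so $\partial_x$ produces third-order tensors, and one must track both which slot of $F_{11}$ is contracted when $[F_1]^{-1}$ is inserted on each side and where each transpose lands. To avoid sign and ordering errors I would fix an explicit entrywise convention at the outset (for instance $\Psi_{ji} = -\sum_l (F_2)_{lj}\,([F_1]^{-1})_{li}$), differentiate componentwise, and only reassemble into matrix-and-transpose form at the end. Beyond the product rule, the inverse-derivative identity, and symmetry of mixed partials, no further ideas are required, so the result is essentially a careful exercise in matrix differentiation.
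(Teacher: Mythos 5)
Your proposal is correct and follows essentially the same route as the paper's proof: apply the product rule to $\Omega = g_2 + \Psi g_1$, then differentiate $\Psi = -[F_2]^\top[F_1]^{-1}$ using the inverse-derivative identity $\partial_x([F_1]^{-1}) = -[F_1]^{-1}F_{11}[F_1]^{-1}$. Your added attention to the transpose bookkeeping and the symmetry of mixed partials only makes explicit what the paper leaves implicit.
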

\begin{proof}
    Recall that $\Omega(x, y) = g_2(x, y) + \Psi(x, y) g_1(x, y)$ with $\Psi(x, y) = -[F_2(x,y)]^\top[F_1(x, y)]^{-1}$. Using the chain rule to compute the Jacobian w.r.t. $x$, we have:
    \begin{equation*}
    \Omega_1 = g_{21}(x, y) + \Psi_1(x, y) g_1(x, y) + \Psi(x, y) g_{11}(x, y).\\
    \end{equation*}
    Using the fact that the Jacobian of $[F(x)]^{-1}$ is $-[F(x)]^{-1}F_1(x)[F(x)]^{-1}$, we can get $\Psi_1(x, y)$ as defined.
\end{proof}

\begin{aproposition}[IFT efficiency]\label{prop3}
    One has
    $$
        C_y(\Omega) \leq \| g_{21}(x^\star(y), y) + [\partial x^\star(y)]^\top g_{11} \|_{\infty}  + 
             \norm{g_1(x^\star(y), y)}_\infty C_y(\Psi)
    $$
    where $\norm{H}_\infty := \sup_{x,y} \norm{H(x,y)}_\text{\upshape op}$.
    Hence, if $g$ is of the form $g(x, y) = ax + m(y)$, then $\Omega$ is super-efficient if $\Psi$ is super-efficient.
    If, furthermore, $F $ is of the form $A x + M(y)$, then $\Omega$ is super-efficient.
\end{aproposition}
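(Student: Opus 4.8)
The plan is to start from the closed-form expression for $\Omega_1$ established in Prop.~\ref{prop2}, specialize it to the point $x=x^\star(y)$, and then take operator norms. Evaluating the formula of Prop.~\ref{prop2} at $(x^\star(y),y)$ gives
$$\Omega_1(x^\star(y),y) = g_{21}(x^\star(y),y) + \Psi_1(x^\star(y),y)\, g_1(x^\star(y),y) + \Psi(x^\star(y),y)\, g_{11}(x^\star(y),y).$$
The key structural input is the IFT identity relating $\Psi(x^\star(y),y)$ to the Jacobian $\partial x^\star(y)$ (recalled just after Eq.~\eqref{eq:Psi_basic}), which, with the paper's transpose convention, lets me merge the first and third terms into $g_{21}(x^\star(y),y) + [\partial x^\star(y)]^\top g_{11}(x^\star(y),y)$, exactly the quantity appearing inside the first norm of the claim.

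Next I would take the operator norm of $\Omega_1(x^\star(y),y)$, apply the triangle inequality to split off the merged term from the middle term, and use submultiplicativity on the product $\Psi_1 g_1$. Since by definition $C_y(\Omega)=\opnorm{\Omega_1(x^\star(y),y)}$ and $C_y(\Psi)=\opnorm{\Psi_1(x^\star(y),y)}$, bounding $\opnorm{\Psi_1(x^\star,y)\,g_1(x^\star,y)} \le \opnorm{\Psi_1(x^\star,y)}\,\norm{g_1(x^\star,y)}$ and then replacing the pointwise quantities by their suprema $\norm{\cdot}_\infty$ (which only enlarges the right-hand side) yields the stated inequality.

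Finally I would treat the two consequences by direct differentiation. If $g(x,y)=ax+m(y)$, then $g_1\equiv a$ is constant in both arguments, so $g_{11}=0$ and $g_{21}=0$; the first norm in the bound vanishes and $C_y(\Omega)\le \norm{g_1}_\infty\, C_y(\Psi)$, which is $0$ whenever $\Psi$ is super-efficient. If, in addition, $F(x,y)=Ax+M(y)$, then $F_1\equiv A$ forces $F_{11}=0$ and $F_{12}=0$, so the expression for $\Psi_1$ in Prop.~\ref{prop2} collapses to $\Psi_1(x^\star,y)=0$; hence $C_y(\Psi)=0$, and $\Omega$ is super-efficient by the previous case.

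The computation is essentially routine; the only place needing care is the middle term. Since $\Psi_1$ is really the Jacobian tensor of $\Psi$ contracted against the vector $g_1$, I must make sure the operator-norm submultiplicativity is applied with the correct pairing so that $\opnorm{\Psi_1 g_1}\le \opnorm{\Psi_1}\,\norm{g_1}$ genuinely holds for the chosen norms. I expect this bookkeeping to be the main (and only minor) subtlety; everything else follows from the triangle inequality and the vanishing of the relevant higher-order derivatives in the affine cases.
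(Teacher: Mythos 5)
Your proposal is correct and follows essentially the same route as the paper: expand $\Omega_1(x^\star(y),y)$ via Prop.~\ref{prop2}, identify $\Psi(x^\star(y),y)$ with $\partial x^\star(y)$ by the IFT, split with the triangle inequality and submultiplicativity, and then observe that the affine hypotheses kill $g_{21},g_{11}$ and (via $F_{11}=F_{12}=0$) force $C_y(\Psi)=0$. Your explicit verification that $\Psi_1$ vanishes when $F$ is affine is slightly more detailed than the paper's one-line assertion, but the argument is the same.
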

\begin{proof}
    Recall the definition of $C_y(\Omega)$ in Prop.~\ref{prop1}, we have:
    \begin{equation*}
    \begin{aligned}
        C_y(\Omega) &= \opnorm{\Omega_1(x^\star(y), y)}\\
        &=\opnorm{g_{21}{\xy[*][y]} + \Psi_1 \xy[*][y] g_1 \xy[*][y] + \partial x^\star(y) g_{11} \xy[*][y]} \\
        &\leq \opnorm{g_{21}{\xy[*][y]}+ [\partial x^\star(y)]^\top g_{11} \xy[*][y]} + \opnorm{\Psi_1 \xy[*][y] g_1 \xy[*][y]}\\
        &\leq \norm{g_{21}{\xy[*][y]}+ [\partial x^\star(y)]^\top g_{11} \xy[*][y]}_\infty + \opnorm{\Psi_1 \xy[*][y]} \norm{g_1 \xy[*][y]}_\infty\\
    \end{aligned}
    \end{equation*}
    If $g$ is affine, we have $g_{21}(x, y) = g_{11}(x, y) \equiv 0$ and the first term becomes $0$.
    If furthermore, $F(x, y) = Ax + M(y)$, $C_y(\Psi) \equiv 0$, which implies $C_y(\Omega) \equiv 0$.
\end{proof}

\begin{aproposition}[Preconditioned estimation]\label{prop4}
    $\Omega^P$ is consistent and 
    \begin{equation*}%\label{eq:JomgP}
        \Jomg[P]{\xy} = \Omega_1(\tilde{x}, y)
        E^P\xy,
        % (I_{d_x} - P(y)^{-1}F_1\xy[*][y]).
    \end{equation*}
    where $E^P \xy \coloneqq I_{d_x} - [P\xy]^{-1} F_1 \xy + [P\xy]^{-1} P_1 \xy[][] [P\xy]^{-1} F\xy $.
\end{aproposition}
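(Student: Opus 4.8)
The plan is to view $\Omega^P$ as the composition of the (already consistent) IFT formula $\Omega$ with the preconditioned update map $x \mapsto \tilde{x}(x,y) = x - P(x,y)^{-1} F(x,y)$, and then to apply the chain rule. Consistency is the easy half and follows immediately from the root condition: at $x = x^\star(y)$ one has $F(x^\star(y), y) = 0$, so $\tilde{x} = x^\star(y)$ no matter what $P$ is, whence $\Omega^P(x^\star(y), y) = \Omega(x^\star(y), y) = \nabla h(y)$ by consistency of $\Omega$.

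For the Jacobian, since $\Omega^P(x,y) = \Omega(\tilde{x}(x,y), y)$, the chain rule gives $\Omega^P_1(x,y) = \Omega_1(\tilde{x}, y)\, \partial_x \tilde{x}$, so the entire task reduces to computing the Jacobian $\partial_x \tilde{x}$ of the update map. First I would differentiate $\tilde{x} = x - P^{-1} F$ term by term: the leading $x$ contributes $I_{d_x}$, and the product $P^{-1} F$ splits, by the product rule, into $(\partial_x P^{-1}) F + P^{-1} F_1$, where the second piece is read off directly as $P^{-1} F_1$.

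The one genuinely delicate step is the first piece $(\partial_x P^{-1}) F$, because $P_1$ is a third-order object (the Jacobian of the matrix-valued map $P$) and it must be contracted in the correct order to return a $d_x \times d_x$ matrix. I would use the standard coordinate-wise identity $\partial_{x_j} P^{-1} = - P^{-1} (\partial_{x_j} P) P^{-1}$, apply it to the fixed vector $F$, and reassemble the columns to obtain $(\partial_x P^{-1}) F = - P^{-1} P_1 P^{-1} F$. Combining the three contributions, and tracking the sign flip coming jointly from this identity and the minus in front of $P^{-1}F$, yields $\partial_x \tilde{x} = I_{d_x} - P^{-1} F_1 + P^{-1} P_1 P^{-1} F = E^P(x,y)$, which is exactly the claimed expression.

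The main obstacle is therefore purely bookkeeping: making the tensor contractions in $(\partial_x P^{-1}) F$ unambiguous, i.e.\ checking that the per-coordinate inverse-derivative identity reassembles into the compact matrix expression $P^{-1} P_1 P^{-1} F$ with the right index placement and sign. Everything else is a direct application of the chain and product rules. I would finally note, for use in the subsequent simplification of $E^P$, that evaluating at $x = x^\star(y)$ and substituting $F(x^\star(y), y) = 0$ annihilates the last term, recovering $E^P(x^\star(y), y) = I_{d_x} - P(x^\star(y),y)^{-1} F_1(x^\star(y), y)$.
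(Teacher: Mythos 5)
Your proposal is correct and follows essentially the same route as the paper: consistency via $F(x^\star(y),y)=0$ forcing $\tilde x = x^\star(y)$, and the Jacobian via the chain rule applied to $\Omega^P(x,y)=\Omega(\tilde x(x,y),y)$ together with the identity $\partial_x P^{-1} = -P^{-1}(\partial_x P)P^{-1}$ contracted against $F$. The only difference is that you spell out the tensor-contraction bookkeeping for $(\partial_x P^{-1})F$ more explicitly than the paper does.
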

\begin{proof}
    The root of $F$ reads $F(x^\star(y), y) =0$. Recall the preconditioning step $\tilde{x}:=x - [P(x,y)]^{-1}F(x, y)$. Plugging $x = x^\star(y)$, we have $\tilde{x} = x^\star(y)$. We thus have $\Omega^{P}(x^\star(y), y) = \Omega(x^\star(y), y)$.

    Because $\Omega^P = \Omega(x - [P(x, y)]^{-1}F(x, y), y)$, using the chain rule and the fact that the Jacobian of $[P(x)]^{-1}$ is $-[P(x)]^{-1}P_1(x)[P(x)]^{-1}$, we have:
    \begin{equation*}
    \begin{aligned}
    \Omega_1^P(x, y) = \Omega_1(x - [P(x, y)]^{-1}F(x, y), y)(I_{d_x} - [P(x, y)]^{-1} F_1(x, y) + [P(x,y)]^{-1}P_1(x, y) [P(x, y)]^{-1}F(x, y)).
    \end{aligned}
    \end{equation*}
\end{proof}

\begin{aproposition}[Newton-like preconditioner]\label{prop5}
    For $P(x,y) = F_1(x,y)$, 
    $\Omega^P$ is super-efficient.
%$\Jomg[P]{\xy[*][y]} =0$.
\end{aproposition}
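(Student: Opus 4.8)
The plan is to reduce the claim to a single evaluation of the already-established factorization of $\Omega_1^P$. By definition of super-efficiency it suffices to show $C_y(\Omega^P) = \opnorm{\Omega_1^P(x^\star(y), y)} = 0$. Prop.~\ref{prop4} gives
$$
    \Omega_1^P(x, y) = \Omega_1(\tilde x, y)\, E^P(x, y),
    \qquad
    E^P(x, y) = I_{d_x} - [P(x, y)]^{-1} F_1(x, y) + [P(x, y)]^{-1} P_1(x, y)[P(x, y)]^{-1} F(x, y),
$$
so I would simply substitute $P = F_1$ and evaluate the right-hand side at $x = x^\star(y)$, aiming to show the factor $E^P$ vanishes there.

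The key step is to simplify $E^P$ at the root. Since $x^\star(y)$ satisfies the inner optimality condition $F(x^\star(y), y) = 0$, the third summand of $E^P$ carries an explicit factor $F(x^\star(y), y)$ and therefore drops out, leaving
$$
    E^P(x^\star(y), y) = I_{d_x} - [P(x^\star(y), y)]^{-1} F_1(x^\star(y), y),
$$
which is exactly the simplification already recorded in the discussion following Prop.~\ref{prop4}. I would then insert the Newton choice $P(x, y) = F_1(x, y)$, so that $[P(x^\star(y), y)]^{-1} F_1(x^\star(y), y) = [F_1(x^\star(y), y)]^{-1} F_1(x^\star(y), y) = I_{d_x}$, giving $E^P(x^\star(y), y) = 0$. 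Hence $\Omega_1^P(x^\star(y), y) = \Omega_1(x^\star(y), y)\cdot 0 = 0$ irrespective of the value of $\Omega_1(\tilde x, y)$, and $C_y(\Omega^P) = 0$, which is the asserted super-efficiency.

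There is essentially no technical obstacle: the whole argument is a one-line cancellation once Prop.~\ref{prop4} is invoked. The only two points I would flag for care are (i) that at the root the preconditioning step satisfies $\tilde x = x^\star(y)$ — this is what makes both the consistency of $\Omega^P$ and the factorized form of $\Omega_1^P$ genuinely applicable at $x^\star(y)$ — and (ii) that $P = F_1$ is an admissible (invertible) preconditioner precisely where it is used, namely at the root. Both are furnished by the standing IFT assumption that $F_1(x^\star(y), y)$ is invertible, so no extra hypothesis is needed and the conclusion holds for arbitrary smooth $g$ and $F$.
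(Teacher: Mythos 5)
Your proposal is correct and follows essentially the same route as the paper's own proof: use $F(x^\star(y),y)=0$ to reduce $E^P(x^\star(y),y)$ to $I_{d_x}-[F_1(x^\star(y),y)]^{-1}F_1(x^\star(y),y)=0$, whence $\Omega_1^P(x^\star(y),y)=0$ by the factorization of Prop.~\ref{prop4}. The extra remarks on the admissibility of $P=F_1$ at the root and on $\tilde x = x^\star(y)$ are accurate but not needed beyond what the paper already establishes.
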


\begin{proof}
Note that $F(x^\star(y), y) = 0$ in this case. Thus, we have $E^P\xy[*][y] = I_{d_x} - [P(x^\star(y), y)]^{-1}F_1(x^\star(y), y) = 0$, which implies $\Omega^P_1(x^\star(y), y)=0$. 
\end{proof}

\begin{aproposition}\label{prop6} $\Omega^{\phi}$ is consistent and 
\begin{align*}   
\omg[\phi]{\xy} &\coloneqq g_2\xy + \Psi^{\phi}\xy g_1\xy,  \\
    \Psi^{\phi}\xy & \coloneqq \phi_2(z, y)^\top - U^\phi\xy^\top [V^\phi\xy]^{-1}, \\ 
    U^\phi(x, y) &\coloneqq F_2 \xy +F_1\xy \phi_2(z,y)+ [\phi_1(z,y)]^{-1} \phi_{21} (z,y) F \xy, \\
    V^\phi(x, y) &\coloneqq [\phi_1(z,y)]^{-\top} \phi_{11}(z, y) F(x, y)  [\phi_1(z, y)]^{-1} + F_1 \xy,
    \label{eq:V_def}
\end{align*}
with $z = \phi^{-1}(x, y)$. Additionally, denoting $x^\star \coloneqq x^\star(y)$ and $z^\star \coloneqq \phi^{-1}(x^\star, y)$:
\begin{equation*}
\begin{aligned}
   \Jomg[\phi](x^\star,y) &= D(y) + \Psi^{\phi}_1(x^\star,y) g_1(x^\star,y),
  \\
    D(y) & \coloneqq g_{21} (x^\star, y) + [\partial x^\star(y)]^\top g_{11} (x^\star, y), \\
    \Psi^{\phi}_1(x^\star,y) &= \Psi_1(x^\star, y) + C^\phi(y),\\
    C^\phi(y) &\coloneqq W^\phi (y)+ S^\phi (y) + T^\phi(y).
\end{aligned}
\end{equation*}
\begin{equation*}
\begin{aligned}
W^\phi(y)  &\coloneqq - [F_1(x^\star, y)]^{-1} [\phi_1(z^\star, y)]^{-\top} [\phi_{12}(z^\star, y)]^\top F_1(x^\star, y),\\
S^\phi(y) &\coloneqq \bigl([\phi_1(z^\star, y)]^{-\top}\phi_{11}(z^\star, y)F_1(x^\star, y)[\phi_1(z^\star, y)]^{-1}[F_1(x^\star, y)]^{-1}\phi_2(z^\star, y)\bigr)^\top,\\
T^\phi(y) &\coloneqq 
\bigl([F_1(x^\star, y)]^{-1}[\phi_1(z^\star, y)]^{-\top}\phi_{11}(z^\star, y)F_1(x^\star, y)[\phi_1(z^\star, y)]^{-1}[F_1(x^\star, y)]^{-1}F_2(x^\star, y)\bigr)^{\top}.
\end{aligned}
\end{equation*}
%\label{prop:rep_est}
\end{aproposition}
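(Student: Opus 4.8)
The plan is to apply the vanilla IFT formula of Eq.~\eqref{eq:hypergrad_basic} not to the pair $(g,F)$ but to the reparameterized pair $(\tilde g,\tilde F)$, with $\tilde g(z,y)=g(\phi(z,y),y)$ and $\tilde F(z,y)=\phi_1(z,y)^\top F(\phi(z,y),y)$, and then to pull the resulting expression back to the $x$ variable through $z=\phi^{-1}(x,y)$, whose Jacobian w.r.t. $x$ is $\phi_1^{-1}$. First I would compute the first-order derivatives of the reparameterized functionals: the chain rule gives $\tilde g_1=\phi_1^\top g_1$ and $\tilde g_2=g_2+\phi_2^\top g_1$, while differentiating $\tilde F$ and factoring out the outer $\phi_1$ yields $\tilde F_1=\phi_1^\top V^\phi\phi_1$ and $\tilde F_2=\phi_1^\top U^\phi$, with $U^\phi,V^\phi$ exactly as in Eq.~\eqref{eq:U_def}--\eqref{eq:V_def} (the terms $\phi_1^{-1}(\phi_{21}F)$ and $\phi_1^{-\top}(\phi_{11}F)\phi_1^{-1}$ arise precisely from differentiating the prefactor $\phi_1^\top$). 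Substituting into $\tilde\Psi=-\tilde F_2^\top\tilde F_1^{-1}$, the $\phi_1$ factors cancel against those of $\tilde g_1$, and $\Omega^\phi=\tilde g_2+\tilde\Psi\tilde g_1$ collapses to $g_2+\Psi^\phi g_1$ with $\Psi^\phi=\phi_2^\top-(U^\phi)^\top(V^\phi)^{-1}$, which is Eq.~\eqref{eq:omg_phi}--\eqref{eq:Psi_phi}.

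For consistency I would evaluate at $x=x^\star(y)$, where $F(x^\star,y)=0$. This kills the two curvature terms, leaving $U^\phi=F_2+F_1\phi_2$ and $V^\phi=F_1$; using the symmetry of $F_1$ (the inner Hessian when $F=\nabla_1 f$) one checks $\Psi^\phi(x^\star,y)=\phi_2^\top-(F_2+F_1\phi_2)^\top F_1^{-1}=-F_2^\top F_1^{-1}=\Psi(x^\star,y)$, so $\Omega^\phi(x^\star,y)=\Omega(x^\star,y)=\nabla h(y)$. Equivalently, consistency is automatic since the bilevel value is invariant under the bijective change of variable.

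The core computation is the Jacobian. Differentiating $\Omega^\phi=g_2+\Psi^\phi g_1$ w.r.t. $x$ and evaluating at $x^\star$, the identity $\Psi^\phi(x^\star,y)=\Psi(x^\star,y)=[\partial x^\star]^\top$ turns the two outer terms into $D(y)=g_{21}+[\partial x^\star]^\top g_{11}$, leaving $\Omega^\phi_1(x^\star,y)=D(y)+\Psi^\phi_1(x^\star,y)g_1$, i.e. Eq.~\eqref{eq:Jomg_phi}--\eqref{eq:Dxy}. It then remains to differentiate $\Psi^\phi=\phi_2^\top-(U^\phi)^\top(V^\phi)^{-1}$, using $\partial_x(V^\phi)^{-1}=-(V^\phi)^{-1}(\partial_x V^\phi)(V^\phi)^{-1}$ and recalling that every $z$-dependent factor contributes an extra $\phi_1^{-1}$ through $\partial_x z$. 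At $x^\star$ each term in which $F$ appears undifferentiated vanishes, so only terms that differentiate $F$ (producing $\nabla_x F=F_1$) survive. The terms not involving the curvature of $\phi$ reproduce exactly $\Psi_1(x^\star,y)$ of Eq.~\eqref{eq:JPsi}; the derivative of the $\phi_1^{-1}(\phi_{21}F)$ piece of $U^\phi$ produces $W^\phi$; and the derivative of the $\phi_1^{-\top}(\phi_{11}F)\phi_1^{-1}$ piece of $V^\phi$, contracted against the two summands $F_1\phi_2$ and $F_2$ of $U^\phi$, produces $S^\phi$ and $T^\phi$ respectively, giving $\Psi^\phi_1=\Psi_1+W^\phi+S^\phi+T^\phi$ as in Eq.~\eqref{eq:JPsi_phi}.

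The hard part is the tensor bookkeeping in this last step: the third-order objects $\phi_{11},\phi_{12},F_{11}$ each carry three indices, and one must track which index contracts against $F$ (or $F_1$) and which against the surrounding matrices, while maintaining the correct transposes and the order of the non-commuting factors $F_1$, $\phi_1$ and their inverses. The two features that make this tractable are the vanishing of $F$ at $x^\star$, which prunes the product-rule expansion to a handful of surviving terms, and the symmetry of $F_1$, which lets the spurious $\phi_2$-terms cancel so that the base Jacobian $\Psi_1$ emerges cleanly alongside the three corrections $W^\phi,S^\phi,T^\phi$.
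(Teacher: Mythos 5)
Your proposal is correct and follows essentially the same route as the paper: derive $\Omega^\phi$ by applying the IFT formula to the reparameterized pair $(\tilde g,\tilde F)$ with $\tilde F=\phi_1^\top F(\phi(\cdot,y),y)$ (so that $\tilde F_1=\phi_1^\top V^\phi\phi_1$ and $\tilde F_2=\phi_1^\top U^\phi$, with the $\phi_1$ factors cancelling), check consistency via $F(x^\star,y)=0$, and obtain the Jacobian by differentiating $\Psi^\phi=\phi_2^\top-(U^\phi)^\top(V^\phi)^{-1}$ and pruning the product-rule expansion at $x^\star$, which is exactly how the paper isolates $\Psi_1$, $W^\phi$, $S^\phi$ and $T^\phi$. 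You are in fact slightly more explicit than the paper about the role of the symmetry of $F_1$ in the cancellations, which is a point the paper uses tacitly.
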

\begin{proof}
    After the change of variable, we have the inner problem $f(\phi(z, y), y)$ and the outer problem $g(\phi(z(y), y), y)$. Using the chain rule, the hypergradient is then:
    \begin{equation}\label{eq:hyper_phi}
    \begin{aligned}
        \nabla h(y) &= \nabla g(\phi(z(y), y), y)\\
        &= g_2(\phi(z(y), y), y) + [\partial z(y)]^\top [\phi_1(z(y), y)]^\top g_1(\phi(z(y), y), y) + [\phi_2(z(y), y)
        ]^\top g_1(\phi(z(y), y)).
    \end{aligned}
    \end{equation}
    
    Now, we turn to compute $\partial z(y)$. Computing the gradient of the inner problem w.r.t. $z$ and denoting the fixed point as $(z^\star, y)$ with $z^\star \coloneqq z^\star(y)$, we have the new equation:
    \begin{equation*}
        \tilde{F}(z^\star, y) \coloneqq \phi_1(z^\star, y)^\top F(\phi(z^\star, y), y) = 0.
    \end{equation*}
    Viewing it as a fixed-point equation on $(z^\star, y)$, we can apply IFT to get $\partial z^\star(y)$:
    \begin{equation*}
    \begin{aligned}
        \partial z^\star(y) &= -[\tilde{F}_1(z^\star, y)]^{-1}\tilde{F}_2(z^\star, y),\\
        \text{where }\tilde{F}_1(z^\star, y)&= \phi_{11}(z^\star, y)F(\phi(z^\star, y), y) + [\phi_1(z^\star, y)]^\top F_1(\phi(z^\star, y), y) \phi_1(z^\star, y),\\
        \tilde{F}_2(z^\star, y) &= [\phi_{21}(z^\star, y)]^\top F(\phi(z^\star, y), y) + [\phi_1(z^\star, y)]^\top F_1(\phi(z^\star, y), y) \phi_2(z^\star, y) + [\phi_1(z^\star, y) ]^\top F_2(\phi(z^\star, y), y).\\
    \end{aligned}
    \end{equation*}
    Plugging it back to Eq.~\eqref{eq:hyper_phi}, we can get $\Omega^{\phi}(\phi(z^\star(y), y), y)$. Because $\phi(z^\star(y), y) = x^\star(y)$ and $\phi$ is bijective, we recall the notation that $\phi(x^\star(y), y) \coloneqq \phi(\phi^{-1}(x^\star(y), y), y)$. $\Omega^\phi(x, y)$ is achieved.

    Now we turn to compute the Jacobian of $\Omega^\phi(x, y)$ w.r.t. $x$. We introduce a new notation $\tD_1 F(x, y)$ meaning the derivative of $F$ w.r.t. the first variable.
    \begin{equation}
    \begin{aligned}
        \Omega_1^\phi(x, y) &= g_{21}(x, y) + \Psi_1^{\phi}(x, y)g_1(x, y) + \Psi^\phi(x, y) g_{11}(x, y),\\
    \text{where } \Psi_1^\phi(x, y) &= \tD_1\phi_{2}^\top(z, y) - [U^\phi_1(x, y)]^\top[V^\phi(x, y)]^{-1} + [U^\phi(x, y)]^\top [V^\phi(x, y)]^{-1} V_1^\phi(x, y) [V^\phi(x, y)]^{-1},\\
    U^\phi_1(x, y) &= F_{12}(x, y)  + F_{11}(x, y) \phi_2(z, y) + F_1(x, y) \tD_1\phi_{2}(z, y) + [\phi_1(z, y)]^{-1}\bigl([\phi_{12}(z, y)]^\top F_1(x, y)\bigr)^\top\\
    & + \tD_1\phi_1^{-1}(x, y)\phi_{21}(z, y) F(x, y) + \phi^{-1}(x, y)\tD_1\phi_{21}(z, y) F(x, y),\\
    V^\phi_1(x, y) &= F_{11}(x, y) + [\phi_1(z, y)]^{-\top}\phi_{11}(z, y) F_1(x, y)[\phi_1(z, y)]^{-1}\\
    & + \tD_1\phi_1^{-\top}(x, y)\phi_{11}(z, y) F(x, y)[\phi_1(z, y)]^{-1} 
    + [\phi_1(z, y)]^{-\top} \tD_1 \phi_{11} (z, y) F(x, y)[\phi_1(z, y)]^{-1} \\
    &+ [\phi_1(z, y)]^{-\top}\phi_{11}(z, y)F(x, y)\tD_1\phi^{-1}_1(z, y).
    \end{aligned}
    \label{eq:omega_1_xy}
    \end{equation}
    Note that this is true for any $(x, y)$. However, the terms $\tD_1\phi_{2}$, $\tD_1 \phi_1^{-1}$, $\tD_1\phi_{21}$, $\tD_1\phi_{11}$ are  complicated.
    Because we have a inversion $\phi(x, y) = \phi(\phi^{-1}(x, y), y)$ depending on $x$.
    Fortunately, when we evaluate it at the root $(x^\star, y)$ where $x^\star \coloneqq x^\star(y) = \phi(z^\star, y)$, $F(x^\star, y)= 0$. Thus, we have simplified $U_1^\phi$ and $V_1^\phi$:
    \begin{equation}
    \begin{aligned}
        U_1^\phi(x^\star, y) &= F_{12}(x^\star, y)  
        + F_{11}(x^\star, y) \phi_2(z^\star, y) 
        + F_1(x^\star, y) \tD_1\phi_{2}(z^\star, y) 
        + [\phi_1(z^\star, y)]^{-1}\bigl[[\phi_{12}(z^\star, y)]^\top F_1(x^\star, y)\bigr]^\top,\\
        V_1^\phi(x^\star, y) &= F_{11}(x^\star, y) 
        +[\phi_1(z^\star, y)]^{-\top}\phi_{11}(z^\star, y) F_1(x^\star, y)[\phi_1(z^\star, y)]^{-1}.
    \end{aligned}
    \end{equation}
    Plugging this back to $\Psi_1^\phi(x^\star, y)$, we have:
    \begin{equation}
    \begin{aligned}
        \Psi_1^\phi(x^\star, y) =& -\bigl(F_{12}(x^\star, y) + F_{11}(x^\star, y) \phi_2(z^\star, y) + [\phi_1(z^\star, y)]^{-1}[\phi_{12}(z^\star, y)]^\top F_1(x^\star, y)\bigr)^\top [F_1(x^\star, y)]^{-1}\\
         +&\bigl(F_2(x^\star, y) + F_1(x^\star, y) \phi_2(z^\star, y)\bigr)^\top 
        [F_1(x^\star, y)]^{-1}F_{11}(x^\star, y)[F_1(x^\star, y)]^{-1}\\
        +&\bigl(F_2(x^\star, y) + F_1(x^\star, y) \phi_2(z^\star, y)\bigr)^\top [\phi_1(z^\star, y)]^{-\top} \phi_{11}(z^\star, y) F_1(x^\star, y)[\phi_1(z^\star, y)]^{-1}[F_1(x^\star, y)]^{-1}\\
        =& \underbrace{-[F_{12}(x^\star, y)]^{\top}[F_1(x^\star, y)]^{-1}
        +\bigl([F_1(x^\star, y)]^{-1}F_{11}(x^\star, y)[F_1(x^\star, y)]^{-1}F_2(x^\star, y)\bigr)^\top}_{\Psi_1(x^\star, y)}\\
        +& \underbrace{-[F_1(x^\star, y)]^{-1} [\phi_1(z^\star, y)]^{-\top} [\phi_{12}(z^\star, y)]^\top F_1(x^\star, y)}_{W^\phi(y)}\\
        +& \underbrace{\bigl([\phi_1(z^\star, y)]^{-\top}\phi_{11}(z^\star, y)F_1(x^\star, y)[\phi_1(z^\star, y)]^{-1}[F_1(x^\star, y)]^{-1}\phi_2(z^\star, y)\bigr)^\top}_{S^\phi(y)}\\
        +& \underbrace{\bigl([F_1(x^\star, y)]^{-1}[\phi_1(z^\star, y)]^{-\top}\phi_{11}(z^\star, y)F_1(x^\star, y)[\phi_1(z^\star, y)]^{-1}[F_1(x^\star, y)]^{-1}F_2(x^\star, y)\bigr)^{\top}}_{T^\phi(y)}.
    \end{aligned}
    \label{eq:psi_1_xy}
    \end{equation}
    The $F_1(x^\star, y) \tD_1 \phi_2(z^\star, y)$ is canceled out in the first equation.
    The $F_{11}(x^\star, y) \phi_2(z^\star, y)$ is canceled out in the second equation.
    We want to emphasize that the above equation is only true at $(x^\star, y)$.
\end{proof}

\begin{aproposition}\label{prop7}
    Assume $x,y \in \mathbb{R}$ and 
    that $g(x, y)$ and $F(x, y)$ are linear w.r.t. $x$ but arbitrary on $y$. Then $\Omega^\phi$ is super-efficient if and only if for all $y$, 
    \begin{equation*}
        \frac{\phi_{12}(z^\star(y), y)}{\phi_1(z^\star(y), y)} - \frac{\phi_2(z^\star(y), y) \phi_{11}(z^\star(y), y)}{[\phi_1(z^\star(y), y)]^2}
        - \frac{F_2(x^\star(y), y) \phi_{11}(z^\star(y), y)}{F_1(x^\star(y), y) [\phi_1(z^\star(y), y)]^2}
        = \frac{g_{12}(x^\star(y), y)}{g_1(x^\star(y), y)} -\frac{F_{12}(x^\star(y), y)}{F_1(x^\star(y), y)}  
    %\label{eq:exp_1d}
    \end{equation*}
\end{aproposition}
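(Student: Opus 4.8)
The plan is to reduce super-efficiency to the vanishing of a single scalar and then to expand that scalar with the formulas of Prop.~\ref{prop:rep_est}, specialized to the scalar, affine-in-$x$ setting. First I would note that in the scalar case $d_x=d_y=1$ the operator norm is just the absolute value, so $C_y(\Omega^\phi)=\opnorm{\Omega_1^\phi(\x[\star][y],y)}=|\Omega_1^\phi(\x[\star][y],y)|$. By the definition of super-efficiency (equivalently, by Prop.~\ref{prop:efficiency}), $\Omega^\phi$ is super-efficient if and only if $\Omega_1^\phi(\x[\star][y],y)=0$ for every $y$. The whole proof is therefore a matter of writing this quantity explicitly and rearranging.

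Next I would invoke the exact identity $\Omega_1^\phi(x^\star,y)=D(y)+\Psi_1^\phi(x^\star,y)g_1(x^\star,y)$ from Prop.~\ref{prop:rep_est}, together with $\Psi_1^\phi=\Psi_1+W^\phi+S^\phi+T^\phi$, and then apply the two linearity hypotheses. Because $g$ is affine in $x$ we have $g_{11}=0$, so $D(y)=g_{21}=g_{12}$ (mixed partials coincide in the scalar case). Because $F$ is affine in $x$ we have $F_{11}=0$, which kills the second summand of $\Psi_1$ and leaves $\Psi_1(x^\star,y)=-F_{12}/F_1$. In the scalar setting every transpose is trivial and every matrix inverse becomes a reciprocal, so the three correction terms collapse to $W^\phi=-\phi_{12}/\phi_1$, $S^\phi=\phi_{11}\phi_2/\phi_1^2$, and $T^\phi=\phi_{11}F_2/(\phi_1^2 F_1)$, all $\phi$-derivatives being read at $(z^\star(y),y)$ and the $F,g$-derivatives at $(x^\star(y),y)$.

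Collecting these contributions gives
\[
\Omega_1^\phi(x^\star,y)=g_{12}+\Bigl(-\tfrac{F_{12}}{F_1}-\tfrac{\phi_{12}}{\phi_1}+\tfrac{\phi_{11}\phi_2}{\phi_1^2}+\tfrac{\phi_{11}F_2}{\phi_1^2 F_1}\Bigr)g_1 .
\]
Setting the left-hand side to zero, dividing by $g_1\neq0$, and moving the $\phi$-free terms to the right reproduces exactly the advertised identity \eqref{eq:exp_1d}. Since every step is an equivalence (reduction to $\Omega_1^\phi=0$, the exact formula, the algebraic simplifications, and division by the nonzero $g_1$), both implications of the ``if and only if'' follow at once. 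I expect the main obstacle to be the careful scalar reduction of the matrix-valued $W^\phi,S^\phi,T^\phi$: one must keep the evaluation points straight (the $\phi$-derivatives live at $z^\star$ while the $F$- and $g$-derivatives live at $x^\star$, with $\phi(z^\star,y)=x^\star$) and verify that the $F_{11}=0$ and $g_{11}=0$ cancellations leave precisely these three terms and no spurious higher-order contribution; a secondary, minor point is justifying the division by $g_1$, which is tacitly assumed nonzero through the ratio $g_{12}/g_1$ appearing in the statement.
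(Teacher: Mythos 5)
Your proposal is correct and follows essentially the same route as the paper's proof: reduce super-efficiency to $\Omega_1^\phi(x^\star(y),y)=0$, apply the decomposition $\Omega_1^\phi = D + (\Psi_1 + W^\phi + S^\phi + T^\phi)g_1$ from Prop.~\ref{prop:rep_est}, use $g_{11}=F_{11}=0$ and the scalar collapse of inverses and transposes to obtain $D=g_{12}$, $\Psi_1=-F_{12}/F_1$, $W^\phi=-\phi_{12}/\phi_1$, $S^\phi=\phi_{11}\phi_2/\phi_1^2$, $T^\phi=\phi_{11}F_2/(\phi_1^2F_1)$, then divide by $g_1$ and rearrange. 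Your explicit remark that $g_1\neq 0$ is tacitly required is a fair point the paper leaves implicit.
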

\begin{proof}
Because $g$ and $F$ are linear on $x$, $g_{11}(x, y)$ and $F_{11}(x, y)$ are $0$.
The inversion in 1D case equals division and the transpose can be ignored.
Using these two facts and Prop.~\ref{prop6}, we have:
\begin{equation*}
\begin{aligned}
    D(y) &= g_{12}(x^\star(y), y)\\
    \Psi_1(x^\star(y), y) &= - \frac{F_{12}(x^\star(y), y)}{F_1(x^\star(y), y)}\\
    W^\phi(y) &=  - \frac{\phi_{12}(z^\star(y), y)}{\phi_1(z^\star(y), y)}\\
    S^\phi(y) &= \frac{\phi_{11}(z^\star(y), y)\phi_2(z^\star(y), y)}{[\phi_1(z^\star(y), y)]^2}\\
    T^\phi(y) &=\frac{\phi_{11}(z^\star(y), y)F_2(x^\star(y), y)}{F_1(x^\star(y), y) [\phi_{1}(z^\star(y), y)]^2}\\
\end{aligned}
\end{equation*}
Plugging everything into $\Psi_1^\phi(x^\star(y), y)$, we have:
\begin{equation*}
\begin{aligned}
    \Psi_1^\phi(x^\star(y), y) &= D(y) + \bigl(\Psi_1(x^\star(y), y) 
    + W^\phi(y) 
    + S^\phi(y) 
    + T^\phi(y)\bigr)g_1(x^\star(y), y)\\
    &= g_{12}(x^\star(y), y)
    - \frac{F_{12}(x^\star(y), y)}{F_1(x^\star(y), y)}g_1(x^\star(y), y)
    - \frac{\phi_{12}(z^\star(y), y)}{\phi_1(z^\star(y), y)} g_1(x^\star(y), y)\\
     &+\frac{\phi_{11}(z^\star(y), y)\phi_2(z^\star(y), y)}{[\phi_1(z^\star(y), y)]^2}g_1(x^\star(y), y)
     +\frac{\phi_{11}(z^\star(y), y)F_2(x^\star(y), y)}{F_1(x^\star(y), y) [\phi_{1}(z^\star(y), y]^2}g_1(x^\star(y), y)
\end{aligned}
\end{equation*}
Setting it to $0$ and dividing both side with $g_1(x^\star(y), y)$, we get what we want.

\end{proof}

\begin{aproposition}\label{prop8}
If $g$ is affine on $x$ (see Prop.\ref{prop3}), then super-efficient reparameterization $\phi(z, y)=\phi_0(z)$ exists and defines locally a 2-parameters family of maps.
If furthermore $F$ is linear of $x$, i.e. $F\xy = a(y) x + b$, where $a: \R \rightarrow \R$ and $b \in \R$, these super-efficient maps are of the form $\phi_0(z) = \alpha e^{\beta z} $ for $(\alpha,\beta) \in \R$.
\end{aproposition}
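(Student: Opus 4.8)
The plan is to collapse the super-efficiency condition to a single scalar second-order ODE on $\phi_0$, apply Cauchy--Lipschitz for the existence/counting statement, and integrate that ODE explicitly once $F$ is affine.

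First I would specialize Prop.~\ref{prop6} to the scalar case with the restricted change of variable $\phi(z,y)=\phi_0(z)$. Since $g(x,y)=ax+m(y)$ is affine in $x$, one has $g_{11}=g_{21}=0$, so $D(y)=0$ (this is the affine-$g$ case of Prop.~\ref{prop3}) and therefore $\Omega^\phi_1(x^\star,y)=\Psi^\phi_1(x^\star,y)\,a$. In the non-degenerate regime $a\neq 0$, super-efficiency is equivalent to $\Psi^\phi_1(x^\star(y),y)=0$ for all $y$. Because $\phi_0$ does not depend on $y$, we get $\phi_2=\phi_{12}=0$, hence $W^\phi=S^\phi=0$ and $\Psi^\phi_1=\Psi_1+T^\phi$. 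Writing the scalar forms $\Psi_1=-F_{12}/F_1+F_{11}F_2/F_1^2$ and $T^\phi=\phi_0''F_2/([\phi_0']^2F_1)$, the condition reduces to
\[
\frac{\phi_0''(z^\star(y))}{[\phi_0'(z^\star(y))]^2}=\frac{F_{12}}{F_2}-\frac{F_{11}}{F_1},
\]
all $F$-terms evaluated at $(x^\star(y),y)$; this is Eq.~\eqref{eq:exp_1d} of Prop.~\ref{prop7} augmented by the curvature term $-F_{11}/F_1$ that was dropped there under $F_{11}=0$.

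The key step, which I expect to be the main obstacle, is to recognize that these pointwise constraints (indexed by $y$, yet with $\phi_0$ sampled at the $\phi_0$-dependent abscissa $z^\star(y)=\phi_0^{-1}(x^\star(y))$) actually encode an autonomous ODE in the free variable $z$. The point is that $x^\star(y)$ is determined by $F$ alone and is locally invertible as soon as $\partial x^\star(y)=-F_2/F_1\neq 0$; letting $Y$ be its local inverse, the right-hand side is a function of $x=x^\star(y)$ only, namely $H(x):=\big(F_{12}/F_2-F_{11}/F_1\big)(x,Y(x))$, and crucially $H$ does not involve $\phi_0$. Substituting $x=\phi_0(z)$ turns the constraint into the second-order ODE
\[
\phi_0''(z)=[\phi_0'(z)]^2\,H(\phi_0(z)).
\]
Smoothness of $F$ and of $x^\star$ makes $H$ smooth, so Cauchy--Lipschitz yields, for each admissible initial datum $(\phi_0(z_0),\phi_0'(z_0))$ with $\phi_0'(z_0)\neq 0$ (ensuring local bijectivity of $\phi_0$), a unique local solution; this is precisely the advertised two-parameter family.

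For the second statement I would substitute $F(x,y)=a(y)x+b$ into $H$. Then $F_{11}=0$, $F_{12}=a'(y)$ and $F_2=a'(y)x$, so in the non-degenerate regime $a'(y)\neq 0$, $x^\star\neq 0$ the right-hand side collapses to $H(x)=1/x$, independently of the profile $a$. The ODE becomes $\phi_0\,\phi_0''=[\phi_0']^2$; dividing by $\phi_0\phi_0'$ gives $(\log\phi_0')'=(\log\phi_0)'$, whence $\phi_0'=\beta\phi_0$ and $\phi_0(z)=\alpha e^{\beta z}$ for $(\alpha,\beta)\in\R^2$, a form one verifies directly. The only genuinely delicate point in the whole argument is the legitimacy of the change of independent variable from $y$ to $z$ through the inverse of $x^\star$: this local invertibility (equivalently $F_2\neq 0$, the same condition that keeps the denominators finite) is what makes the reduction to an honest ODE valid.
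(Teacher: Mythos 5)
Your proof is correct and follows essentially the same route as the paper: reduce super-efficiency (with $D=0$) to the scalar condition $\Psi_1+T^\phi=0$, recast it as an autonomous second-order ODE in $\phi_0$, invoke Cauchy--Lipschitz for the local two-parameter family, and integrate explicitly when $F$ is affine to get $\phi_0(z)=\alpha e^{\beta z}$. Two of your refinements are actually improvements over the paper's own argument: you retain the curvature term $-F_{11}/F_1$ (the paper cites Prop.~7, whose hypothesis that $F$ is linear in $x$ is not assumed in the first part of the statement), and you make explicit why the $y$-indexed constraints form a genuine ODE in $z$, via local invertibility of $y\mapsto x^\star(y)$ under $F_2\neq 0$ --- a step the paper's substitution $\phi_{0,1}=\mathcal{G}(\phi_0)$ leaves implicit.
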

\begin{proof}
Because $g$ is affine on $x$, we have $g_{12}(x, y) = 0$.
$\phi(z, y) = \phi_0(z)$ reads that $\phi_{12}(z, y) = \phi_2(z, y)=0$.
We first show the existence of $\phi_0(z)$. Denoting that $phi_0(z^\star(y)) = x^\star(y)$ and from Prop.~\ref{prop7}, we have:
\begin{equation*}
\frac{F_2(x^\star(y), y) \phi_{0,11}(z^\star(y))}{F_1(x^\star(y), y) [\phi_{0, 1}(z^\star(y), y)]^2}
        = \frac{g_{12}(x^\star(y), y)}{g_1(x^\star(y), y)} -\frac{F_{12}(x^\star(y), y)}{F_1(x^\star(y), y)}  
\end{equation*}
This equation can be reformulated to:
\begin{equation*}
\begin{aligned}
    \phi_{0, 11} &= \mathcal{F}(\phi_0, \phi_{0, 1})\\
    \text{where } \mathcal{F}(\phi_0, \phi_{0, 1}) &= \left(\frac{g_{12}(\phi_0(z^\star(y)), y)}{g_1(\phi_0(z^\star(y)), y)} -\frac{F_{12}(\phi_0(z^\star(y)), y)}{F_1(\phi_0(z^\star(y)), y)} \right)\frac{F_1(\phi_0(z^\star(y)), y)[\phi_{0, 1}(z^\star(y), y)]^2}{F_2(\phi_0(z^\star(y)), y)}
\end{aligned}
\end{equation*}
Denoting $\phi_{0, 1} = \mathcal{G}(\phi)$ yields:
\begin{equation*}
\begin{aligned}
    &\phi_{0, 11} = \mathcal{G}(\phi) \mathcal{G}'(\phi) \\
    \Rightarrow& \mathcal{G}(\phi) \mathcal{G}'(\phi) = \mathcal{F}(\phi, \mathcal{G})\\
    \Rightarrow& \mathcal{G}'(\phi) = \mathcal{\tilde{F}}(\phi, \mathcal{G}(\phi)).
\end{aligned}
\end{equation*}
with $\mathcal{\tilde{F}}(\phi, \mathcal{G}(\phi)) = \mathcal{F}(\phi, \mathcal{G}) / \mathcal{G}(\phi)$. This becomes a first-order differential equation about $\mathcal{G}$. The Cauchy-Lipschtz theorem shows the existence of $\mathcal{G}$ with one parameter. Performing another integration we can get $\phi_0$ with one additional parameter.

If $F(x, y) = a(y)x + b$, we have $F_1(x, y) = a(y)$, $F_{2}(x, y) = a'(y)x$ and $F_{12}(x, y) =a'(y)$.
Plugging everything into Prop.~\ref{prop7}, we have:
\begin{equation*}
\begin{aligned}
    &\frac{a'(y) \phi(z, y) \phi_{11}(z, y)}{a(y) [\phi_1(z, y)]^2} = \frac{a'(y)}{a(y)}\\
    \Rightarrow & \phi(z, y) \phi_{11}(z, y) = [\phi_1(z, y)]^2
\end{aligned}
\end{equation*}
The solution of this second-order ODE should be $\phi(z, y) = \alpha e^{\beta z}$ for $(\alpha, \beta) \in \R^2$.
\end{proof}

\begin{aproposition}\label{prop9}
    The estimator $\Omega_{\text{\upshape loc}}^\psi$ is consistent and one has
    $$
        C_y( \Omega_{\text{\upshape loc}}^\psi )
        =
        C_y( \Omega^{\psi_{x^\star(y),y}} ).
    $$
\end{aproposition}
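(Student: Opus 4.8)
The plan is to exploit the fact that the reparameterized formula $\Omega^\phi$ is consistent for \emph{every} admissible change of variable $\phi$ (Prop.~\ref{prop6}), which forces the dependence of $\Omega_{\text{loc}}^\psi$ on the inner variable through its ``parameter slot'' to vanish at the root. First I would settle consistency: by definition $\Omega_{\text{loc}}^\psi(x^\star(y), y) = \Omega^{\psi_{x^\star(y), y}}(x^\star(y), y)$, and since $\psi_{x^\star(y), y}$ is an admissible (smooth, bijective with $R$ invertible and $Q$ bijective) change of variable, Prop.~\ref{prop6} gives $\Omega^{\psi_{x^\star(y), y}}(x^\star(y), y) = \nabla h(y)$. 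Hence $\Omega_{\text{loc}}^\psi$ is consistent.

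For the efficiency constant, I would make explicit the two places where the inner variable enters $\Omega_{\text{loc}}^\psi(x, y) = \Omega^{\psi_{x, y}}(x, y)$: once as the fixed parameter $a = x$ inside $\psi_{a, y}$, and once as the evaluation point of $\Omega^\phi(\cdot, y)$. Decoupling the parameter $a$ from the evaluation point, the chain rule gives, at any $(x, y)$,
\begin{equation*}
\partial_x \big[\Omega_{\text{loc}}^\psi(x, y)\big] = \underbrace{\partial_a \Omega^{\psi_{a, y}}(x, y)\big|_{a = x}}_{(\mathrm{I})} + \underbrace{\Omega_1^{\psi_{x, y}}(x, y)}_{(\mathrm{II})},
\end{equation*}
where $(\mathrm{II})$ is the Jacobian of the fixed-$\phi$ formula $\Omega^{\psi_{x,y}}$ with respect to its first argument. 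Note that the second subscript $\bar y = y$ is held fixed under $\partial_x$, so it contributes no extra term.

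The crux is to show that $(\mathrm{I})$ vanishes at $x = x^\star(y)$. Here I would use that, for each fixed $a$ in a neighborhood of $x^\star(y)$, the map $\psi_{a, y}$ remains an admissible change of variable, so Prop.~\ref{prop6} yields the identity $\Omega^{\psi_{a, y}}(x^\star(y), y) = \nabla h(y)$ \emph{independently of} $a$. Differentiating this identity in $a$ gives $\partial_a \Omega^{\psi_{a, y}}(x^\star(y), y) = 0$ for all such $a$, and in particular at $a = x^\star(y)$, which is exactly $(\mathrm{I})$ evaluated at the root. Consequently $\partial_x \Omega_{\text{loc}}^\psi(x^\star(y), y) = \Omega_1^{\psi_{x^\star(y), y}}(x^\star(y), y)$, and taking operator norms yields $C_y(\Omega_{\text{loc}}^\psi) = \opnorm{\Omega_1^{\psi_{x^\star(y), y}}(x^\star(y), y)} = C_y(\Omega^{\psi_{x^\star(y), y}})$, as claimed.

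The main obstacle is purely bookkeeping: cleanly separating the two roles played by $x$ and justifying that the total $x$-derivative splits as above, which requires joint smoothness of $(a, x) \mapsto \Omega^{\psi_{a, y}}(x, y)$ (guaranteed by the smoothness and bijectivity hypotheses on $\psi$), together with the validity of differentiating the consistency identity in the parameter $a$ over a neighborhood on which $\psi_{a, y}$ stays admissible. Once this is granted, the conceptual content---that parameterization-independence of consistency annihilates the parameter-slot contribution---makes the equality of efficiency constants immediate.
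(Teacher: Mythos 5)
Your proof is correct, and it reaches the key cancellation by a genuinely different route from the paper's. Both arguments rest on the same decomposition of $\partial_x \Omega_{\text{loc}}^\psi(x,y)$ into a contribution through the parameter slot of $\psi_{a,y}$ and the Jacobian $\Omega_1^{\psi_{x,y}}(x,y)$ of the frozen-parameter formula; the difference is how the parameter-slot term is shown to vanish at the root. The paper does this by direct computation: it re-derives the Jacobian terms ($\tD_1\phi_2$, $\tD_1\phi_1^{-1}$, $\tD_1\phi_{21}$, $\tD_1\phi_{11}$) including the extra dependence on $x$ through the parameter, and observes that every additional contribution carries a factor of $F(x^\star(y),y)=0$. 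You instead invoke the fact that $\Omega^{\psi_{a,y}}$ is consistent for \emph{every} admissible $a$, so $a \mapsto \Omega^{\psi_{a,y}}(x^\star(y),y)$ is constant and its derivative in $a$ vanishes identically; this yields the same conclusion without inspecting a single term. Your argument is more conceptual and somewhat more robust --- it applies verbatim to any smooth family of admissible changes of variable and does not require tracking which terms are annihilated by $F=0$ --- at the price of having to justify the joint smoothness of $(a,x)\mapsto \Omega^{\psi_{a,y}}(x,y)$ and the admissibility of $\psi_{a,y}$ for $a$ in a neighborhood of $x^\star(y)$, both of which you correctly flag and which hold under the paper's standing assumptions. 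The consistency part of your proof coincides with the paper's.
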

\begin{proof}
    As Prop.~\ref{prop6} is true for any bijective $\phi$, so $\Omega^{\psi}_{\text{loc}}$ is consistent.
    
    We then show $C_y( \Omega_{\text{\upshape loc}}^\psi ) = C_y( \Omega^{\psi_{x^\star(y),y}} )$.
    We use the same notation $\tD_1F(x, y)$ as in the proof of Prop.~\ref{prop6}.
    The derivations in Eq.~\eqref{eq:omega_1_xy} still holds when computing $\tD_1 \Omega^\psi_{\text{loc}}(x, y)$ and $\tD_1 \Omega^{\psi_{x^\star(y), y}}(x, y)$.
    Note that there are differences on $\tD_1 \phi_2$, $\tD_1 \phi_1^{-1}$, $\tD_1 \phi_{21}$, $\tD_1 \phi_{11}$ and $\tD_1 \phi_2$.
    In the case $\Omega^\psi_{\text{loc}}$, the derivative considers the dependency on $x$ from the parameter of $\psi$, the change of variable $(x, y)$ and the inversion of $\psi^{-1}$,
    while $\Omega^{\psi_{x^\star(y), y}}$ only considers the latter two.
    However, they agree when evaluating at $(x^\star(y), y)$ thanks to $F(x^\star(y), y) =0$.
    
\end{proof}

\begin{aproposition}\label{prop10}
    Let $\phi(z, y) =  \psi_{x, \bar{y}}(z, y) = R(x, y)Q(z, \bar{y}) + x$,  denoting $x^\star \coloneqq x^\star(y)$ and $\psi_{x, \bar{y}}(z^\star, y) = x^\star$.
    one has:
% $C_y^\phi$, $C_{xx}^\phi$ and $C_x^{\phi}$ here for the later discussion:
\begin{equation*}
\begin{aligned}
 W^{\phi}(y) =& - [F_1(x^\star, y)]^{-1} [R(x, y)]^{-\top}[Q_1(z^\star, \bar{y})]^{-\top} Q_1(z^\star, \bar{y}) [R_2(x^\star, y)]^{\top}F_1(x^\star, y),\\
S^{\phi}(y) =& \bigl([R(x, y)]^{-\top}[Q_1(z^\star, \bar{y})]^{-\top}Q_{11}(z^\star, \bar{y}) R(x, y) F_1(x^\star, y)\\
&[Q_1(z^\star, \bar{y})]^{-1} [R(x, y)]^{-1}[F_1(x^\star, y)]^{-1}Q(z^\star, \bar{y})R_2(x, y)\bigr)^\top, \\
 T^{\phi}(y) =&\bigl([F_1(x^\star, y)]^{-1}[R(x, y)]^{-\top}[Q_1(z^\star, \bar{y})]^{-\top}[Q_{11}(z^\star, \bar{y})]\\
 &R(x, y)F_1(x^\star, y)[Q_1(z^\star, \bar{y})]^{-1} [R(x, y)]^{-1}[F_1(x^\star, y)]^{-1} F_2(x^\star, y)\bigr)^\top,
\end{aligned}
\end{equation*}
for any $(x, \bar{y})$.
\end{aproposition}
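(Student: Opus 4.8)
The plan is to substitute the separable ansatz $\phi=\psi_{x,\bar y}$ directly into the general reparameterization formulas for $W^\phi$, $S^\phi$ and $T^\phi$ established in Prop.~\ref{prop6}. Those expressions are already reduced using $F(x^\star,y)=0$, so they are written purely in terms of the direct derivatives $\phi_1,\phi_2,\phi_{11},\phi_{12}$ evaluated at $(z^\star,y)$ (no $\phi^{-1}$-type derivatives survive at the root). Since $(x,\bar y)$ are fixed parameters in $\phi(z,y)=R(x,y)Q(z,\bar y)+x$, the dependence on the change-of-variable arguments enters only through $Q(\cdot,\bar y)$ in the first slot and through $R(x,\cdot)$ in the second, so every required derivative factorizes cleanly and the statement ``for any $(x,\bar y)$'' is automatic, $F(x^\star,y)=0$ being the only root-specific fact used in Prop.~\ref{prop6}.

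First I would compute the elementary derivatives at $(z^\star,y)$, where $z^\star$ is fixed by $R(x,y)Q(z^\star,\bar y)+x=x^\star$. Differentiating in $z$ gives $\phi_1=R Q_1$ and $\phi_{11}=R Q_{11}$; differentiating in $y$ gives $\phi_2=R_2 Q$ and $\phi_{12}=R_2 Q_1$, where $R_2=\partial_y R(x,y)$ is the derivative of the matrix $R$ and $Q_1,Q_{11}$ are the first and second $z$-derivatives of $Q$. The algebraic engine of the whole computation is the factorization of the inner Jacobian, $\phi_1^{-1}=Q_1^{-1}R^{-1}$ and $\phi_1^{-\top}=R^{-\top}Q_1^{-\top}$, which threads the $R$ and $Q_1$ factors through every product.

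Next I would insert these into each target. For $W^\phi=-F_1^{-1}\phi_1^{-\top}\phi_{12}^\top F_1$ the substitution is immediate once $\phi_1^{-\top}$ and $\phi_{12}^\top$ are expanded. For $S^\phi$ and $T^\phi$ the nontrivial object is $\phi_{11}F_1$: here $\phi_{11}=R Q_{11}$ must be contracted with $F_1$ and then sandwiched between $\phi_1^{-\top}$ on the left and $\phi_1^{-1}F_1^{-1}$ (times $\phi_2$ or $F_2$) on the right, so the $R$'s and $Q_1$'s regroup into the advertised pattern $R^{-\top}Q_1^{-\top}(Q_{11}RF_1)Q_1^{-1}R^{-1}$, after which the outer transpose distributes over the product to yield the claimed forms.

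The step I expect to be the main obstacle is the bookkeeping of the Hessian contractions $\phi_{11}F_1$ and $\phi_{12}^\top$. Because $\phi_{11}$ and $\phi_{12}$ are three-index objects, one must fix once and for all which slot is contracted with $F_1$, which index carries the output ($x$-space) direction acted on by $R$ and $\phi_1^{-\top}$, and how the transpose permutes these slots, so that $R$, $Q_1$, $Q_{11}$ and $R_2$ land in the correct order and at the correct evaluation points ($R,R_2$ at $(x,y)$, $Q,Q_1,Q_{11}$ at $(z^\star,\bar y)$, and $F_1,F_2$ at $(x^\star,y)$). Once this indexing convention is pinned down consistently with the scalar reduction of Prop.~\ref{prop7}, the remaining manipulations are purely mechanical matrix algebra.
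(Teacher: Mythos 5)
Your proposal matches the paper's proof: both compute the elementary derivatives of the separable ansatz with $(x,\bar y)$ held fixed ($\phi_1=RQ_1$, $\phi_{11}$, $\phi_2$, $\phi_{12}$ in terms of $Q$, $Q_1$, $Q_{11}$, $R$, $R_2$) and substitute them into the general formulas for $W^\phi$, $S^\phi$, $T^\phi$ from the reparameterization proposition, using $\phi_1^{-1}=Q_1^{-1}R^{-1}$ to regroup the factors. The only discrepancies are notational orderings in the tensor contractions ($R_2Q$ versus $QR_2$, etc.), which you correctly flag as a bookkeeping convention and which do not affect the final expressions.
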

\begin{proof}
    Because $\psi_{x, \bar{y}}$ is a special case of $\phi$, we can apply the results from Prop.~\ref{prop6} and \ref{prop9}. Computing $W^\phi(y)$, $S^\phi(y)$ and $T^\phi(y)$ require $\phi_1$, $\phi_{12}$, $\phi_2$ and $\phi_{11}$. Viewing $x, \bar{y}$ as constant, we have:
    \begin{equation*}
    \begin{aligned}
        \phi_1(z, y) &= \psi_{1, x, \bar{y}}(z, y) = R(x, y) Q_1(z, \bar{y})\\
        \phi_{12}(z, y) &= \psi_{12, x, \bar{y}}(z, y)=
        Q_1(z, \bar{y}) R_2(x, y)\\
        \phi_2(z, y) & = \psi_{2, x, \bar{y}}(z, y) =  Q(z, \bar{y}) R_2(x, y)\\
        \phi_{11}(z, y) &= \psi_{11, x, \bar{y}}(z, y) = Q_{11}(z, \bar{y})R(x, y).
    \end{aligned}
    \end{equation*}
    Plugging them into Prop.~\ref{prop6} to compute $W^\phi(y)$, $S^\phi(y)$ and $T^\phi(y)$, we get the desired results.
\end{proof}

\begin{aproposition}[Newton-like reparameterization]\label{prop11}
    We assume $g$ is of the form $g(x, y) = ax + m(y)$(see Prop.~\ref{prop3} for a discussion).
    Let $\psi_{x, \bar{y}}(z, y)=R(x, y)Q(z, \bar{y}) + x$ and let $F(x, y)$ be bijective on $x$ for all $y$. 
    For $R(x, y) = [F_1(x, y)]^{-1}$, $Q(z, \bar{y}) = -F(z, \bar{y})$, $\Omega^{\psi}_{\text{\upshape loc}}$ is super efficiency.
\end{aproposition}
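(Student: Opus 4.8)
The plan is to reduce super-efficiency of the localized estimator to a single Jacobian computation for a \emph{frozen} change of variable, and then show that the Newton-type choice of $(R,Q)$ forces that Jacobian to vanish. First I would invoke Prop.~\ref{prop9} to replace $C_y(\Omega^\psi_{\mathrm{loc}})$ by $C_y(\Omega^{\psi_{x^\star(y),y}})$. This is the crucial licensing step: it lets me treat the parameters $(x,\bar y)$ as constants frozen at $(x^\star(y),y)$, and it is legitimate precisely because the two Jacobians agree at the root once $F(x^\star(y),y)=0$ is used. From here $\phi\coloneqq\psi_{x^\star(y),y}$ is a genuine fixed bijective change of variable (bijective because $R=F_1^{-1}$ is invertible and $Q=-F$ is bijective on $x$ by hypothesis), so Prop.~\ref{prop6} applies and gives $\Omega_1^\phi(x^\star,y)=D(y)+\Psi_1^\phi(x^\star,y)\,g_1(x^\star,y)$.

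Next I would dispatch the outer contribution. Since $g(x,y)=ax+m(y)$ is affine in $x$, both $g_{11}$ and $g_{21}$ vanish identically, so $D(y)=g_{21}(x^\star,y)+[\partial x^\star(y)]^\top g_{11}(x^\star,y)=0$. Consequently $C_y(\Omega^\psi_{\mathrm{loc}})=0$ will follow as soon as I establish $\Psi_1^\phi(x^\star,y)=0$, which is the heart of the argument.

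To compute $\Psi_1^\phi(x^\star,y)=\Psi_1(x^\star,y)+W^\phi+S^\phi+T^\phi$ I would use the separable expressions of Prop.~\ref{prop10} specialized to $R(x,y)=[F_1(x,y)]^{-1}$ and $Q(z,\bar y)=-F(z,\bar y)$. Two structural facts at the root do the work. First, $\psi_{x^\star,y}(z^\star,y)=x^\star$ reads $R\,Q(z^\star,y)=0$, and invertibility of $R$ forces $F(z^\star,y)=0$, hence $z^\star=x^\star$ by uniqueness of the root. Second, $\phi_1(z^\star,y)=R\,Q_1(z^\star,y)=F_1^{-1}(-F_1)=-I_{d_x}$, while $\phi_2(z^\star,y)=0$ since it carries the factor $Q(z^\star,y)=0$. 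The vanishing of $\phi_2$ immediately kills $S^\phi$. It then remains to check the two surviving cancellations: $W^\phi$ cancels the term $-F_{12}^\top F_1^{-1}$ of $\Psi_1$ (using $\phi_1=-I$ and $\phi_{12}=R_2 Q_1$ with $R_2=-F_1^{-1}F_{12}F_1^{-1}$), and $T^\phi$ cancels the term $(F_1^{-1}F_{11}F_1^{-1}F_2)^\top$ of $\Psi_1$ (using $\phi_1=-I$ and $\phi_{11}=R\,Q_{11}=-F_1^{-1}F_{11}$). Summing the three contributions gives $\Psi_1^\phi(x^\star,y)=0$, hence $\Omega_1^\phi(x^\star,y)=0$ and super-efficiency.

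The main obstacle I anticipate is this last step: the term-by-term cancellation is a genuine tensor-algebra bookkeeping exercise, because $F_{11}$ and $F_{12}$ are third-order objects and the transposes and inverses must be ordered consistently with the conventions of Prop.~\ref{prop6} and Prop.~\ref{prop10}. A clean sanity check is the scalar case of Prop.~\ref{prop7}, where $\Psi_1=-F_{12}/F_1$ is exactly cancelled by $W^\phi=+F_{12}/F_1$ while $S^\phi=T^\phi=0$; the multidimensional argument is the same cancellation carried through the matrix and tensor contractions, with the two identities $z^\star=x^\star$ and $\phi_1(z^\star,y)=-I_{d_x}$ collapsing the otherwise unwieldy separable formulas onto exactly $-\Psi_1(x^\star,y)$.
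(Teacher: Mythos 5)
Your proposal is correct and follows essentially the same route as the paper's proof: reduce to the frozen change of variable via Prop.~\ref{prop9}, establish $z^\star=x^\star$ and $Q(z^\star,y)=0$ from bijectivity of $F$, kill $S^\phi$ and $D$ outright, and verify that $W^\phi$ and $T^\phi$ cancel the two terms of $\Psi_1(x^\star,y)$ term by term. The only (harmless) cosmetic difference is that you make the identity $\phi_1(z^\star,y)=-I_{d_x}$ explicit to organize the cancellation, which the paper leaves implicit in its direct substitution into Prop.~\ref{prop10}.
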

\begin{proof}
    We need to examine $C_y(\Omega_{\text{\upshape loc}}^\psi)$, which is the same as $\Omega_{1}^{\psi_{x^\star(y), y}}(x^\star(y), y)$ from Prop.~\ref{prop9}.
    Therefore, we compute $Q_1$, $Q_{11}$, $R_2$ with $Q(z, y) = -F(z, y)$ and $R(x^\star(y), y) = [F_1(x^\star(y), y)]^{-1}$, we have:
    \begin{equation*}
    \begin{aligned}
        Q_1(z, y) &=  -F_1(z,y) \\
        Q_{11}(z, y) &= -F_{11}(z, y)\\
        R_2(x, y) &  = -\bigl([F_1(x^\star(y), y)]^{-1} [F_{12}(x^\star(y), y)]^\top [F_1(x^\star(y), y)]^{-1}\bigr)^\top\\
    \end{aligned}
    \end{equation*}
    Because $F$ is bijective in the assumption, given $x^\star(y)$ we can compute $z^\star(y)$:
    \begin{equation*}
        \begin{aligned}
            z^\star(y) = \psi_{x^\star(y), y}^{-1}(x^\star(y), y) =Q^{-1}(-[R(x^\star(y),y)]^{-1}(x^\star(y) - x^\star(y)), y)
            = F^{-1}(0, y) = x^\star(y).
        \end{aligned}
    \end{equation*}
    We thus have $Q(z^\star(y), y) = 0$.
    Using that$F_1(x^\star(y), y)$ is symmetric, and denoting $x^\star = x^\star(y)$ and $z^\star = z^\star(y)$, we have from Prop~\ref{prop10}:
    \begin{equation*}
        \begin{aligned}
            W^\phi(y) =& -[F_1(x^\star, y)]^{-1}[F_1(x^\star, y)]^\top [-F_1(z^\star, y)]^{-1} \\
            &[-F_1(z^\star, y)][-F_1(x^\star, y)]^{-1} [F_{12}(x^\star, y)]^\top [F_1(x^\star, y)]^{-1}F_1(x^\star, y)\\
            =& [F_{12}(x^\star, y)]^\top [F_1(x, y)]^{-1}\\
            S^\phi(y) =& 0\\
            T^\phi(y) =&\bigl( [F_1(x^\star, y)]^{-1}[F_1(x^\star, y)] [-F_1(z^\star, y)]^{-1}[-F_{11}(z^\star, y)]\\
            &[F_1(x^\star, y)]^{-1}F_1(x^\star, y)[-F_1(x^\star, y)]^{-1}F_1(x^\star, y) [F_1(x^\star, y)]^{-1} F_2(x^\star, y)\bigr)^\top\\
            =&-\bigl([F_1(z^\star, y)]^{-1} F_{11}(z^\star, y) [F_1(z^\star, y)]^{-1} F_2(x^\star, y) \bigr)^\top
        \end{aligned}
    \end{equation*}
    Because $z^\star = x^\star$, we have $\Psi_1^\phi(x^\star, y) = \Psi_1(x^\star, y) + W^\phi(y) + T^\phi(y) = 0$ with $\phi = \psi_{x^\star(y), y}$. $g(x, y) = ax + m(y)$ reads that $D(y) = 0$. In total, we have $\Omega_1^{\psi_{x^\star(y), y}}(x^\star(y), y) = 0$.
\end{proof}

% this might be a bit inconsistent. we should move it to g be affine and discuss omega_{loc}^\psi and a sign error on E^{R_2} = R_2 + F_1 F_21 F_1.
\begin{aproposition}
    Let $x^\star\coloneqq x^\star(y)$, $\psi_{x^\star, y} \coloneqq R(x^\star, y) Q(z, y) + x^\star$, and $z^\star$ satisfy $\psi_{x^\star, y}(z^\star, y) = x^\star$.
    Denoting $E^Q(y) \coloneqq Q(z^\star, y)  + F(x^\star, y)$,
    $E^{Q_1}(y) \coloneqq Q_1(z^\star, y) + F_1(x^\star, y)$,
    $E^{Q_{11}}(y) \coloneqq Q_{11}(z^\star, y) + F_{11}(x^\star, y)$,
    $E^R(y) \coloneqq R(x^\star, y) - [F_1(x^\star, y)]^{-1}$, 
    $E^{R_2}(y) \coloneqq R_2(x^\star, y) + [F_1(x^\star, y)]^{-1}F_{21}(x^\star, y)[F_1(x^\star, y)]^{-1}$.
    We have that 
    $
        C_y(\Psi^{\psi}_{\text{\upshape loc}}) = \mathcal{O}(\norm{E^Q}_{\text{\upshape op}},\norm{E^{Q_1}}_{\text{\upshape op}}, \norm{E^{Q_{11}}}_{\text{\upshape op}}, \norm{E^R}_{\text{\upshape op}}, \norm{E^{R_2}}_{\text{\upshape op}}).
    $
\end{aproposition}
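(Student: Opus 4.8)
The plan is to treat the statement as a first-order stability result around the Newton-type reparameterization of Prop.~\ref{prop11}, for which the efficiency constant is exactly zero. First I would invoke the $\Psi$-analogue of Prop.~\ref{prop9} (the same localization argument applies verbatim to $\Psi$) to replace $C_y(\Psi^{\psi}_{\text{loc}})$ by $\opnorm{\Psi_1^{\psi_{x^\star,y}}(x^\star,y)}$, so that only the map $\phi=\psi_{x^\star,y}$ evaluated at the root matters. Writing $\phi$ in the separable form of Prop.~\ref{prop10}, the decomposition $\Psi_1^\phi(x^\star,y)=\Psi_1(x^\star,y)+W^\phi(y)+S^\phi(y)+T^\phi(y)$ holds, with $W^\phi,S^\phi,T^\phi$ given explicitly as products of $R,Q_1,Q_{11},R_2$ and fixed derivatives of $F$. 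The localization condition $\psi_{x^\star,y}(z^\star,y)=x^\star$ together with invertibility of $R(x^\star,y)$ forces $Q(z^\star,y)=0$; since $F(x^\star,y)=0$ as well, this gives $E^Q(y)=0$, and because $S^\phi$ carries a factor $Q(z^\star,y)$ it also yields $S^\phi(y)=0$. Thus only $W^\phi$ (which depends on $R,Q_1,R_2$) and $T^\phi$ (which depends on $R,Q_1,Q_{11}$) remain to be controlled, and together they carry exactly the four nontrivial error quantities.

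Next I would substitute the error decomposition $R=F_1^{-1}+E^R$, $Q_1=-F_1+E^{Q_1}$, $Q_{11}=-F_{11}+E^{Q_{11}}$ and $R_2=-F_1^{-1}F_{21}F_1^{-1}+E^{R_2}$, all evaluated at the root, into $W^\phi$ and $T^\phi$. The key point is that, by Prop.~\ref{prop11}, inserting the Newton values (all $E^\bullet=0$) makes $\Psi_1+W^\phi+T^\phi$ vanish; hence $\Psi_1^\phi(x^\star,y)$ equals the difference between $W^\phi+T^\phi$ and its Newton counterpart, the common base term $\Psi_1$ cancelling exactly. To estimate this difference I would use the elementary product-perturbation bound: for $A=A_0+\Delta A$, $B=B_0+\Delta B$ one has $\opnorm{AB-A_0B_0}\le\opnorm{\Delta A}\opnorm{B_0}+\opnorm{A_0}\opnorm{\Delta B}+\opnorm{\Delta A}\opnorm{\Delta B}$, applied factorwise. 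The inverse factors $R^{-1}$ and $Q_1^{-1}$ are handled by the first-order Neumann expansion $(M_0+\Delta)^{-1}=M_0^{-1}-M_0^{-1}\Delta M_0^{-1}+\mathcal{O}(\opnorm{\Delta}^2)$, which shows their deviations from the Newton values are $\mathcal{O}(\opnorm{E^R})$ and $\mathcal{O}(\opnorm{E^{Q_1}})$ respectively. Collecting factorwise, every term of $W^\phi+T^\phi$ minus its Newton value is linear in one of $E^R,E^{Q_1},E^{Q_{11}},E^{R_2}$ times uniformly bounded Newton factors, plus products of errors that are strictly higher order; a triangle inequality then yields the claimed $\mathcal{O}(\opnorm{E^Q},\opnorm{E^{Q_1}},\opnorm{E^{Q_{11}}},\opnorm{E^R},\opnorm{E^{R_2}})$ bound.

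The main obstacle is purely the bookkeeping: $W^\phi$ and $T^\phi$ are products of up to seven matrix factors, several of which are inverses depending on the perturbed data, so one must verify that every first-order contribution collapses onto one of the listed error quantities and that the remaining factors stay uniformly bounded. This boundedness is guaranteed by the smoothness of $F$ together with the invertibility of $F_1(x^\star,y)$, $R(x^\star,y)$ and $Q_1(z^\star,y)$ at the root, which also ensures the $\mathcal{O}$-constants are finite. Beyond the exact vanishing at the Newton point supplied by Prop.~\ref{prop11} there is no cross-cancellation to exploit, so the argument is otherwise a routine multilinear expansion.
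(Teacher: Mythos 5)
Your proposal is correct and follows essentially the same route as the paper's proof: localize to $\Psi_1^{\psi_{x^\star,y}}(x^\star,y)$, use the decomposition $\Psi_1^\phi=\Psi_1+W^\phi+S^\phi+T^\phi$, substitute the error definitions $R=F_1^{-1}+E^R$, $Q_1=-F_1+E^{Q_1}$, etc., and observe that the zeroth-order (Newton) terms cancel against $\Psi_1$ by the computation underlying Prop.~\ref{prop11}, leaving only terms polynomial in the $E^{\bullet}$. Your additional remarks that the localization constraint forces $Q(z^\star,y)=0$, hence $E^Q=0$ and $S^\phi=0$, are consistent with the paper (which instead expands $S^\phi$ ``by the same logic'') and only streamline the bookkeeping.
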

\begin{proof}
    We need to compute $C_y(\Psi_{\text{loc}}^\psi)$, which is the same as $C_y(\Psi^{\psi_{x^\star, y}})$.
    Because $\Psi_1^{\psi_{x^\star(y), y}}(x^\star, y) =\Psi_1(x^\star, y) + W^{\psi_{x^\star(y), y}}(y) + S^{\psi_{x^\star(y), y}}(y) + T^{\psi_{x^\star(y), y}}(y)$ from Prop.~\ref{prop6} and $W^{\psi_{x^\star(y), y}}(y) = [F_{12}(x^\star, y)]^\top[F_1(x, y)]^{-1}$ when $E^{R}(y) = E^{Q_1}(y) = E^{R_2}(y) = 0$ from the proof of Prop.~\ref{prop11}.
    We first analyze the error on $W^{\psi_{x^\star(y), y}}$:
    \begin{equation*}
        \begin{aligned}
             W^{\psi_{x^\star(y), y)}}(y) =& - [F_1(x^\star, y)]^{-1} [R(x, y)]^{-\top}[Q_1(z^\star, y)]^{-\top} Q_1(z^\star, y) [R_2(x^\star, y)]^{\top}F_1(x^\star, y)\\
             = &[F_1(x^\star, y)]^{-1}(E^R(y) + [F_1(x^\star, y)]^{-1})^{-\top}(E^{Q_1}(y) - F_1(x^\star, y))^{-\top}(E^{Q_1}(y) - F_1(x^\star, y))\\
             &(E^{R_2}(y) - [F_1(x^\star, y)]^{-1}F_{21}(x^\star, y)[F_1(x^\star, y)]^{-1})^\top F_1(x^\star, y)\\
             = & [F_{12}(x^\star, y)]^\top [F_1(x^\star, y)]^{-1}\\
             &+(E^{Q_1}(y) - F_1(x^\star, y))^{-\top}(E^{Q_1}(y) - F_1(x^\star, y))[E^{R_2}(y)]^\top F_1(x^\star, y)\\
             &+\bigl([E^{Q_1}(y)]^{-\top}[E^{Q_1}(y)] + [-F_1(x^\star, y)]^{-\top}[E^{Q_1}(y)]\\
             &+ [E^{Q_1}(y)]^{-\top}[-F_1(x^\star, y)]\bigr)\bigl(-F_{21}(x^\star, y)[F_1(x^\star, y)]^{-1}\bigr)^\top\\
             &+[F_1(x^\star, y)]^{-1}[E^R(y)]^{-\top} (E^{Q_1}(y) - F_1(x^\star, y))^{-\top}(E^{Q_1}(y) - F_1(x^\star, y))(-F_{21}(x^\star, y)[F_1(x^\star, y)]^{-1})^\top\\
             &+ [F_1(x^\star, y)]^{-1} [E^R(y)]^{-\top}(E^{Q_1}(y) - F_1(x^\star, y))^{-\top}(E^{Q_1}(y) - F_1(x^\star, y))[-E^{R_2}(y)]^\top F_1(x^\star, y).
        \end{aligned}
    \end{equation*}
    Though it is complicated, the last four lines in the last equation are polynomial on $E^{Q_1}$, $E^{R_2}$ and $E^R$.
    Therefore, we have $\opnorm{W^{\psi_{x^\star(y), y}}(y)} = \opnorm{[F_{12}(x^\star, y)]^\top [F_1(x^\star, y)]^{-1}} + \mathcal{O}(\opnorm{E^{Q_1}}, \opnorm{E^{R_2}}, \opnorm{E^R})$.
    The same logic can be applied to $S^{\psi_{x^\star(y), y}}$ and $T^{\psi_{x^\star(y), y}}$. Putting them all together, we finish the proof.
\end{proof}

\begin{aproposition}[Comparison of the two methods]\label{prop13}
    Let $\phi$ be smooth and bijective, one has% and $\Jomg[P] \xy$ be defined as in Eq.~\eqref{eq:JomgP}
    \begin{align}
        [C_y(\Omega^\phi)]^2 - [C_y(\Omega^P)]^2 &\geq \langle U_+(y) v_P(y), U_-(y) v_P(y) \rangle \\
        [C_y(\Omega^P)]^2 - [C_y(\Omega^\phi)]^2 &\geq \langle V_+(y) v_\phi(y), V_-(y) v_\phi(y) \rangle
    \end{align}
    where, for $E^P$ as in Prop.~\ref{prop4}, $\Psi_1^\phi$ and $D$ as in Prop.~\ref{prop6}, 
    \begin{align*}
        U_\pm(y) &\coloneqq D(y) \pm D(y) E^P(x^\star(y), y) + \Psi_1^{\phi}(x^\star(y), y)g_1(x^\star(y), y) \pm \Psi_1(x^\star(y), y) g_1(x^\star(y), y) E^P(x^\star(y), y), \\
        V_\pm(y) &\coloneqq D(y) E^P(x^\star(y), y) \pm D(y) + \Psi_1(x^\star(y), y) g_1(x^\star(y), y) E^P(x^\star(y), y) \pm \Psi_1^{\phi}(x^\star(y), y)g_1(x^\star(y), y), \\ 
        v_{\omega}(y) &\coloneqq \argmax_{\lnorm{u}=1} \lnorm{\Jomg[\omega]{\xy[\star][y]}u}
        \text{ for } \omega \in \{P,\phi\}, 
    \end{align*}
%    Similarly, choosing $v\coloneqq \argmax_{\lnorm{u}=1} \lnorm{\Jomg[\phi]{\xy[\star][y]}}^2$, we have:
\end{aproposition}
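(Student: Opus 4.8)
The plan is to reduce both inequalities to a single polarization identity, once the matrices appearing in $U_\pm$ and $V_\pm$ are identified with the Jacobians of the two estimators at the root $x^\star(y)$. First I would assemble the closed forms of these Jacobians at $(x^\star(y),y)$. By Prop.~\ref{prop2}, recognizing that the $\Psi(x^\star(y),y)\,g_{11}$ term there coincides with the $[\partial x^\star(y)]^\top g_{11}$ summand of $D(y)$ (Prop.~\ref{prop6}), the vanilla Jacobian is $\Omega_1(x^\star(y),y) = D(y) + \Psi_1(x^\star(y),y)\,g_1(x^\star(y),y)$. By Prop.~\ref{prop6} the reparameterized Jacobian is $\Omega_1^\phi(x^\star(y),y) = D(y) + \Psi_1^\phi(x^\star(y),y)\,g_1(x^\star(y),y)$, and by the factorization in Prop.~\ref{prop4}, evaluated at the root where $\tilde x = x^\star(y)$, the preconditioned one is $\Omega_1^P(x^\star(y),y) = \Omega_1(x^\star(y),y)\,E^P(x^\star(y),y) = \big(D(y)+\Psi_1(x^\star(y),y)g_1(x^\star(y),y)\big) E^P(x^\star(y),y)$. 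Abbreviating $A := \Omega_1^\phi(x^\star(y),y)$ and $B := \Omega_1^P(x^\star(y),y)$, the only bookkeeping step is to verify directly from the stated definitions that $U_\pm(y) = A \pm B$ and $V_\pm(y) = B \pm A$; this amounts to regrouping the four summands on each line and using $D E^P + \Psi_1 g_1 E^P = (D+\Psi_1 g_1) E^P = B$.

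With this identification the first inequality follows from a short computation. I would expand $\langle U_+(y) v_P(y), U_-(y) v_P(y)\rangle = \langle (A+B)v_P, (A-B)v_P\rangle$; since the inner product is real and hence symmetric, the two cross terms $\langle A v_P, B v_P\rangle$ and $\langle B v_P, A v_P\rangle$ are equal and cancel, leaving $\langle U_+ v_P, U_- v_P\rangle = \norm{A v_P}^2 - \norm{B v_P}^2$. By the extremal definition of $v_P(y)$ as the unit vector maximizing $u \mapsto \norm{\Omega_1^P(x^\star(y),y)\, u}$, one has $\norm{B v_P} = \opnorm{B} = C_y(\Omega^P)$, whereas $\norm{A v_P} \le \opnorm{A} = C_y(\Omega^\phi)$ because $v_P$ need not be optimal for $A$. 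Substituting yields $\langle U_+ v_P, U_- v_P\rangle \le [C_y(\Omega^\phi)]^2 - [C_y(\Omega^P)]^2$, which is the first claim.

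The second inequality is entirely symmetric: with $V_\pm = B \pm A$ the same cancellation gives $\langle V_+ v_\phi, V_- v_\phi\rangle = \norm{B v_\phi}^2 - \norm{A v_\phi}^2$, and now $v_\phi(y)$ is extremal for $A$, so $\norm{A v_\phi} = C_y(\Omega^\phi)$ while $\norm{B v_\phi} \le C_y(\Omega^P)$, giving $[C_y(\Omega^P)]^2 - [C_y(\Omega^\phi)]^2 \ge \langle V_+ v_\phi, V_- v_\phi\rangle$. I expect no genuine analytic obstacle here: the sole point requiring care is the algebraic matching $U_\pm = A \pm B$ and $V_\pm = B \pm A$, since the summands are deliberately split in the statement and must be recombined through the preconditioning factorization of Prop.~\ref{prop4}. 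After that, the argument is just the polarization identity $\langle (A+B)v,(A-B)v\rangle = \norm{Av}^2 - \norm{Bv}^2$ together with the fact that the operator norm of $B$ (resp.\ $A$) is attained at $v_P$ (resp.\ $v_\phi$) while it only upper-bounds the other factor.
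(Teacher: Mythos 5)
Your proof is correct and takes essentially the same route as the paper's: identify $\Omega_1^\phi(x^\star(y),y)=D+\Psi_1^\phi g_1$ and $\Omega_1^P(x^\star(y),y)=(D+\Psi_1 g_1)E^P$ so that $U_\pm=A\pm B$ and $V_\pm=B\pm A$, then lower-bound the difference of squared operator norms by evaluating at the maximizer $v_P$ (resp.\ $v_\phi$) and applying the polarization identity $\langle (A+B)v,(A-B)v\rangle=\norm{Av}^2-\norm{Bv}^2$. You in fact spell out the extremality and cancellation steps more explicitly than the paper does.
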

\begin{proof}
    We derive $[C_y(\Omega^\phi)]^2 - [C_y(\Omega^P)]^2$ and the similar idea can be applied to derive $[C_y(\Omega^P)]^2 - [C_y(\Omega^\phi)]^2$.
    \begin{equation*}
        \begin{aligned}
        [C_y(\Omega^\phi)]^2 - [C_y(\Omega^P)]^2 &= \opnorm{\Omega^\phi_1(x^\star(y), y)}^2 - \opnorm{\Omega^P(x^\star(y), y)}^2 \\
        =&\opnorm{D(y) + \Psi_1^\phi(x^\star(y), y) g_1(x^\star(y), y)}^2 - \opnorm{\Omega_1(x^\star(y), y) E^P(x^\star(y), y)}^2\\
        =&\opnorm{D(y) + \Psi_1^\phi(x^\star(y), y)g_1(x^\star(y), y)}^2 - \opnorm{(D(y) + \Psi_1(x^\star(y), y) g_1(x^\star(y), y)) E^P(x^\star(y), y)}^2\\
        \geq &\langle \bigl(D(y) + \Psi_1^\phi(x^\star(y), y)g_1(x^\star(y), y) - (D(y) + \Psi_1(x^\star(y), y) g_1(x^\star(y), y)) E^P(x^\star(y), y)\bigr)v(y),\\
        &\bigl(D(y) + \Psi_1^\phi(x^\star(y), y)g_1(x^\star(y), y) + (D(y) + \Psi_1(x^\star(y), y) g_1(x^\star(y), y)) E^P(x^\star(y), y)\bigr)v(y)\rangle
        \end{aligned}
    \end{equation*}
    by choosing $v(y) = \argmax_{\norm{u}=1}\norm{\Omega^{P}(x^\star(y), y) u}$.
\end{proof}

% here is a typo, the error should be P and F_1. not F_1^{-1}.
\begin{aproposition}
For $\delta \coloneqq \norm{P\xy[\star][y] - F_1\xy[\star][y]}_\infty$, we have
$$
    [C_y(\Omega^\phi)]^2 - [C_y(\Omega^P)]^2 \geq \norm{(D(y)+ \Psi^{\phi}_1(x^\star(y), y)g_1(x^\star(y), y)) v_P}^2 + o(\delta),
$$
with $v_P$ as defined in Prop.~\ref{prop13}
\end{aproposition}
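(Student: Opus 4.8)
The plan is to start from the first comparison inequality of Prop.~\ref{prop13}, namely
\[
[C_y(\Omega^\phi)]^2 - [C_y(\Omega^P)]^2 \geq \langle U_+(y) v_P(y), U_-(y) v_P(y)\rangle,
\]
and to expand its right-hand side asymptotically as the preconditioning quality $\delta \to 0$; all quantities below are understood to be evaluated at $\xy[\star][y]$. The first step is to quantify how $E^P$ vanishes with $\delta$. Using the simplified expression $E^P\xy[\star][y] = I_{d_x} - [P\xy[\star][y]]^{-1} F_1\xy[\star][y]$ recorded after Prop.~\ref{prop4} and factoring, one has $E^P = [P]^{-1}(P - F_1)$, so that $\opnorm{E^P} \leq \opnorm{[P]^{-1}}\,\delta = \mathcal{O}(\delta)$, where the factor $\opnorm{[P]^{-1}}$ stays bounded since $P \to F_1$, which is invertible.

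Next I would write $U_\pm = A \pm B$, where $A \coloneqq D + \Psi_1^\phi g_1$ is the $\delta$-independent part and $B \coloneqq (D + \Psi_1 g_1) E^P = \mathcal{O}(\delta)$ is the preconditioning correction. Expanding the Euclidean inner product gives
\[
\langle (A+B) v_P, (A-B) v_P\rangle = \norm{A v_P}^2 + \langle B v_P, A v_P\rangle - \langle A v_P, B v_P\rangle - \norm{B v_P}^2.
\]
The two cross terms cancel because the real inner product is symmetric, leaving exactly $\norm{A v_P}^2 - \norm{B v_P}^2$. Since $\norm{B v_P} \leq \opnorm{D + \Psi_1 g_1}\,\opnorm{E^P}\,\norm{v_P} = \mathcal{O}(\delta)$, we get $\norm{B v_P}^2 = \mathcal{O}(\delta^2) = o(\delta)$, and hence the right-hand side equals $\norm{(D + \Psi_1^\phi g_1) v_P}^2 + o(\delta)$, which is precisely the claimed bound.

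The only genuinely delicate point, which I expect to be the crux, is the cancellation of the order-$\delta$ cross terms: a crude estimate $\langle U_+ v_P, U_- v_P\rangle = \norm{A v_P}^2 + \mathcal{O}(\delta)$ would only yield an $O(\delta)$ remainder and would fall short of the sharper $o(\delta)$ statement. What makes the argument work is the antisymmetric pairing $U_- = A - B$ against $U_+ = A + B$, so that the linear-in-$\delta$ contributions enter only through $\pm\langle A v_P, B v_P\rangle$ and annihilate, pushing the error to second order in $\delta$. As a consistency check on the Prop.~\ref{prop13} route, I would note that the same conclusion follows directly from the operator-norm definitions: since $v_P$ realizes the operator norm of $\Jomg[P]{\xy[\star][y]} = (D + \Psi_1 g_1) E^P$, one has $[C_y(\Omega^P)]^2 = \norm{(D + \Psi_1 g_1) E^P v_P}^2 = \mathcal{O}(\delta^2)$, while $[C_y(\Omega^\phi)]^2 \geq \norm{(D + \Psi_1^\phi g_1) v_P}^2$, and subtracting reproduces the two terms immediately.
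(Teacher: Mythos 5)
Your proposal is correct and follows essentially the same route as the paper: both start from the first inequality of Prop.~\ref{prop13}, reduce $\langle U_+ v_P, U_- v_P\rangle$ to $\|(D+\Psi_1^\phi g_1)v_P\|^2 - \|(D+\Psi_1 g_1)E^P v_P\|^2$, and then use $E^P(x^\star(y),y) = [P]^{-1}(P-F_1) = \mathcal{O}(\delta)$ to absorb the subtracted term into $\mathcal{O}(\delta^2) = o(\delta)$. Your explicit justification of the cross-term cancellation and the closing remark that the conclusion also follows directly from the operator-norm definitions are welcome clarifications, but not a different argument.
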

\begin{proof}
    Recall that $E^P(x^\star(y), y) = I_{d_x} - [P(x^\star(y), y)]^{-1}F_1(x^\star(y), y) = [P(x^\star(y), y)]^{-1}(P(x^\star(y), y) - F_1(x^\star(y), y))$. Continuing deriving from Prop.~\ref{prop13}, we have:
    \begin{equation*}
    \begin{aligned}
    [C_y(\Omega^\phi)]^2 - [C_y(\Omega^P)]^2 \geq& \norm{(D(y) + \Psi_1^\phi(x^\star(y), y) g_1(x^\star(y), y))v_P}^2 - \norm{(D(y) + \Psi_1(x^\star(y), y) g_1(x^\star(y), y)) E^P(x^\star(y), y)v_P}^2\\
    \geq&\norm{(D(y) + \Psi_1^\phi(x^\star(y), y) g_1(x^\star(y), y))v_P}^2\\
    -& \opnorm{D(y) + \Psi_1(x^\star(y), y) g_1(x^\star(y), y)}^2 \opnorm{E^P(x^\star(y), y)}^2\norm{v_P}^2 \\
    \geq &\norm{(D(y) + \Psi_1^\phi(x^\star(y), y) g_1(x^\star(y), y))v_P}^2\\
    -& \opnorm{D(y) + \Psi_1(x^\star(y), y) g_1(x^\star(y), y)}^2 \opnorm{P(x^\star(y), y)}^2\norm{P(x^\star(y), y) - F_1(x^\star(y), y)}^2_\infty\norm{v_P}^2\\
     =& \norm{(D(y) + \Psi_1^\phi(x^\star(y), y) g_1(x^\star(y), y))v_P}^2 + o(\delta).
    \end{aligned}
    \end{equation*}
\end{proof}

% here we didn't define g^{(2)}.
\begin{aproposition}
For $\sigma \coloneqq \norm{g_1(x^\star(y), y)}_\infty C_y(\Psi^\psi_{\text{\upshape loc}})$
% $\sigma \coloneqq \norm{D+E^\psi g_1}$,
\begin{equation*}
    [C_y(\Omega^P)]^2 - [C_y(\Omega^\psi_{\text{\upshape loc}})]^2
     \geq\norm{(D(y) + \Psi_1(x^\star(y), y) g_1(x^\star(y), y)) E^P(x^\star(y), y)v_\phi}^2 - \norm{D(y)v_\phi}^2 +o(\sigma),
\end{equation*}
with $v_\phi$ as defined in Prop.~\ref{prop13} with $\phi = \Psi_{\text{\upshape loc}}^\psi$.
\end{aproposition}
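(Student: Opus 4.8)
The plan is to deduce the inequality from the general comparison result of Prop.~\ref{prop13}, specialized to the localized separable reparameterization. First I would note, using Prop.~\ref{prop9}, that $C_y(\Omega^\psi_{\text{loc}}) = C_y(\Omega^{\psi_{x^\star(y),y}})$, so that the efficiency constant of the localized estimator equals that of the fixed reparameterization $\phi = \psi_{x^\star(y),y}$. This is precisely what lets me invoke the second inequality of Prop.~\ref{prop13} with this $\phi$, giving
\[
    [C_y(\Omega^P)]^2 - [C_y(\Omega^\psi_{\text{loc}})]^2 \;\geq\; \langle V_+ v_\phi,\, V_- v_\phi\rangle,
\]
where all matrices are evaluated at $(x^\star(y),y)$.

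Next I would unpack $V_\pm$. Setting $A := (D + \Psi_1 g_1)E^P$ and $B := D + \Psi_1^\phi g_1$, the definitions in Prop.~\ref{prop13} read precisely $V_+ = A + B$ and $V_- = A - B$. Because the Euclidean inner product is real and symmetric, the two cross terms cancel and $\langle (A+B)v_\phi,(A-B)v_\phi\rangle = \norm{A v_\phi}^2 - \norm{B v_\phi}^2$. Hence
\[
    [C_y(\Omega^P)]^2 - [C_y(\Omega^\psi_{\text{loc}})]^2 \;\geq\; \norm{(D + \Psi_1 g_1)E^P v_\phi}^2 - \norm{(D + \Psi_1^\phi g_1)v_\phi}^2 .
\]
The first term already coincides with the one in the target, so the whole argument reduces to replacing $\norm{(D + \Psi_1^\phi g_1)v_\phi}^2$ by $\norm{D v_\phi}^2$ up to an error of order $o(\sigma)$.

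The key quantitative input is that the discrepancy $B - D = \Psi_1^\phi g_1$ is exactly what is measured by $\sigma$. Indeed $C_y(\Psi^\psi_{\text{loc}}) = \opnorm{\Psi_1^\phi(x^\star(y),y)}$, so by submultiplicativity and $\norm{v_\phi}=1$,
\[
    \norm{\Psi_1^\phi g_1 v_\phi} \;\leq\; \opnorm{\Psi_1^\phi}\,\norm{g_1}_\infty \;=\; \norm{g_1}_\infty\, C_y(\Psi^\psi_{\text{loc}}) \;=\; \sigma .
\]
Expanding $\norm{(D + \Psi_1^\phi g_1)v_\phi}^2 = \norm{D v_\phi}^2 + 2\langle D v_\phi, \Psi_1^\phi g_1 v_\phi\rangle + \norm{\Psi_1^\phi g_1 v_\phi}^2$ and substituting into the bound above yields the claimed inequality, once the last two terms are shown to be of lower order.

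The main obstacle is precisely the treatment of those two correction terms. The quadratic term is harmless, since $\norm{\Psi_1^\phi g_1 v_\phi}^2 \leq \sigma^2 = o(\sigma)$. The delicate one is the cross term $2\langle D v_\phi, \Psi_1^\phi g_1 v_\phi\rangle$, which Cauchy--Schwarz only bounds by $2\norm{D v_\phi}\,\sigma = O(\sigma)$ in general; it degrades to $o(\sigma)$ exactly in the regime that motivates this proposition, namely when the outer problem is close to affine, so that $D = g_{21} + [\partial x^\star]^\top g_{11}$ is small and vanishes identically for affine $g$ (in which case the cross term is exactly zero and the estimate is tight). I would therefore carry the cross term explicitly and absorb it into the error under the near-affine assumption, presenting the result as an asymptotic statement valid as $\sigma \to 0$ with $D$ controlled; justifying the passage from $O(\sigma)$ to $o(\sigma)$ is where the care lies.
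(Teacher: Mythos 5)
Your proposal follows essentially the same route as the paper's own proof: invoke the second inequality of Prop.~\ref{prop13} with $\phi=\psi_{x^\star(y),y}$, recognize $\langle V_+v_\phi,V_-v_\phi\rangle=\norm{(D+\Psi_1 g_1)E^P v_\phi}^2-\norm{(D+\Psi_1^\phi g_1)v_\phi}^2$, and then peel off the $\Psi_1^\phi g_1$ contribution using $\norm{\Psi_1^\phi g_1 v_\phi}\le\norm{g_1}_\infty C_y(\Psi^\psi_{\text{loc}})=\sigma$ so that its square is $o(\sigma)$. The one point where you diverge is instructive: you carry the cross term $2\langle Dv_\phi,\Psi_1^\phi g_1 v_\phi\rangle$ explicitly and observe that Cauchy--Schwarz only gives $O(\sigma)$, not $o(\sigma)$. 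The paper's proof passes directly from $-\norm{(D+\Psi_1^\phi g_1)v_\phi}^2$ to $-\norm{Dv_\phi}^2-\norm{\Psi_1^\phi g_1 v_\phi}^2$, which amounts to dropping that cross term without justification (the step is valid only when $\langle Dv_\phi,\Psi_1^\phi g_1 v_\phi\rangle\le 0$). So your concern is well founded: as stated, the bound holds up to an $O(\sigma)$ rather than $o(\sigma)$ remainder unless one additionally controls $D$, e.g.\ in the near-affine regime for $g$ that motivates the proposition (where $D\to 0$ and the cross term genuinely vanishes). Your proposed fix --- presenting the result with the cross term absorbed under an explicit smallness assumption on $D$ --- is the honest repair; the paper's argument needs the same caveat.
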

\begin{proof}
    Continue from Prop.~\ref{prop13}, we have:
    \begin{equation*}
        \begin{aligned}
            [C_y(\Omega^P)]^2 - [C_y(\Omega^\psi_{\text{\upshape loc}})]^2 \geq& \norm{(D(y) + \Psi_1(x^\star(y), y) g_1(x^\star(y), y)) E^P(x^\star(y), y)v_\phi}^2\\
            -& \norm{(D(y) + \Psi_{1, \text{loc}}^\psi(x^\star(y), y) g_1(x^\star(y), y))v_\phi}^2\\
            \geq& \norm{(D(y) + \Psi_1(x^\star(y), y) g_1(x^\star(y), y)) E^P(x^\star(y), y)v_\phi}^2 - \norm{D(y)v_\phi}^2 \\
            -& \norm{\Psi_{1, \text{loc}}^\psi(x^\star(y), y) g_1(x^\star(y), y)) v_\phi}^2\\
            \geq &\norm{(D(y) + \Psi_1(x^\star(y), y) g_1(x^\star(y), y)) E^P(x^\star(y), y)v_\phi}^2 - \norm{D(y)v_\phi}^2 \\
            -& \norm{g_1(x^\star(y), y)}^2_\infty [C_y(\Psi_{\text{loc}}^\psi)]^2\norm{v_\phi}^2\\
            =& \norm{(D(y) + \Psi_1(x^\star(y), y) g_1(x^\star(y), y)) E^P(x^\star(y), y)v_\phi}^2 - \norm{D(y)v_\phi}^2 +o(\sigma).
        \end{aligned}
    \end{equation*}
\end{proof}
\vfill
\end{document}